\newcommand{\oclub}{Off-ClusBand}
\newcommand{\argmax}{\mathop{\mathrm{argmax}}\limits}
\newtheorem{theorem}{Theorem}[section]
\newtheorem{lemma}[theorem]{Lemma}
\newtheorem{corollary}[theorem]{Corollary}
\theoremstyle{definition}
\newtheorem{definition}[theorem]{Definition}
\newtheorem{assumption}[theorem]{Assumption}
\theoremstyle{remark}
\newtheorem{remark}[theorem]{Remark}
\title{Offline Clustering of Linear Bandits:
\\ The Power of Clusters under Limited Data}
\author{
\textbf{Jingyuan Liu}$^1$\thanks{Equal contribution.}
 \quad
\textbf{Zeyu Zhang}$^2$\footnotemark[1] \quad
\textbf{Xuchuang Wang}$^3$ \quad
\textbf{Xutong Liu}$^4$\thanks{Xutong Liu is the corresponding author.}
\\
\textbf{John C.S. Lui}$^2$ \quad
\textbf{Mohammad Hajiesmaili}$^3$ \quad
\textbf{Carlee Joe-Wong}$^4$ \\
$^1$Nanjing University \quad
$^2$The Chinese University of Hong Kong
\\$^3$University of Massachusetts, Amherst \quad
$^4$Carnegie Mellon University \\
\texttt{jingyuanliu@smail.nju.edu.cn}
\\\texttt{\{zyzhang,\ cslui\}@cse.cuhk.edu.hk}
\\\texttt{\{xuchuangwang,\ hajiesmaili\}@cs.umass.edu}, \\\texttt{\{xutongl,\ cjoewong\}@andrew.cmu.edu}
}
\begin{document}

\allowdisplaybreaks

\maketitle

\begin{abstract}\label{section_abstract}
Contextual multi-armed bandit is a fundamental learning framework for making a sequence of decisions, e.g., advertising recommendations for a sequence of arriving users.
Recent works have shown that clustering these users based on the similarity of their learned preferences can accelerate the learning.  
However, prior work has primarily focused on the online setting, which requires continually collecting user data, ignoring the offline data widely available in many applications.
To tackle these limitations, we study the offline clustering of bandits (\oclub{}) problem, which studies how to use the offline dataset to learn cluster properties and improve decision-making. 
The key challenge in \oclub{} arises from data insufficiency for users: unlike the online case where we continually learn from online data, in the offline case, we have a fixed, limited dataset to work from and thus must determine whether we have enough data to confidently cluster users together.
To address this challenge, we propose two algorithms: Off-C$^2$LUB, which we show analytically and experimentally outperforms existing methods under limited offline user data, and Off-CLUB, which may incur bias when data is sparse but performs well and nearly matches the lower bound when data is sufficient. We experimentally validate these results on both real and synthetic datasets.
\end{abstract}

\section{Introduction}\label{section_introduction}
Contextual bandits~\citep{chu2011contextual, agrawal2016linear} extend the fundamental multi-armed bandit (MAB) framework~\citep{robbins1952some, lai1985asymptotically, auer2002finite} for optimizing online decision-making to settings where rewards depend on given contexts, often through a linear function. Recent works have proposed to accelerate the learning in contextual MAB by taking advantage of the \textit{clustered} structure in many real-world applications, e.g., social networking~\citep{delporte2013socially, bitaghsir2019multi} and advertisement recommendation~\citep{yan2022dynamic, aramayo2023multiarmed}. This framework assumes that a given set of users arrives at the system in sequence; users generally arrive multiple times within the sequence (e.g., users visiting a website multiple times and being shown an ad at each visit). These users can be partitioned into clusters that share the same or similar preferences.
The learning agent adaptively clusters users into groups based on their observed responses to the actions taken, leveraging the preference similarities within these groups to improve its decisions.
This online clustering approach enables more efficient and personalized decision-making, which has been extensively studied across diverse settings~\citep{gentile2014online, li2019improved, ban2024meta}. Detailed related works can be found in \Cref{app+_related_works}.

Despite the significant progress in clustering-based bandit frameworks, prior work has largely overlooked scenarios where \textit{offline data} plays a crucial role in learning. For example, advertisement platforms often accumulate extensive historical data on the effectiveness of given ads across different users. This data can be used to pre-compute user clusters, enabling more informed and efficient future recommendations. 
MAB approaches have also been applied to choosing medical treatments~\citep{komorowski2018artificial, zhou2019tumor}, for which patients with similar physiological profiles often exhibit comparable responses. However, ethical and practical constraints make it infeasible to directly experiment with treatments on patients. Instead, learning must be conducted using preexisting historical treatment data. To address this gap, we introduce, to the best of our knowledge, the \textit{first algorithms for offline clustered bandits}. Our approach leverages offline data to identify clusters and optimize decision-making in settings where online exploration is constrained or infeasible, paving the way for broader applications of MAB-based decision-making.

Extending bandit clustering algorithms from the online to the offline setting poses a \textbf{fundamentally new challenge}: unlike in the online setting, where data continually arrives during algorithm execution, the offline setting relies on a \textit{fixed offline dataset}. The continual arrival of new data in the online setting allows the algorithm to iteratively refine its understanding of the clustering structure; indeed, over $T$ rounds, only \(O(\log(T))\) rounds are needed to accurately identify clusters. 
However, in the offline setting, the number of samples per user is inherently limited. 
For instance, some users may have sparse historical data due to infrequent activity, or data on certain treatments or actions may be scarce. 
This limitation creates significant challenges in accurately estimating the preferences of users with limited data and thus in clustering users with similar preferences. 
We thus develop criteria to confidently identify similar users, and then account for potential clustering errors---caused by limited or noisy data---in algorithm design and analysis. 
Unlike standard offline statistical learning~\citep{zhang2014confidence, cai2017confidence, duan2023adaptive}, which focuses on parameter estimation, our setting also requires \emph{structural decisions}, including grouping users and action selections. Moreover, multi-task statistical learning~\citep{duan2023adaptive} considers related tasks, but it does not address the challenge of sparse and different amounts of heterogeneous data among users. In contrast, our algorithms leverage these differences to distinguish users and guide clustering.

To address this challenge, we introduce the \textit{Offline Clustering of Bandits} (\oclub{}) problem. In \oclub{}, there are \(U\) users represented by the set \(\mathcal{U}\), each associated with a preference vector. The users are partitioned into \(J\) clusters, with all users in a cluster sharing the same preference vector~\citep{gentile2014online, li2018online, li2019improved, wang2025online, li2025demystifying}. However, the partition, the number of clusters, and the preference vectors themselves are unknown to the algorithm.
An input dataset \(\mathcal{D}\) is provided, containing historical samples of the actions taken and rewards received for each user, 
assumed to be a linear function of user actions and preference vectors. The objective is to identify users with similar preferences to aggregate their offline data, and design an algorithm that minimizes the suboptimality gap---the difference between the reward achieved by the algorithm and the optimal reward achievable in hindsight---for any given test user \(u_{\text{test}} \in \mathcal{U}\) and an arbitrary action set \(\mathcal{A}_{\text{test}}\), which may not have been observed in the offline dataset.

\begin{wraptable}{r}{0.61\textwidth}
\vskip -0.1in
\caption{Recommended Algorithms in Different Scenarios
($\gamma$ is the minimum gap between users in different clusters).}\label{table: recommended_algorithms}
    \centering
    \small 
    \renewcommand{\arraystretch}{1} 
    \setlength{\tabcolsep}{4pt} 
    \begin{tabular}{|>{\centering\arraybackslash}m{2.3cm}|
                    >{\centering\arraybackslash}m{2.5cm}|
                    >{\centering\arraybackslash}m{2.5cm}|}
        \hline
        \diagbox[dir=SE]{\textbf{Dataset}}{\boldmath$\gamma$} 
          & \raisebox{-.5\height}{\textbf{Known}} 
          & \raisebox{-.5\height}{\textbf{Unknown}} \\ \hline
        \textbf{Sufficient}   & Equivalent   & Off-CLUB \\ \hline
        \textbf{Insufficient} & Off-C$^2$LUB & Off-C$^2$LUB \\ \hline
    \end{tabular}
    \vskip -0.1in
\end{wraptable}

\textbf{Contributions.} (i) In Section \ref{section_setting}, we formalize the \oclub{} problem. We then design the Off-C$^2$LUB algorithm (Algorithm \ref{al2}) in Section \ref{section_insufficient_data}, which addresses the challenge of inaccurate clustering due to limited user data by introducing a parameter \(\hat{\gamma}\) that thresholds similar users.
In \Cref{th2}, we upper bound Off-C$^2$LUB's suboptimality gap, accounting for two types of error. The \textit{noise} decreases as more users' samples are aggregated, and this reduction accelerates with an increase in \(\hat{\gamma}\). Conversely, the \textit{bias}, which comes from aggregating data from users with distinct preferences, may increase with \(\hat{\gamma}\). By selecting \(\hat{\gamma} \leq \gamma\), where $\gamma$ is the minimum gap between any two users from different clusters, the algorithm incurs zero bias but higher noise.
We analyze the influence of \(\hat\gamma\) and provide policies for selecting it under different scenarios, nearly matching the lower bound of the suboptimality gap that we derive in Section \ref{section_lower_bound}.

(ii) We then present the Off-CLUB algorithm (Algorithm \ref{al1}) in Section \ref{section_sufficient_data}, which does not require $\hat{\gamma}$.
Assuming sufficient user data for accurate clustering, we upper bound Off-CLUB's suboptimality in Theorem \ref{th1}. 
However, Off-CLUB struggles when the dataset is sparse or lacks sufficient samples. The recommended algorithms based on different data conditions is provided in Table \ref{table: recommended_algorithms}. 

(iii) Finally, in Section \ref{section_Experiment}, we validate our algorithms through synthetic and real-world experiments. These experiments demonstrate the strong performance of Off-C$^2$LUB with different amounts of data, especially with our proposed \(\hat\gamma\)-selection policies. Off-C$^2$LUB reduces the suboptimality gap by over 60\% compared to traditional clustering algorithms. Moreover, they confirm the effectiveness of Off-CLUB under sufficient data and highlight its limitations when data is insufficient.

\section{Related Works}\label{app+_related_works}
\textbf{Offline Bandits.} Offline multi-armed bandits (MAB), introduced by ~\citet{shivaswamy2012multi}, are a specialized variant of offline reinforcement learning (RL) that utilize pre-collected data for decision-making based on confidence balls for regression problems~\citep{javanmard2014confidence, zhang2014confidence, cai2017confidence, duan2023adaptive} where~\citet{duan2023adaptive} also considers parameter estimation with heterogeneous data. These methods have been applied in domains such as robotics ~\citep{kumar2020conservative}, autonomous driving ~\citep{yurtsever2020survey}, and education ~\citep{singla2021reinforcement}.
Some offline MAB studies consider scenarios where the offline distribution remains consistent with the online rewards ~\citep{bu2020online, banerjee2022artificial}, while others address settings with reward distribution shifts ~\citep{zhang2019warm, cheung2024leveraging}. Research has also explored offline data provided sequentially ~\citep{gur2022adaptive} or in distinct groups ~\citep{bouneffouf2019optimal, ye2020combining, tennenholtz2021bandits}. Among these, ~\citet{li2022pessimism, wang2024oracle} focus on offline contextual linear bandits, which are most closely aligned with our work. Recent works have explored offline RL with homogeneous agents ~\citep{woo2023blessing, woo2024federated}, as well as settings involving heterogeneous agents ~\citep{zhou2024federated}, where each agent serves a heterogeneous user, or heterogeneous data resources ~\citep{wang2024towards, shi2023provably}. Our work addresses offline bandits with multiple data resources for heterogeneous users and introduces a technique to identify and merge similar datasets for more accurate predictions, setting it apart from previous studies.

\textbf{Online Clustering of Bandits.} Online clustering of bandits have been extensively studied across various settings and applications, beginning with~\citet{gentile2014online} and including studies linking clusters to items for recommendations~\citep{li2016online, li2016collaborative, gentile2017context} and exploring non-linear reward functions~\citep{nguyen2020optimal, ban2024meta}. Distributed bandit clustering has been a focus in ~\citet{korda2016distributed, cherkaoui2023clustered}, while extensions to dependent arms were explored in ~\citet{8440090}. Several works focus on the way to simplify the basic assumptions~\citep{li2019improved, li2025demystifying}. Applications range from recommendation systems ~\citep{yang2018graph, yan2022dynamic} to vehicular edge computing and dynamic pricing ~\citep{9525439, miao2022context}. Recent studies have also addressed privacy concerns ~\citep{liu2022federated}, dynamic cluster switching ~\citep{nguyen2014dynamic}, unified clustering with non-stationary bandits ~\citep{li2021unifying}, corrupted user detection~\citep{wang2023online} and conversational ~\citep{wu2021clustering, li2023clustering, dai2024conversational, dai2024online}. In contrast to the online setting, where infinite samples can be drawn from unlimited online data, a key challenge in our setting lies in addressing scenarios with limited offline data. To the best of our knowledge, this is the first work to explore clustering bandits specifically in the offline bandit learning setting.

\section{Problem Formulation}\label{section_setting}
In this section, we introduce the setting for the ``Offline Clustering of Bandits (\oclub)'' problem.  We consider $U$ \textit{users}, represented as $\mathcal{U} = \{1, \cdots, U\}$, where each user $u$ is associated with a \textit{preference vector} $\bm{\theta}_u$. To capture the heterogeneity within the user population, users are grouped into $J(\leq U)$ disjoint \textit{clusters}, with all users in the same cluster sharing an identical preference vector. Precisely, we define $\mathcal{U} = \bigcup_{j=1}^J \mathcal{V}(j)$, where $\mathcal{V}(j)$ denotes the set of users within cluster $j$, hence $\mathcal{V}(j) \cap \mathcal{V}(j') = \emptyset$ for $j \neq j'$. Additionally, $\bm{\theta}_u = \bm{\theta}_{u'}$ if and only if there exists a $j \in [J]$ such that $\{u, u'\} \subseteq \mathcal{V}(j)$. The cluster to which user $u$ belongs is denoted by $j_u$, and the shared preference vector for users in cluster $\mathcal{V}(j)$ is denoted as $\bm{\theta}^{j}$. Notably, both the actual partition of clusters and the number of clusters remain unknown to the algorithm. For user $u$, we refer to users in the same cluster as \textit{homogeneous users}, and those in different clusters as \textit{heterogeneous users}.

In the offline setting, we are given a dataset $\mathcal{D} = \{\mathcal{D}_1, \cdots, \mathcal{D}_U\}$ comprising offline samples collected from $U$ users. For each user $u$, we observe $N_u$ samples, forming $\mathcal{D}_u = \{(\bm{a}_u^i, r_u^i)\}_{i=1}^{N_u}$, where $\{\bm{a}_u^i\}_{i=1}^{N_u}$ represents the $N_u$ given actions, and $\{r_u^i\}_{i=1}^{N_u}$ denotes the corresponding rewards, independently drawn as $r_u^i \sim R(\bm{a}_u^i)$. Here, $R(\bm{a}_u^i)$ is a 1-subgaussian reward distribution associated with the action $\bm{a}_u^i$, and $\mathbb{E}[r_u^i] = \langle \bm{\theta}_u, \bm{a}_u^i \rangle$ represents the mean reward for user $u$ under action $\bm{a}_u^i$. We also denote the total number of samples for a set of users $\mathcal{S}$ by $N_{\mathcal{S}} = \sum_{v \in \mathcal{S}} N_v$. Additionally, we make the following standard action regularity assumption common in clustering of bandits literature~\citep{gentile2014online, wang2023online2, dai2024conversational, wang2025online}.

\begin{assumption}[Action Regularity]\label{assumption_action_regularity}  
Let $\rho$ be a distribution over $\{\bm{a} \in \mathbb{R}^d : \|\bm{a}\|_2 \leq 1\}$ whose covariance matrix $\mathbb{E}_{\bm{a} \sim \rho}[\bm{a} \bm{a}^{\top}]$ is full rank with minimum eigenvalue $\lambda_a > 0$. For any fixed unit vector $\bm\theta \in \mathbb{R}^d$, the random variable $(\bm\theta^{\top}\bm a)^2$, with $\bm a\sim\rho$, has sub-Gaussian tails with variance upper bounded by $\sigma^2$. For each dataset $\mathcal{D}_u$, each action $\bm{a}_u^i$ is selected from a finite candidate set $\mathcal{S}_u^i$ with size $|\mathcal{S}_u^i|\leq S$ for any $i\in[N_u]$, where the actions in $\mathcal{S}_u^i$ are independently drawn from $\rho$. 
Moreover, we assume the \textit{smoothed regularity parameter} \(\tilde\lambda_a=\int_0^{\lambda_a}\big( 1 - e^{-\frac{(\lambda_a-x)^2}{2\sigma^2}} \big)^S \mathrm{d}x\)  
is known to the algorithm.
\end{assumption}

\begin{remark}[Discussion on \Cref{assumption_action_regularity}]
\Cref{assumption_action_regularity}, adapted from~\citet{wang2023online2} and widely used in recent clustering of bandits studies~\citep{dai2024conversational, wang2025online}, ensures sufficient coverage of the action space so that information is available along every dimension of the preference vector—a key requirement for distinguishing users across clusters. This assumption is particularly suitable when offline actions are drawn from a finite action space with size bounded by $S$ (e.g. in datasets generated by online logging policies with finite action sets per round). Experiments in \Cref{section_Experiment} and \Cref{appc} further validate the effectiveness of our algorithms on offline data generated under different distributions and logging policies. For completeness, we defer discussion of alternative forms of the regularity assumption, such as cases where offline samples are drawn independently from an infinite distribution $\rho$, to Appendix~\ref{app_without_item}.
\end{remark}

Let $\pi$ denote an algorithm that selects an action from an action set $\mathcal{A}_{\text{test}}\subseteq \mathbb{R}^d$ for any given user $u_{\text{test}} \in \mathcal{U}$. The goal in this setting is to design an algorithm $\pi$ that minimizes the suboptimality gap for an incoming user $u_{\text{test}}$ to be served, defined as~\citep{jin2021pessimism, li2022pessimism}:
\begin{align*}
\text{SubOpt}(u_{\text{test}},\pi,\mathcal{A}_{\text{test}})\coloneqq\langle\bm\theta_{u_{\text{test}}},\bm a_{\text{test}}^*  \rangle - \langle \bm\theta_{u_{\text{test}}}, \bm a_{\pi(u_{\text{test}})} \rangle,
\end{align*}
where $\bm a_{\text{test}}^*$ and $\bm{a}_{\pi(u_{\text{test}})}$ denote the optimal action in $\mathcal{A}_{\text{test}}$ and the action selected by $\pi$, respectively.

\section{Algorithm for General Cases: 
Off-C$^2$LUB}\label{section_insufficient_data}
This section introduces our first algorithm: Offline Connection-based Clustering of Bandits Algorithm (Off-C$^2$LUB) and the theoretical results on its suboptimality.

\subsection{Algorithm \ref{al2}: Off-C\(^2\)LUB}
Compared to the online clustering of bandits problem, the core challenge in \oclub{} lies in data insufficiency. In the online setting, algorithms start with a complete graph over all users, where each node represents a user and each edge encodes a tentative assumption that the connected users are thought to be in the same cluster~\citep{gentile2014online, li2018online} and iteratively delete edges to separate users into clusters (i.e., remaining connected subgraphs). 
With abundant online data, this process accurately removes incorrect connecting edges. However, in the offline setting, limited data makes it difficult to precisely remove edges, often leading to the inclusion of heterogeneous users within the same cluster, which introduces bias and reduces decision quality. Therefore, building on the challenges described above, the core idea behind Off-C\(^2\)LUB is to prevent users with significantly different preference vectors from being clustered. Specifically, the algorithm treats users with few samples independently, while leveraging the clustering property only for those with a sufficient dataset \(\mathcal{D}_u\). The workflow of the proposed Off-C\(^2\)LUB successfully avoids clustering largely heterogeneous users
by ensuring that predictions are made solely using data from users who share the same (or similar) preference vector as $u_{\textnormal{test}}$. Algorithm \ref{al2} shows the pseudo-code for Off-C\(^2\)LUB which consists a Cluster Phase and a Decision Phase.

\begin{algorithm}[htbp]
\caption{Off-C\(^2\)LUB}
\label{alg_NCLUBO}
\begin{algorithmic}[1]\label{al2}
\STATE\label{line_input_al2} \textbf{Input:} User $u_{\textnormal{test}}$, action set $\mathcal{A}_{\textnormal{test}}$, dataset $\mathcal{D}$, parameters $\alpha>1$, $\lambda>0$, $\delta>0$ and $\hat\gamma\geq 0$
\STATE\label{line_initialization_al2} \textbf{Initialization:} Construct a \textbf{null} graph $\mathcal{G} = (\mathcal{V}, \emptyset)$ where $\mathcal{V}=\mathcal{U}$, set $N_{\min}=\frac{16}{\tilde\lambda_a^2}\log\left( \frac{8Ud}{\tilde\lambda_a^2\delta} \right)$, and compute $M_u$, $\bm{b}_u$, $\text{CI}_u$ as in \Cref{align_calculate_M_b_theta_CI} and $\hat{\bm{\theta}}_u=M_u^{-1}\bm b_u$ for each user $u \in \mathcal{U}$
\STATE \texttt{\textbackslash\textbackslash Cluster Phase}
\FOR{any two users $u_1$, $u_2\in\mathcal{V}$}
\STATE\label{line_connecting_condition_al2} Connect $(u_1,u_2)$ if conditions in \Cref{align_cluster_condition_al2} are satisfied
\ENDFOR
\STATE Let $\mathcal{G}_{\hat\gamma}=(\mathcal{V},\mathcal{E}_{\hat\gamma})$ denote the graph afterwards\label{line_resulting_graph}
\FOR{each user $u\in\mathcal{V}$}
\STATE\label{line_aggregate_data_al2} Aggregate data and compute statistics:
\begin{align*}\textstyle
& \textstyle \tilde{\mathcal{D}}_u =  \bigcup_{(u, v) \in \mathcal{E}_{\hat\gamma}} \left\{ (\bm{a}_v^i, r_v^i) \in \mathcal{D}_v \right\} \cup \mathcal{D}_u,\ 
\tilde{n}_u =  1 + \sum_{v \in \mathcal{V}} \mathbb{I}\left[ (u, v) \in \mathcal{E}_{\hat{\gamma}} \right]
\\& \textstyle \tilde M_u = \lambda\tilde{n}_u\cdot I + \sum_{(\bm{a}, r) \in \tilde{\mathcal{D}}_u} \bm{a} \bm{a}^{\top},\ \tilde N_u=\big|\tilde{\mathcal{D}}_u\big|, \ 
\tilde{\bm{b}}_u = \sum_{(\bm{a}, r) \in \tilde{\mathcal{D}}_u} r \bm{a}, \ \tilde{\bm{\theta}}_u = \big(\tilde M_u\big)^{-1} \tilde{\bm{b}}_u
\end{align*}
\ENDFOR
\STATE \texttt{\textbackslash\textbackslash Decision Phase}
\STATE  Select 
$\bm{a}_{\textnormal{test}} = \argmax_{\bm{a} \in \mathcal{A}_{\textnormal{test}}} \left(\tilde{\bm{\theta}}_{u_{\textnormal{test}}}^{\top} \bm{a} - \beta \left\| \bm{a} \right\|_{\tilde{M}_{u_{\textnormal{test}}}^{-1}} \right)$, $\beta=\sqrt{d\log\left(1+\frac{\tilde N_{u_{\textnormal{test}}}}{\lambda\tilde n_{u_{\textnormal{test}}}\cdot d} \right) + 2\log\left( \frac{2U}{\delta} \right)}+\sqrt{\lambda}$
\end{algorithmic}
\end{algorithm}
\textbf{Input and Initialization.} Algorithm \ref{al2} takes as input the test user $u_{\textnormal{test}}\in\mathcal{U}$, the action set $\mathcal{A}_{\textnormal{test}}$, the dataset $\mathcal{D}$ and several parameters (Line \ref{line_input_al2}), which will be discussed later. The algorithm initializes the following (Line \ref{line_initialization_al2}) for each user $u$:
\begin{align}\label{align_calculate_M_b_theta_CI}
M_u=\lambda I + \sum_{i=1}^{N_u}\bm a_u^i\left(\bm a_u^i\right)^{\top}, \ \bm b_u = \sum_{i=1}^{N_u}r_u^i\bm a_u^i, \, \text{CI}_{u} = \frac{\sqrt{ d\log\left( 1 + \frac{N_u}{\lambda d}\right) + 2\log\left( \frac{2U}{\delta} \right)} + \sqrt{\lambda}}{\sqrt{\tilde \lambda_aN_u/2}}.
\end{align}

Here, $M_u$ denotes the Gramian matrix regularized by $\lambda$, $\bm{b}_u$ is the moment matrix of the response variable, $\hat{\bm{\theta}}_u$ is the ridge regression estimate of $\bm\theta_u$ for user $u$, and $\text{CI}_u$ represents the confidence interval bound. The algorithm also defines $N_{\min}$ as the minimum number of samples required for each user during clustering. These quantities play a central role in both phases of the algorithm.

\textbf{Cluster Phase.} Instead of starting with a complete graph as in the online setting, Algorithm \ref{al2} initializes a \textit{null} graph ${\mathcal{G}}$ with only vertices but no edges. During the Cluster Phase, the algorithm iterates over all user pairs $u_1$ and $u_2$ in $\mathcal{V}$, connecting them if the following condition holds:
\begin{align}\label{align_cluster_condition_al2}
\left\|\hat{\bm\theta}_{u_1} - \hat{\bm\theta}_{u_2}\right\|_2 < \hat\gamma - \alpha\left( \text{CI}_{u_1} + \text{CI}_{u_2} \right) \text{ and } \min\left\{N_{u_1},N_{u_2}\right\}\geq N_{\min}.
\end{align}
The condition in \Cref{align_cluster_condition_al2} ensures that the difference between the true preference vectors $\bm\theta_{u_1}$ and $\bm\theta_{u_2}$ is no greater than $\hat\gamma$ (see \Cref{le_sizes_of_RandW} for a detailed explanation). By only connecting users with similar estimated preference vectors, the algorithm prevents the introduction of bias that arises when users with highly divergent preference vectors are connected. Therefore, the choice of \(\hat\gamma\) significantly affects the algorithm's performance. As discussed in Section~\ref{subsection_theory_upper_bound_al2}, a smaller \(\hat\gamma\) helps to reduce bias from heterogeneous users of \(u_{\text{test}}\), but may also limit access to samples from homogeneous users. In contrast, a larger \(\hat\gamma\) includes more samples to reduce noise, but risks introducing bias from heterogeneous users. We further discuss strategies for selecting \(\hat\gamma\) in Section~\ref{subsection_policy_for_choosing_hat_gamma}. Once all users in $\mathcal{V}$ have been checked, the graph is updated to $\mathcal{G}_{\hat\gamma}$, reflecting these new connections.

Next, for each user \(u\), the algorithm aggregates data from the user and its \textbf{neighbors} in \(\mathcal{G}_{\hat\gamma}\) to form a new dataset \(\tilde{\mathcal{D}}_u\) (Line \ref{line_aggregate_data_al2}), which represents the samples from users who are likely to have the same preference vector as the test user. Unlike traditional online algorithms that aggregate data from the entire connected component containing $u$, our approach aggregates only from one-hop neighbors. We retain the term ``clustering'' in our algorithm’s name to stay consistent with the problem setting and prior works. This conservative strategy reduces uncontrolled bias from highly heterogeneous users~\citep{wang2023online2, dai2024conversational} and is better suited to offline scenarios where data is limited, as discussed in detail in Appendix \ref{appd}.
The algorithm then computes the necessary statistics for the decision phase. These statistics, similar to those defined in \Cref{align_calculate_M_b_theta_CI}, encompass a broader set of  users with similar preferences as $u_{\text{test}}$, enhancing the algorithm's decision-making capability.

\textbf{Decision Phase.} Finally, the algorithm selects $\bm{a}_{\textnormal{test}}$ for the test user $u_{\textnormal{test}}$ based on the statistics $\tilde{M}_{u_{\textnormal{test}}}$ and $\tilde{\bm{\theta}}_{u_{\textnormal{test}}}$ (Line \ref{line_aggregate_data_al2}), following a pessimistic estimate of the preference vector $\bm{\theta}_{u_{\textnormal{test}}}$~\citep{jin2021pessimism, rashidinejad2021bridging, li2022pessimism}, ensuring robust decision-making with offline data.

\subsection{Upper Bound for Off-C\(^2\)LUB}\label{subsection_theory_upper_bound_al2}

We analyze the upper bound on the suboptimality of Algorithm~\ref{al2}, with detailed proofs provided in Appendix~\ref{appendix_proof_of_lemma_RandW} and~\ref{appa2}. Since the algorithm aggregates user data based on the graph $\mathcal{G}_{\hat\gamma}$ constructed in Line~\ref{line_resulting_graph}, we define $\mathcal{V}_{\hat\gamma}(u), \mathcal{R}_{\hat\gamma}(u)$ and $\mathcal{W}_{\hat\gamma}(u)$ as Table \ref{tab:notation_neighbor_sets}. 
\begin{table}[h]
\centering
\caption{Summary of neighbor set notations.}\label{tab:notation_neighbor_sets}
\renewcommand{\arraystretch}{1.1}
\resizebox{\textwidth}{!}{%
\begin{tabular}{@{}ccc@{}}
\toprule
\textbf{Notation} & \textbf{Definition} & \textbf{Interpretation} \\
\midrule
\large$\mathcal{V}_{\hat\gamma}(u)$ & 
\large$\{u\} \cup \{v \mid (u,v) \in \mathcal{E}_{\hat\gamma}\}$ & 
Set of user $u$ together with all its neighbors in graph $\mathcal{G}_{\hat\gamma}$. \\
\midrule
\large$\mathcal{R}_{\hat\gamma}(u)$ & 
\large$\{v \mid v \in \mathcal{V}_{\hat\gamma}(u), \bm\theta_u = \bm\theta_v\}$ & 
\begin{tabular}[c]{@{}c@{}}Set of $u$ and its \emph{homogeneous neighbors}, i.e., users in $\mathcal{V}_{\hat\gamma}(u)$ with same preference vectors.
\\Their data can be safely aggregated with $u$’s without introducing bias.\end{tabular} \\
\midrule
\large$\mathcal{W}_{\hat\gamma}(u)$ & 
\large$\{v \mid v \in \mathcal{V}_{\hat\gamma}(u), \bm\theta_u \neq \bm\theta_v\}$ & 
\begin{tabular}[c]{@{}c@{}} Set of $u$’s \emph{heterogeneous neighbors}, i.e., users in $\mathcal{V}_{\hat\gamma}(u)$ with different preference vectors.\\ Aggregating their data with $u$’s may introduce bias and thus should be controlled.\end{tabular} \\
\bottomrule
\end{tabular}
}
\end{table}

Thus $\mathcal{V}_{\hat\gamma}(u)=\mathcal{R}_{\hat\gamma}(u)\cup\mathcal{W}_{\hat\gamma}(u)$ and $\mathcal{R}_{\hat\gamma}(u)\cap\mathcal{W}_{\hat\gamma}(u)=\emptyset$ hold. 
To formalize cluster separation, we introduce the \textit{heterogeneity gap}~\citep{gentile2014online, li2019improved, li2025demystifying, wang2025online} as:

\begin{definition}[Heterogeneity Gap]\label{assumption_heterogeneity_gap}  
The preference vectors of users from different clusters are separated by at least $\gamma$; that is, for any $u$ and $v$ in different clusters, $\left\|\bm{\theta}_u - \bm{\theta}_v\right\|_2 \geq \gamma>0$.
\end{definition}

In Algorithm~\ref{al2}, the choice of $\hat\gamma$ influences both $\mathcal{R}_{\hat\gamma}(u)$ and $\mathcal{W}_{\hat\gamma}(u)$, and thus directly impacts both performance and theoretical guarantees. The following lemma formalizes these effects, while Theorem~\ref{th2} establishes the resulting suboptimality bound.

\begin{lemma}[Estimations of sets $\mathcal{R}_{\hat\gamma}(u)$ and $\mathcal{W}_{\hat\gamma}(u)$]\label{le_sizes_of_RandW}
For inputs $\alpha \geq 1$, $\lambda > 0$, and $\delta \in (0,1)$ satisfying  
\(
\lambda \leq d \log\big(1 + \frac{\min_u\{N_u\}}{\lambda d}\big) + 2 \log\big(\frac{2U}{\delta}\big) \text{ and }   
\delta \leq \frac{2U}{1 + \max_u \{N_u\}/(\lambda d)},
\) there exist some $\alpha_r\in\Big[ \frac{\sqrt{\tilde \lambda_a}}{4(\alpha+1)\sqrt{\log(2U/\delta)\max\{2,d\}}}, \frac{\sqrt{\tilde \lambda_a}}{2\alpha\sqrt{\log(2U/\delta)}} \Big)$ and $\alpha_w\in\Big( 0, \frac{\sqrt{\tilde \lambda_a}}{2(\alpha-1)\sqrt{\log(2U/\delta)}} \Big]$ such that sets $\mathcal{R}_{\hat\gamma}(u)$ and $\mathcal{W}_{\hat\gamma}(u)$ defined in \Cref{tab:notation_neighbor_sets} can be represented as:
\[
\begin{aligned}
\text{If } N_u \geq N_{\min}:\ &\mathcal{R}_{\hat\gamma}(u) = \{u\} \cup \left\{ v \,\middle|\, 
\bm\theta_u = \bm\theta_v,\;
\tfrac{1}{\sqrt{N_u}} + \tfrac{1}{\sqrt{N_v}} \leq \alpha_r \hat\gamma,\;
N_v \geq N_{\min} \right\}, \\
&\mathcal{W}_{\hat\gamma}(u) = \left\{ v \,\middle|\, 
\gamma \leq \|\bm\theta_u - \bm\theta_v\|_2 \leq \hat\gamma,\;
\tfrac{1}{\sqrt{N_u}} + \tfrac{1}{\sqrt{N_v}} \leq \alpha_w \varepsilon,\;
N_v \geq N_{\min} \right\}; \\[6pt]
\text{Otherwise}:\
& \mathcal{R}_{\hat\gamma}(u) = \{u\}, 
\mathcal{W}_{\hat\gamma}(u) = \emptyset,
\end{aligned}
\]
with probability at least $1-\delta$, where $N_{\min}$ is as defined in Line \ref{line_initialization_al2} of \Cref{al2} and $\varepsilon=\hat\gamma-\gamma$.
\end{lemma}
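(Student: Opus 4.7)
The plan is to condition on the high-probability event on which the per-user ridge estimators concentrate, $\|\hat{\bm\theta}_u - \bm\theta_u\|_2 \le \text{CI}_u$ for every user with $N_u \ge N_{\min}$, and then convert the connecting rule of \Cref{align_cluster_condition_al2} into the sample-size inequalities appearing in the lemma. The case $N_u < N_{\min}$ is immediate: the second clause of \Cref{align_cluster_condition_al2} forbids $u$ from being joined to any vertex, so $\mathcal{V}_{\hat\gamma}(u) = \{u\}$, which gives $\mathcal{R}_{\hat\gamma}(u) = \{u\}$ and $\mathcal{W}_{\hat\gamma}(u) = \emptyset$.

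\textbf{Concentration step.} The first step is to prove that, with probability at least $1-\delta$, every user $u$ with $N_u \ge N_{\min}$ satisfies both (i) $\lambda_{\min}(M_u) \ge \tilde{\lambda}_a N_u / 2$ and (ii) $\|\hat{\bm\theta}_u - \bm\theta_u\|_2 \le \text{CI}_u$. Given (i), assertion (ii) follows from the standard self-normalized ridge bound $\|\hat{\bm\theta}_u - \bm\theta_u\|_{M_u} \le \sqrt{d\log(1 + N_u/(\lambda d)) + 2\log(2U/\delta)} + \sqrt{\lambda}$ combined with the inequality $\|\cdot\|_2 \le \|\cdot\|_{M_u}/\sqrt{\lambda_{\min}(M_u)}$, after a union bound over $\mathcal{U}$. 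Assertion (i) is a matrix Chernoff estimate tailored to \Cref{assumption_action_regularity}: because each logged $\bm a_u^i$ is the argmax (under an unknown offline policy) over $S$ candidates drawn i.i.d.\ from $\rho$, the smoothed parameter $\tilde\lambda_a$ is the per-step minimum-eigenvalue contribution one can guarantee from such a best-of-$S$ draw, and $N_{\min}$ is calibrated so that the empirical Gramian concentrates around $(N_u\tilde\lambda_a/2)\,I$ uniformly in $u$ with failure mass at most $\delta/2$.

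\textbf{Triangle inequalities and sandwich on $\text{CI}_u$.} On this event, for any pair $u,v$ with $N_u, N_v \ge N_{\min}$ the triangle inequality gives
\begin{align*}
\|\bm\theta_u - \bm\theta_v\|_2 - (\text{CI}_u + \text{CI}_v) \;\le\; \|\hat{\bm\theta}_u - \hat{\bm\theta}_v\|_2 \;\le\; \|\bm\theta_u - \bm\theta_v\|_2 + (\text{CI}_u + \text{CI}_v).
\end{align*}
Combined with \Cref{align_cluster_condition_al2}, the first inequality yields the necessary condition $\|\bm\theta_u - \bm\theta_v\|_2 < \hat\gamma - (\alpha-1)(\text{CI}_u + \text{CI}_v)$, and the second yields the sufficient condition $(\alpha+1)(\text{CI}_u + \text{CI}_v) \le \hat\gamma - \|\bm\theta_u - \bm\theta_v\|_2$ for $(u,v)$ to be connected. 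The two standing hypotheses on $(\lambda,\delta)$ sandwich $\text{CI}_u$: using $\sqrt{\lambda} \le \sqrt{d\log(1+N_u/(\lambda d))+2\log(2U/\delta)}$ and $d\log(1+N_u/(\lambda d)) \le d\log(2U/\delta)$, elementary algebra produces
\begin{align*}
\tfrac{2\sqrt{\log(2U/\delta)}}{\sqrt{\tilde\lambda_a}}\cdot\tfrac{1}{\sqrt{N_u}} \;\le\; \text{CI}_u \;\le\; \tfrac{4\sqrt{\max\{2,d\}\log(2U/\delta)}}{\sqrt{\tilde\lambda_a}}\cdot\tfrac{1}{\sqrt{N_u}}.
\end{align*}

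\textbf{Finishing and main obstacle.} Specializing the sufficient direction to $\bm\theta_u = \bm\theta_v$ and applying the $\text{CI}$ upper bound yields $\{v : \bm\theta_u = \bm\theta_v,\; 1/\sqrt{N_u}+1/\sqrt{N_v} \le \alpha_r\hat\gamma,\; N_v \ge N_{\min}\} \subseteq \mathcal{R}_{\hat\gamma}(u)$ with $\alpha_r$ equal to the lower end of the advertised range, while the necessary direction together with the $\text{CI}$ lower bound gives the reverse inclusion at the upper end; existence of an intermediate $\alpha_r$ realizing the exact identity then follows. For the heterogeneous case $\|\bm\theta_u - \bm\theta_v\|_2 \ge \gamma$, the necessary direction becomes $(\alpha-1)(\text{CI}_u + \text{CI}_v) < \hat\gamma - \|\bm\theta_u - \bm\theta_v\|_2 \le \varepsilon$, which via the $\text{CI}$ lower bound delivers the claimed $\alpha_w\varepsilon$-inequality, and the upper bound $\|\bm\theta_u - \bm\theta_v\|_2 \le \hat\gamma$ is automatic from the same direction. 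I expect the matrix concentration in step (i) to be the main technical obstacle: because offline actions are drawn as the best of $S$ candidates under an unknown policy rather than i.i.d.\ from $\rho$ itself, the proof must work with the smoothed regularity $\tilde\lambda_a$ (not $\lambda_a$), and this dictates the precise form of $N_{\min}$; once this bound is in hand, everything else reduces to bookkeeping on $\text{CI}_u$ under the two hypotheses on $\lambda$ and $\delta$.
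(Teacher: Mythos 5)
Your proposal is correct and follows essentially the same route as the paper's proof: condition on the event that $\lambda_{\min}(M_u)\gtrsim\tilde\lambda_a N_u/2$ and $\|\hat{\bm\theta}_u-\bm\theta_u\|_2\le\mathrm{CI}_u$, use the triangle inequality to turn the connecting rule into necessary and sufficient conditions on $\|\bm\theta_u-\bm\theta_v\|_2$, and sandwich $\mathrm{CI}_u$ between $2\sqrt{\log(2U/\delta)}/\sqrt{\tilde\lambda_a N_u}$ and $4\sqrt{\max\{2,d\}\log(2U/\delta)}/\sqrt{\tilde\lambda_a N_u}$ via the hypotheses on $\lambda$ and $\delta$ to obtain exactly the endpoints of the advertised ranges for $\alpha_r$ and $\alpha_w$. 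The only differences are presentational (you phrase the two inclusions as necessary/sufficient directions where the paper proves parts (i) and (ii) separately, and you invoke a matrix Chernoff argument where the paper cites Lemma J.1 of Wang et al.\ and Lemma 7 of Li et al.), and your unproved ``existence of an intermediate $\alpha_r$'' step is asserted at the same level of rigor as in the paper itself.
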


\begin{remark}[Interpretation of \Cref{le_sizes_of_RandW}]\label{remark_interpretation_of_RandW}
As defined in \Cref{tab:notation_neighbor_sets}, set $\mathcal{R}_{\hat\gamma}(u)$ contains user $u$ and its homogeneous neighbors while $\mathcal{W}_{\hat\gamma}(u)$ represents $u$’s heterogeneous neighbors.
In \Cref{le_sizes_of_RandW}, the distinction between the two cases (whether $N_u \geq N_{\min}$) follows directly from the cluster condition in \Cref{align_cluster_condition_al2}. Specifically, when $u$ does not meet the minimum data requirement $N_{\min}$, no connections are formed, and the algorithm defaults to using only $u$’s own data. This threshold ensures sufficient coverage of $\mathbb{R}^d$ under Assumption~\ref{assumption_action_regularity}, providing essential information along each dimension for preference estimation.  
When $N_u \geq N_{\min}$, the constraints on $1/\sqrt{N_u}+1/\sqrt{N_v}$ for both $\mathcal{R}_{\hat\gamma}(u)$ and $\mathcal{W}_{\hat\gamma}(u)$ reflect the principle that only users with sufficient data and reliable estimates should contribute to decision-making. Increasing $\hat\gamma$ (and thus $\varepsilon$) relaxes these constraints, potentially incorporating more homogeneous samples but at the risk of introducing larger bias. Moreover, the constraint $\gamma \leq \|\bm\theta_u - \bm\theta_v\|_2 \leq \hat\gamma$ ensures that $\mathcal{W}_{\hat\gamma}(u)$ only includes users with relatively similar preferences where $\hat\gamma$ is a tunable input. A larger $\hat\gamma$ increases such bias and enlarges $\mathcal{W}_{\hat\gamma}(u)$.
\end{remark}

\begin{theorem}\label{th2}
With the same conditions as \Cref{le_sizes_of_RandW}, the suboptimality for Off-C$^2$LUB satisfies:
\begin{align}\label{align_in_th2}
\textnormal{SubOpt}(u_{\textnormal{test}},\textnormal{Off-C}^2\textnormal{LUB},\mathcal{A}_{\textnormal{test}})\leq \tilde O\left( \sqrt{\frac{d}{\tilde \lambda_aN_{\mathcal{V}_{\hat\gamma}(u_{\textnormal{test}})}}} + \frac{\eta_{\mathcal{W}_{\hat\gamma}(u_{\textnormal{test}})}\hat\gamma}{\tilde \lambda_a} \right),
\end{align}
with probability at least $1-2\delta$, where $\tilde{O}$ omits logarithmic terms and $\eta_{\mathcal{W}_{\hat\gamma}(u)} = \frac{\left|\mathcal{W}_{\hat\gamma}(u)\right|\lambda + N_{\mathcal{W}_{\hat\gamma}(u)}}{|\mathcal{V}_{\hat\gamma}(u)|\lambda + N_{\mathcal{V}_{\hat\gamma}(u)}}$ denotes the fraction of heterogeneous samples over all utilized samples for $u$.
\end{theorem}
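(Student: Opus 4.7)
My plan is to prove \Cref{th2} by a pessimism-style analysis of the ridge estimator $\tilde{\bm\theta}_{u_{\textnormal{test}}}$, decomposing its residual into a stochastic noise term, a regularization term, and a deterministic cluster-bias term induced by heterogeneous neighbors. Writing $u = u_{\textnormal{test}}$ and conditioning on the high-probability event of \Cref{le_sizes_of_RandW}, every $v\in\mathcal{R}_{\hat\gamma}(u)$ satisfies $\bm\theta_v=\bm\theta_u$ and every $v\in\mathcal{W}_{\hat\gamma}(u)$ satisfies $\|\bm\theta_v-\bm\theta_u\|_2\leq\hat\gamma$. Straightforward algebra then gives
\[
\tilde M_u(\tilde{\bm\theta}_u-\bm\theta_u) \;=\; \sum_{v\in\mathcal{W}_{\hat\gamma}(u)}\sum_{i=1}^{N_v}\bm a_v^i(\bm a_v^i)^{\top}(\bm\theta_v-\bm\theta_u) \;-\; \lambda\tilde n_u\bm\theta_u \;+\; \sum_{v\in\mathcal{V}_{\hat\gamma}(u)}\sum_{i=1}^{N_v}\eta_v^i\bm a_v^i,
\]
so that for any action $\bm a$, $|(\tilde{\bm\theta}_u-\bm\theta_u)^{\top}\bm a|\leq \|\bm a\|_{\tilde M_u^{-1}}\cdot\bigl(\|\text{noise}\|_{\tilde M_u^{-1}}+\|\lambda\tilde n_u\bm\theta_u\|_{\tilde M_u^{-1}}+\|\text{cluster bias}\|_{\tilde M_u^{-1}}\bigr)$ by Cauchy--Schwarz.

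Next, I would invoke the standard self-normalized concentration bound of Abbasi-Yadkori et al.\ to absorb the noise and the regularization residual into the radius $\beta$ specified in \Cref{al2} with probability at least $1-\delta$. For the cluster bias, I would rewrite it as $\sum_{v\in\mathcal{W}_{\hat\gamma}(u),i}\bm a_v^i\langle\bm a_v^i,\bm\theta_v-\bm\theta_u\rangle$ with scalar coefficients of magnitude at most $\hat\gamma$, and apply Cauchy--Schwarz twice to obtain
\[
\Big\|\sum_{v,i}\bm a_v^i\langle\bm a_v^i,\bm\theta_v-\bm\theta_u\rangle\Big\|_{\tilde M_u^{-1}} \;\leq\; \hat\gamma\sqrt{N_{\mathcal{W}_{\hat\gamma}(u)}}\cdot\sqrt{\sum_{v,i}\|\bm a_v^i\|_{\tilde M_u^{-1}}^2}.
\]
The crucial input is the action-regularity spectral lower bound $\lambda_{\min}(\tilde M_u)\gtrsim\tilde\lambda_a\,N_{\mathcal{V}_{\hat\gamma}(u)}$, which follows from \Cref{assumption_action_regularity} since \Cref{le_sizes_of_RandW} guarantees $N_v\geq N_{\min}$ for every $v\in\mathcal{V}_{\hat\gamma}(u)\setminus\{u\}$; it yields $\|\bm a_v^i\|_{\tilde M_u^{-1}}^2\lesssim 1/(\tilde\lambda_a N_{\mathcal{V}_{\hat\gamma}(u)})$ and thus a cluster-bias bound of order $\hat\gamma\,N_{\mathcal{W}_{\hat\gamma}(u)}/\sqrt{\tilde\lambda_a N_{\mathcal{V}_{\hat\gamma}(u)}}$.

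Finally, I combine these with the pessimism argument: the algorithm's defining inequality $\tilde{\bm\theta}_u^{\top}\bm a_{\textnormal{test}}^{*}-\beta\|\bm a_{\textnormal{test}}^{*}\|_{\tilde M_u^{-1}}\leq\tilde{\bm\theta}_u^{\top}\bm a_{\textnormal{test}}-\beta\|\bm a_{\textnormal{test}}\|_{\tilde M_u^{-1}}$ gives, after telescoping with the concentration estimates from the previous step,
\[
\textnormal{SubOpt} \;\leq\; \bigl(\|\bm a_{\textnormal{test}}^{*}\|_{\tilde M_u^{-1}}+\|\bm a_{\textnormal{test}}\|_{\tilde M_u^{-1}}\bigr)\cdot\bigl(2\beta + \|\text{cluster bias}\|_{\tilde M_u^{-1}}\bigr).
\]
Applying $\|\bm a\|_{\tilde M_u^{-1}}\lesssim 1/\sqrt{\tilde\lambda_a N_{\mathcal{V}_{\hat\gamma}(u)}}$ once more produces the two stated terms in \eqref{align_in_th2}, with the ratio of sample counts collapsing into the definition of $\eta_{\mathcal{W}_{\hat\gamma}(u)}$ once the $|\mathcal{W}|\lambda$ and $|\mathcal{V}|\lambda$ bookkeeping from $\tilde M_u$'s regularizer is folded in. A union bound over the \Cref{le_sizes_of_RandW} event and the self-normalized concentration event gives the stated $1-2\delta$ probability.

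The main obstacle I anticipate is getting the cluster-bias term to scale as $\eta_{\mathcal{W}_{\hat\gamma}(u)}\hat\gamma/\tilde\lambda_a$ rather than something weaker such as $|\mathcal{W}_{\hat\gamma}(u)|\hat\gamma$ or $\hat\gamma\sum_{v\in\mathcal{W}}\sqrt{N_v}$: a naive per-neighbor triangle inequality would lose a $\sqrt{N_{\mathcal{W}}}$ factor, so the heterogeneous samples must be aggregated jointly through the Cauchy--Schwarz step above and coupled with the spectral lower bound from action regularity. A secondary subtlety is that the radius $\beta$ in \Cref{al2} is calibrated only against the stochastic noise and the regularization residual, so the cluster bias must be tracked as a \emph{separate} additive contribution to the final suboptimality and cannot be folded into the algorithm's confidence radius at the decision step.
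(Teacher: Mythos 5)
Your proposal is correct and follows essentially the same route as the paper's proof in Appendix~\ref{appa2}: the same noise/regularization/cluster-bias decomposition of the aggregated ridge residual, the same self-normalized concentration for the noise, the same spectral lower bound $\lambda_{\min}(\tilde M_u)\gtrsim \tilde\lambda_a N_{\mathcal{V}_{\hat\gamma}(u)}$ from action regularity, and the same pessimism step at the end. The only cosmetic difference is that the paper bounds the bias via the eigenvalue ratio $\lambda_{\max}(M^{\mathcal{W}})/\lambda_{\min}(M^{\mathcal{V}})$ rather than your double Cauchy--Schwarz over heterogeneous samples, and both yield the $\eta_{\mathcal{W}_{\hat\gamma}(u)}\hat\gamma/\tilde\lambda_a$ scaling.
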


\begin{remark}[Analysis of \Cref{th2}]
The terms in \Cref{th2} reflect both \textbf{noise} and \textbf{bias}.  
The first term, caused by the limited number of aggregated samples, represents the \textbf{noise} from all neighbors of $u_{\text{test}}$ in \(\mathcal{V}_{\hat\gamma}(u_{\textnormal{test}})\). This noise decreases as the total number of samples in \(\mathcal{V}_{\hat\gamma}(u_{\textnormal{test}})\) grows, which can be achieved by selecting a larger \(\hat\gamma\). The second term reflects the \textbf{bias} from including samples of heterogeneous neighbors, which scales with both \(\hat\gamma\) and the fraction of heterogeneous samples $\eta_{\mathcal{W}_{\hat\gamma}(u_{\text{test}})}$. As noted in \Cref{remark_interpretation_of_RandW}, enlarging \(\hat\gamma\) expands $\mathcal{W}_{\hat\gamma}(u_{\textnormal{test}})$ and thus potentially increases $\eta_{\mathcal{W}_{\hat\gamma}(u_{\textnormal{test}})}$. Therefore, controlling \(\hat\gamma\) is key to balancing two competing effects: expanding $\mathcal{V}_{\hat\gamma}(u_{\text{test}})$ to reduce noise, and restricting $\mathcal{W}_{\hat\gamma}(u_{\textnormal{test}})$ to limit bias. The dependence on $\tilde\lambda_a$ highlights the link between suboptimality and action regularity: a larger $\tilde\lambda_a$ implies more information per sample and hence smaller suboptimality. Importantly, the form of \Cref{th2} reflects the fundamental limited-data challenge of the offline setting: unlike online scenarios, where additional samples can be collected to reduce noise and accurately estimate user preferences to avoid bias, the offline setting enforces a crucial tradeoff between noise and bias, making the choice of \(\hat\gamma\) particularly critical.
\end{remark}

\begin{remark}[Discussions on $\alpha_r$ and $\alpha_w$]
The parameters $\alpha_r$ and $\alpha_w$ in \Cref{le_sizes_of_RandW} are not fixed constants but are bounded within setting-dependent ranges. This design reflects the practical difficulty of determining each user’s exact neighbors during our theoretical analysis due to the uncertainty and noise in offline datasets, while still ensuring that the cardinalities of $\mathcal{R}_{\hat\gamma}(u)$ and $\mathcal{W}_{\hat\gamma}(u)$ can be estimated reliably. These parameters therefore play a critical role in ensuring the generality of our results. Intuitively, by choosing $\alpha_r$ at the lower end of its range to conservatively approximate a subset of homogeneous neighbors and $\alpha_w$ at the upper end of its range to conservatively approximate a superset of heterogeneous neighbors, we can establish a concrete bound that remains parameter-fixed under uncertainty. We defer the refinement of these parameters, along with the derivation of a fixed parameter form of \Cref{th2}, to Appendix~\ref{app_concrete_and_fixed_form} due to space constraints.
\end{remark}

\subsection{Strategies for Choosing \(\hat\gamma\)}\label{subsection_policy_for_choosing_hat_gamma}

\begin{wrapfigure}{r}{0.5\textwidth}
    \centering
    \vskip -0.25in
    \includegraphics[width=0.5\textwidth]{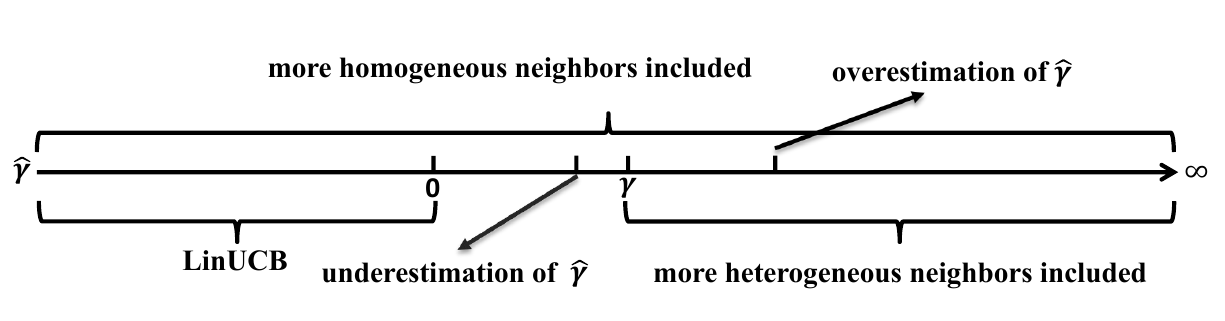}
    \caption{Influence of Different $\hat\gamma$.}
    \vskip -0.25in
    \label{fig:gamma_line}
\end{wrapfigure}

As demonstrated in \Cref{th2}, the performance of Algorithm \ref{al2} critically depends on the choice of \(\hat\gamma\) where a carefully chosen \(\hat\gamma\) balances noise and bias. Below we outline strategies for selecting \(\hat\gamma\) under different scenarios. Figure \ref{fig:gamma_line} shows the influence of different choices for $\hat\gamma$.

\textbf{Case 1: Exact Information of \(\gamma\).} We set \(\hat\gamma = \gamma\) when the exact value of \(\gamma\) is known. This ensures \(\varepsilon = 0\) in \Cref{th2}, so that only users from the same true cluster as \(u_{\textnormal{test}}\) are included, i.e.,  $\eta_{\mathcal{W}_{\gamma}(u_{\textnormal{test}})} = 0$. The resulting suboptimality is formalized in Corollary \ref{corollary_gamma_known}.

\begin{corollary}[Theorem \ref{th2} for Accurate $\gamma$]\label{corollary_gamma_known}
With the same notations as \Cref{th2}, if \(\hat\gamma = \gamma\) in Algorithm \ref{al2}, the suboptimality gap satisfies:
\[
\textnormal{SubOpt}(u_{\textnormal{test}}, \textnormal{Off-C\(^2\)LUB}, \mathcal{A}_{\textnormal{test}}) \leq \tilde O\left( \sqrt{\frac{d}{\tilde \lambda_aN_{\mathcal{V}_{\gamma}(u_{\textnormal{test}})}}} \right).
\]
\end{corollary}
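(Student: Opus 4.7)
The plan is to derive Corollary \ref{corollary_gamma_known} as a direct specialization of Theorem \ref{th2} at the parameter choice $\hat\gamma = \gamma$, using Lemma \ref{le_sizes_of_RandW} to argue that the bias term vanishes. The key observation is that when $\hat\gamma = \gamma$, we have $\varepsilon = \hat\gamma - \gamma = 0$, so the characterization of $\mathcal{W}_{\hat\gamma}(u_{\text{test}})$ in Lemma \ref{le_sizes_of_RandW} forces the set of heterogeneous neighbors to be empty.

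First, I would instantiate Lemma \ref{le_sizes_of_RandW} with $\hat\gamma = \gamma$. In the case $N_{u_{\text{test}}} \geq N_{\min}$, the lemma describes $\mathcal{W}_{\gamma}(u_{\text{test}})$ as the set of users $v$ with $\gamma \le \|\bm\theta_{u_{\text{test}}} - \bm\theta_v\|_2 \le \hat\gamma$ and $\tfrac{1}{\sqrt{N_{u_{\text{test}}}}} + \tfrac{1}{\sqrt{N_v}} \le \alpha_w \varepsilon$. Since $\varepsilon = 0$ and $\alpha_w$ is finite, the right-hand side of the second inequality is zero while the left-hand side is strictly positive for any finite $N_{u_{\text{test}}}, N_v$. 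Therefore, no user $v$ satisfies the condition, and $\mathcal{W}_{\gamma}(u_{\text{test}}) = \emptyset$ with probability at least $1-\delta$. In the alternative case $N_{u_{\text{test}}} < N_{\min}$, the lemma already gives $\mathcal{W}_{\gamma}(u_{\text{test}}) = \emptyset$ by definition. Either way, $\eta_{\mathcal{W}_{\gamma}(u_{\text{test}})} = 0$.

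Next, I would plug this into Theorem \ref{th2}. The bias term $\tfrac{\eta_{\mathcal{W}_{\hat\gamma}(u_{\text{test}})}\hat\gamma}{\tilde\lambda_a}$ evaluates to $0$, leaving only the noise term $\tilde O\bigl(\sqrt{d/(\tilde\lambda_a N_{\mathcal{V}_{\gamma}(u_{\text{test}})})}\bigr)$. A minor bookkeeping step is to observe that the failure probability of Lemma \ref{le_sizes_of_RandW} (at most $\delta$) is already absorbed within the $1 - 2\delta$ high-probability guarantee of Theorem \ref{th2}, so no extra union bound is needed. This produces exactly the bound claimed in Corollary \ref{corollary_gamma_known}.

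There is no genuinely hard step in this argument; it is a clean parameter specialization. The only point to articulate carefully is why $\varepsilon = 0$ rules out any heterogeneous neighbor, which rests on the strict positivity of $1/\sqrt{N_u} + 1/\sqrt{N_v}$ for finite sample sizes. Once that is noted, the corollary follows by inspection of the bound in Theorem \ref{th2}, and $\mathcal{V}_{\gamma}(u_{\text{test}})$ coincides with $\mathcal{R}_{\gamma}(u_{\text{test}})$, i.e., $u_{\text{test}}$ together with the homogeneous neighbors that meet the data-sufficiency threshold.
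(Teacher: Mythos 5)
Your proposal is correct and matches the paper's (very brief) justification: the paper likewise argues that $\hat\gamma=\gamma$ forces $\varepsilon=0$, hence $\mathcal{W}_{\gamma}(u_{\textnormal{test}})=\emptyset$ by \Cref{le_sizes_of_RandW} and $\eta_{\mathcal{W}_{\gamma}(u_{\textnormal{test}})}=0$, so the bias term in \Cref{th2} vanishes. Your added observations --- that the emptiness follows from strict positivity of $1/\sqrt{N_u}+1/\sqrt{N_v}$ against the threshold $\alpha_w\varepsilon=0$, and that no extra union bound is needed --- are correct and simply make explicit what the paper leaves implicit.
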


\begin{remark}[Analysis of Corollary \ref{corollary_gamma_known}] Corollary \ref{corollary_gamma_known} implies that Algorithm \ref{al2} achieves suboptimality only with noise. This is because no users from outside \(u_{\textnormal{test}}\)'s true cluster are included, eliminating bias entirely. Furthermore, Algorithm \ref{al2} only connects homogeneous users based on similar preference vectors, avoiding the biased inclusion of heterogeneous users. Note that choosing $\hat\gamma < \gamma$ can also yield a bias-free suboptimality as in \Cref{corollary_gamma_known}.
\end{remark}

\paragraph{Case 2: No Prior Information on \(\gamma\).}

The more general and challenging case arises when no prior knowledge of $\gamma$ is available, requiring $\hat\gamma$ to be estimated directly from data. To guide this estimation, we define the following statistics:
\begin{align}
\textstyle\Gamma(u, v) = \left\|\hat{\bm{\theta}}_{u} - \hat{\bm{\theta}}_{v}\right\|_2 & - \alpha(\text{CI}_{u} + \text{CI}_{v}),\ \tilde\Gamma(u, v) = \left\| \hat{\bm\theta}_{u} - \hat{\bm\theta}_v \right\|_2 + \alpha(\text{CI}_{u} + \text{CI}_v), \label{align_definition_of_Gamma}
\\& M(u) = \{v \in \mathcal{V} \setminus \{u\} \mid \Gamma(u, v) > 0\}, \notag
\end{align}
where $\text{CI}_u$ is as defined in \Cref{align_calculate_M_b_theta_CI}. Since $\Gamma(u,v) \leq \|\bm\theta_u - \bm\theta_v\|_2$ for $\alpha \geq 1$, it serves as a lower bound on the true preference gap. If $\Gamma(u,v) \leq 0$, $u$ and $v$ are either in the same cluster or lack sufficient data to be distinguished. Hence $M(u)$ contains users from different clusters, while those outside are either homogeneous or uncertain. When $M(u_{\text{test}})=\emptyset$, it is natural to set $\hat\gamma=0$, reducing Algorithm~\ref{al2} to LinUCB~\citep{abbasi2011improved} based solely on $\mathcal{D}_{u_{\text{test}}}$. 

Below, we propose two policies for choosing $\hat\gamma$: the \textit{underestimation policy} selects a smaller value to limit bias but at the cost of more noise, while the \textit{overestimation policy} makes the opposite trade-off:
\begin{align}\label{align_hatgamma_selection_policies}
\hat\gamma = 
\begin{cases}
\mathbb{I}\{M(u_{\textnormal{test}}) \neq \emptyset\} \cdot 
\min_{v \in M(u_{\textnormal{test}})}\Gamma(u_{\textnormal{test}}, v), & \textbf{(Underestimation Policy)}, \\[6pt]
\mathbb{I}\{M(u_{\textnormal{test}}) \neq \emptyset\} \cdot 
\min_{v \in M(u_{\textnormal{test}})}\tilde{\Gamma}(u_{\textnormal{test}}, v), & \textbf{(Overestimation Policy)},
\end{cases}
\end{align}

\begin{remark}[Suitable Scenarios for Both Policies]
The key difference between the two policies lies in whether we underestimate ($\Gamma(u,v)$) or overestimate ($\tilde\Gamma(u,v)$) the preference gap between two users. Intuitively, the underestimation policy selects a relatively small $\hat\gamma$, which reduces bias but increases noise according to \Cref{le_sizes_of_RandW}, while the overestimation policy makes the opposite tradeoff. Therefore, the underestimation policy is more suitable when avoiding bias is more critical, making it particularly valuable in safety-critical applications such as medical treatment or financial decision-making, where even small biases could lead to harmful outcomes. By contrast, the overestimation policy is preferable when aggregating large, diverse data is essential. For example, in recommender systems with high-dimensional user preferences, pooling additional data helps stabilize estimates and improves accuracy, even at the cost of introducing moderate bias. Detailed theoretical analysis and further discussion of both policies are deferred to Appendix~\ref{app_hatgamma_selection} due to space constraints.
\end{remark}

\section{Algorithm for Sufficient Dataset: Off-CLUB}\label{section_sufficient_data}
When \(\gamma\) is unknown, Off-C$^2$LUB faces a trade-off: a quite small \(\hat\gamma\) increases noise, while a large one introduces bias, as shown in Section \ref{subsection_policy_for_choosing_hat_gamma}. This motivates us to propose Offline Clustering of Bandits (Off-CLUB) that avoids estimating \(\gamma\) and achieves strong performance under sufficient data.

\subsection{Algorithm \ref{al1}: Off-CLUB}
Off-CLUB shares the same two-phase structure as Off-C$^2$LUB, consisting of a Cluster Phase and a Decision Phase. Due to space constraints, full details and pseudo-code (Algorithm~\ref{al1}) are deferred to Appendix~\ref{appd}. The main difference lies in graph construction: Off-CLUB starts with a complete graph and removes edges between users with dissimilar preferences, whereas Off-C$^2$LUB begins with an empty graph and adds edges only when sufficient similarity is detected.

The algorithm takes similar inputs to Algorithm \ref{al2} but does not require \(\hat\gamma\). During the Cluster Phase, it iterates through all edges \((u_1, u_2) \in \mathcal{E}\) and removes those that satisfying:
\begin{align}\label{align_different_cluster_condition}
\textstyle\big\|\hat{\bm{\theta}}_{u_1} - \hat{\bm{\theta}}_{u_2}\big\|_2 > \alpha \left( \text{CI}_{u_1} + \text{CI}_{u_2} \right),
\end{align}
indicating that users \(u_1\) and \(u_2\) belong to different clusters with high probability. After edge removal, the resulting graph \(\tilde{\mathcal{G}}\) correctly clusters users with sharing preference vectors, provided that all users have sufficient samples as discussed in Section \ref{subsection_theoretical_results_al1}. Finally, the subsequent steps, including data aggregation, computation of statistics, and the Decision Phase, are the same to those in Off-C$^2$LUB.

\subsection{Upper Bound for Off-CLUB under Sufficient Data}\label{subsection_theoretical_results_al1}
The effectiveness of Off-CLUB relies on accurately removing edges between users in different clusters, requiring \(\text{CI}_u\) to be sufficiently small. This is typically satisfied when users have many samples, while users with few samples may remain incorrectly connected, grouping heterogeneous users together. We therefore formalize the data sufficiency condition for the success of Off-CLUB:

\begin{assumption}[$\delta$-Sufficient Dataset]\label{assumption_sufficient_data}
The dataset \(\mathcal{D}_u\) is called $\delta$-sufficient for user \(u\) if $N_u\geq \max\left\{\frac{16}{\tilde \lambda_a^2}\log\left( \frac{8dU}{\tilde \lambda_a^2\delta} \right),\, \frac{512d}{\gamma^2\tilde \lambda_a}\log\left(\frac{2U}{\delta}\right)\right\}$ for some \(\delta \in (0, 1)\). Furthermore, we assume that the full dataset \(\mathcal{D} = \{\mathcal{D}_u\}_{u=1}^U\) is \textbf{completely $\delta$-sufficient}, i.e., each user's dataset \(\mathcal{D}_u\) is $\delta$-sufficient.
\end{assumption}

\Cref{assumption_sufficient_data} guarantees data sufficiency to estimate preferences and cluster accurately. Theorem~\ref{th1} (proof in Appendix~\ref{appa1}) guarantees the performance for Off-CLUB under this assumption.

\begin{theorem}[Upper Bound for Off-CLUB]\label{th1}
For any $\delta \in (0,\frac{1}{2})$, assume $\mathcal{D}$ is completely $\delta$-sufficient. Then for $\alpha = 1$, $\lambda > 0$, with probability at least $1-2\delta$, the suboptimality gap satisfies 
\begin{align*}
\textnormal{SubOpt}(u_{\textnormal{test}}, \textnormal{Off-CLUB}, \mathcal{A}_{\textnormal{test}}) \leq 
\tilde{O}\left(\sqrt{\frac{d}{\tilde \lambda_aN_{\mathcal{V}(j_{u_{\textnormal{test}}})}}} \right).
\end{align*}
\end{theorem}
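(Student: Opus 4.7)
The plan is to show that under the completely $\delta$-sufficient assumption Off-CLUB recovers the true partition \emph{exactly} with high probability, after which the pooled ridge estimator behaves as if it had been trained on a single linear model with all $N_{\mathcal{V}(j_{u_{\textnormal{test}}})}$ samples, and a standard pessimism argument delivers the claimed rate.

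First I would establish a single ``good'' event on which every per-user ridge estimator is accurate. By the self-normalized vector concentration bound for ridge regression and a union bound over the $U$ users, with probability at least $1-\delta$ we have $\|\hat{\bm\theta}_u-\bm\theta_u\|_{M_u}\leq \sqrt{d\log(1+N_u/(\lambda d))+2\log(2U/\delta)}+\sqrt{\lambda}$ for every $u$. To turn this into an $\ell_2$ bound I would apply a matrix-concentration argument in the vein of \citet{wang2023online2} under \Cref{assumption_action_regularity}: the first part of the $\delta$-sufficient condition, $N_u\geq \tfrac{16}{\tilde\lambda_a^2}\log(8dU/(\tilde\lambda_a^2\delta))$, is exactly the threshold that guarantees $\lambda_{\min}(M_u)\geq \tilde\lambda_a N_u/2$ for all $u$, giving $\|\hat{\bm\theta}_u-\bm\theta_u\|_2\leq \textnormal{CI}_u$ simultaneously.

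On this event I would then verify that the Cluster Phase recovers the true partition exactly. For any same-cluster pair, the triangle inequality yields $\|\hat{\bm\theta}_{u_1}-\hat{\bm\theta}_{u_2}\|_2\leq \textnormal{CI}_{u_1}+\textnormal{CI}_{u_2}$, so with $\alpha=1$ criterion \eqref{align_different_cluster_condition} fails and the edge is retained. For a different-cluster pair, the reverse triangle inequality combined with \Cref{assumption_heterogeneity_gap} gives $\|\hat{\bm\theta}_{u_1}-\hat{\bm\theta}_{u_2}\|_2\geq \gamma-\textnormal{CI}_{u_1}-\textnormal{CI}_{u_2}$; the second part of the $\delta$-sufficient condition, $N_u\geq\tfrac{512d}{\gamma^2\tilde\lambda_a}\log(2U/\delta)$, is calibrated (absorbing the $\log$ and $\sqrt{\lambda}$ slack into the constant $512$) so that $\textnormal{CI}_u<\gamma/4$, whence $\gamma-\textnormal{CI}_{u_1}-\textnormal{CI}_{u_2}>\textnormal{CI}_{u_1}+\textnormal{CI}_{u_2}$ and the edge is removed. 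Consequently $\tilde{\mathcal{G}}$ is a disjoint union of cliques on the true clusters and $\tilde N_{u_{\textnormal{test}}}=N_{\mathcal{V}(j_{u_{\textnormal{test}}})}$, with every aggregated sample drawn from the common linear model $\langle\bm\theta_{u_{\textnormal{test}}},\cdot\rangle$.

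Finally I would apply the same self-normalized concentration to the pooled estimator to obtain $\|\tilde{\bm\theta}_{u_{\textnormal{test}}}-\bm\theta_{u_{\textnormal{test}}}\|_{\tilde M_{u_{\textnormal{test}}}}\leq \beta$ on a second $1-\delta$ event. The standard pessimism calculation of \citet{jin2021pessimism,li2022pessimism} gives $\textnormal{SubOpt}\leq 2\beta\,\|\bm a^*_{\textnormal{test}}\|_{\tilde M_{u_{\textnormal{test}}}^{-1}}$, and reapplying the Step-1 minimum-eigenvalue bound to the pooled Gram matrix $\tilde M_{u_{\textnormal{test}}}$ yields $\|\bm a^*_{\textnormal{test}}\|_{\tilde M_{u_{\textnormal{test}}}^{-1}}\leq \sqrt{2/(\tilde\lambda_a N_{\mathcal{V}(j_{u_{\textnormal{test}}})})}$. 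Since $\beta=\tilde O(\sqrt{d})$, a union bound over the two failure events of probability $\delta$ each gives the stated $\tilde O\bigl(\sqrt{d/(\tilde\lambda_a N_{\mathcal{V}(j_{u_{\textnormal{test}}})})}\bigr)$ bound with probability at least $1-2\delta$. The hardest part will be the exact-clustering step: tightly calibrating the $\delta$-sufficient constants so that $\textnormal{CI}_u$ is simultaneously small enough to separate every heterogeneous pair yet leaves sufficient slack for the same-cluster triangle argument---this is precisely what pins down the $\gamma^{-2}$ scaling in the sample-size requirement. The matrix-concentration step that converts \Cref{assumption_action_regularity} into a deterministic $\lambda_{\min}(M_u)$ lower bound via the smoothed parameter $\tilde\lambda_a$ is the other delicate ingredient, as it underpins every passage from Mahalanobis to Euclidean confidence in the argument.
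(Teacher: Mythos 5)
Your proposal is correct and follows essentially the same route as the paper's proof: a union-bounded self-normalized concentration event converted to $\ell_2$ confidence via the $\lambda_{\min}(M_u)\geq\tilde\lambda_a N_u/2$ bound, exact cluster recovery from the two halves of the $\delta$-sufficiency condition (with $\mathrm{CI}_u\leq\gamma/4$ calibrating the separation), and then the standard pessimism bound $2\beta\|\bm a^*_{\textnormal{test}}\|_{\tilde M_{u_{\textnormal{test}}}^{-1}}$ on the pooled estimator. The only cosmetic difference is that you organize the failure probability as two clean $\delta$-events, whereas the paper threads slightly different intermediate constants ($1-\tfrac{3}{2}\delta$, etc.) before arriving at the same $1-2\delta$.
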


\begin{remark}[Analysis of \Cref{th1}]
A key requirement for achieving this bound is the sufficiency of data for each user (\Cref{assumption_sufficient_data}). When this assumption is violated, Off-CLUB may incorporate samples from \textbf{highly biased users} with preference vectors largely different with $u_{\text{test}}$, degrading performance. In contrast, as shown in \Cref{th2}, Off-C$^2$LUB mitigates this issue by restricting heterogeneous neighbors to a much smaller set \(\mathcal{W}_{\hat\gamma}(u_{\text{test}})\), controlled by \(\hat\gamma\). Thus, while Off-CLUB performs well with sufficient data, Off-C$^2$LUB remains a more robust algorithm in general scenarios where Assumption \ref{assumption_sufficient_data} does not hold. We validate this with experiments in Section~\ref{section_Experiment}.
\end{remark}

\subsection{Lower Bound for \oclub{} Problem and Optimality}\label{section_lower_bound}

To facilitate further comparison between Off-CLUB's and Off-C$^2$LUB's performance, we derive a theoretical lower bound for the \oclub{} problem. The proof is in Appendix~\ref{appa_lower_bound}. 

\begin{theorem}[\oclub{} Lower Bound]\label{th4}
Given any test user $u_{\textnormal{test}} \in \mathcal{U}$, consider the set of \oclub{} instances: $
\mathcal{I}_{u_{\textnormal{test}}} = \big\{ (\mathcal{D}, \Theta, R) \,\big|\, \big\|\left( M^{\mathcal{V}(j_{u_{\textnormal{test}}})} \right)^{-\frac{1}{2}} \bm{a}_{\textnormal{test}}^* \big\|_2 \leq 2\sqrt{2},\; R \textnormal{ is 1-subgaussian} \big\}$, where $\Theta=\{\bm{\theta}_u\}_{u \in \mathcal{V}}$, $\bm{a}_{\textnormal{test}}^*$ is the optimal action for user $u_{\textnormal{test}}$, and $M^{\mathcal{V}(j_{u_{\textnormal{test}}})}=\sum_{v\in\mathcal{V}(j_{u_{\textnormal{test}}})}M_v$. If the dataset satisfies $N_{u_{\textnormal{test}}}\geq 8d$, the following lower bound holds:
\begin{align*}
\inf_{\pi} \sup_{\mathcal{Q}\in\mathcal{I}_{u_{\textnormal{test}}}} \mathbb{E}[\textnormal{SubOpt}_{\mathcal{Q}}(u_{\textnormal{test}},\pi,\mathcal{A}_{\textnormal{test}})] \geq \sqrt{8d/N_{\mathcal{V}(j_{u_{\textnormal{test}}})}}.
\end{align*}
\end{theorem}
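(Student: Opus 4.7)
The plan is to establish the lower bound via Le Cam's two-point method applied to a carefully chosen pair of Gaussian-reward instances that exploit the cluster structure of \oclub{}. Since all users in the target cluster $\mathcal{V}(j_{u_{\textnormal{test}}})$ share the preference vector $\bm{\theta}^{j_{u_{\textnormal{test}}}}$, the $N_{\mathcal{V}(j_{u_{\textnormal{test}}})}$ offline samples pool into a single linear regression for $\bm{\theta}^{j_{u_{\textnormal{test}}}}$ with Gram matrix $M^{\mathcal{V}(j_{u_{\textnormal{test}}})}$. The first step is to restrict the class $\mathcal{I}_{u_{\textnormal{test}}}$ to Gaussian $1$-subgaussian noise $r \sim \mathcal{N}(\langle \bm{\theta}_u, \bm{a} \rangle, 1)$ (so that KL divergences take a closed form) and to freeze the data and preferences of all users outside the target cluster, thereby reducing the minimax problem to a hypothesis test on $\bm{\theta}^{j_{u_{\textnormal{test}}}}$ alone.

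Next, I construct two instances $\mathcal{Q}_\pm$ with cluster preferences $\bm{\theta}_\pm = \bm{\theta}_0 \pm \epsilon\,\bm{v}$, where $\bm{v} \propto (M^{\mathcal{V}(j_{u_{\textnormal{test}}})})^{-1} \bm{a}_{\textnormal{test}}^*$ and $\bm{\theta}_0$ is chosen so that $\langle \bm{\theta}_0, \bm{a}_{\textnormal{test}}^* \rangle = 0$. Taking $\mathcal{A}_{\textnormal{test}} = \{\bm{a}_{\textnormal{test}}^*, -\bm{a}_{\textnormal{test}}^*\}$ forces the optimal action to flip between the two hypotheses, so any wrong choice incurs suboptimality $2\epsilon\,|\langle \bm{v}, \bm{a}_{\textnormal{test}}^* \rangle|$. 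Under Gaussian noise the joint-data KL factorizes to $\textnormal{KL}(P_+\,\|\,P_-) = 2\epsilon^2\,\|\bm{v}\|_\Sigma^2$ with $\Sigma \preceq M^{\mathcal{V}(j_{u_{\textnormal{test}}})}$, and Le Cam's two-point inequality combined with Pinsker yields
\[
\sup_{\mathcal{Q} \in \{\mathcal{Q}_+, \mathcal{Q}_-\}} \mathbb{E}[\textnormal{SubOpt}_{\mathcal{Q}}(u_{\textnormal{test}}, \pi, \mathcal{A}_{\textnormal{test}})] \;\geq\; \epsilon\,\langle \bm{v}, \bm{a}_{\textnormal{test}}^* \rangle\,\bigl(1 - \sqrt{\textnormal{KL}/2}\bigr).
\]
Substituting $\bm{v} = (M^{\mathcal{V}(j_{u_{\textnormal{test}}})})^{-1} \bm{a}_{\textnormal{test}}^*$ and optimizing $\epsilon$ reduces the right-hand side to a constant multiple of $\|\bm{a}_{\textnormal{test}}^*\|_{(M^{\mathcal{V}(j_{u_{\textnormal{test}}})})^{-1}}$.

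To match the explicit $\sqrt{8d/N_{\mathcal{V}(j_{u_{\textnormal{test}}})}}$ rate, the third step is to instantiate a worst-case dataset $\mathcal{D}$ whose target-cluster actions spread evenly across the $d$ coordinate directions, so that $\Sigma \approx (N_{\mathcal{V}(j_{u_{\textnormal{test}}})}/d)\,I$ and $\|\bm{a}_{\textnormal{test}}^*\|_{(M^{\mathcal{V}(j_{u_{\textnormal{test}}})})^{-1}} = \Theta(\sqrt{d/N_{\mathcal{V}(j_{u_{\textnormal{test}}})}})$ for a unit-norm $\bm{a}_{\textnormal{test}}^*$. The hypothesis $N_{u_{\textnormal{test}}} \geq 8d$ ensures that $N_{\mathcal{V}(j_{u_{\textnormal{test}}})} \geq 8d$ as well, which both makes the class constraint $\|(M^{\mathcal{V}(j_{u_{\textnormal{test}}})})^{-1/2} \bm{a}_{\textnormal{test}}^*\|_2 \leq 2\sqrt{2}$ comfortably satisfied and guarantees that the target rate is consistent with the class (i.e., $\sqrt{8d/N_{\mathcal{V}(j_{u_{\textnormal{test}}})}} \leq 2\sqrt{2}$). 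The explicit $\sqrt{8}$ constant is then recovered by sharpening the Le Cam step---either via Bretagnolle--Huber in place of Pinsker, or by calibrating $\epsilon$ to saturate the class constraint and working directly with the exact Gaussian KL identity rather than the loose Pinsker upper bound.

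The main obstacle will be the joint calibration of $\epsilon$, the perturbation direction $\bm{v}$, and the dataset geometry: the two instances must remain inside $\mathcal{I}_{u_{\textnormal{test}}}$---in particular respecting the constraint on $\|(M^{\mathcal{V}(j_{u_{\textnormal{test}}})})^{-1/2} \bm{a}_{\textnormal{test}}^*\|_2$ and cleanly handling the ridge term $\lambda |\mathcal{V}(j_{u_{\textnormal{test}}})|\,I$ inside $M^{\mathcal{V}(j_{u_{\textnormal{test}}})}$ (absorbable into $\Sigma$ for small $\lambda$, but this must be verified without weakening the rate)---while simultaneously making the two data laws statistically indistinguishable and the induced action gap as large as possible. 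A secondary subtlety is closing the loss incurred by the Cauchy--Schwarz step used in bounding $\langle \bm{v}, \bm{a}_{\textnormal{test}}^* \rangle / \|\bm{v}\|_\Sigma$; pinning down the explicit $\sqrt{8}$ constant likely requires a direct evaluation at the specific choice $\bm{v} = (M^{\mathcal{V}(j_{u_{\textnormal{test}}})})^{-1} \bm{a}_{\textnormal{test}}^*$ rather than a generic intermediate estimate.
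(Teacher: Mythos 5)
Your high-level reduction is the same one the paper uses: observe that only the samples from $\mathcal{V}(j_{u_{\textnormal{test}}})$ carry information about $\bm\theta_{u_{\textnormal{test}}}$, pool them into a single linear regression with Gram matrix $M^{\mathcal{V}(j_{u_{\textnormal{test}}})}$, and reduce to a minimax lower bound for a single offline linear bandit with $N_{\mathcal{V}(j_{u_{\textnormal{test}}})}$ samples. The paper, however, stops there and simply invokes Theorem 2 of \citet{li2022pessimism} with $\Lambda=2\sqrt{2}$ and $p=q=2$; you instead attempt to prove that inner lower bound from scratch via Le Cam's two-point method. That is a legitimately more self-contained route, and your construction (perturbation along $(M^{\mathcal{V}(j_{u_{\textnormal{test}}})})^{-1}\bm a_{\textnormal{test}}^*$, action set $\{\bm a_{\textnormal{test}}^*,-\bm a_{\textnormal{test}}^*\}$, Gaussian KL, balanced worst-case design) is the standard and essentially correct way to get a bound of the form $c\,\|\bm a_{\textnormal{test}}^*\|_{(M^{\mathcal{V}(j_{u_{\textnormal{test}}})})^{-1}} \asymp c\sqrt{d/N_{\mathcal{V}(j_{u_{\textnormal{test}}})}}$.

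The genuine gap is the constant, and it is not a bookkeeping issue that ``sharpening the Le Cam step'' can fix. With the exact Gaussian total variation (which dominates both Pinsker and Bretagnolle--Huber here), the two-point bound is $g\cdot 2t\bigl(1-\Phi(t)\bigr)$ with $g=\|\bm a_{\textnormal{test}}^*\|_{(M^{\mathcal{V}(j_{u_{\textnormal{test}}})})^{-1}}$ and $t=\epsilon g$, and $\sup_{t>0}2t(1-\Phi(t))\approx 0.34$. Against the balanced design where $g\approx\sqrt{d/N_{\mathcal{V}(j_{u_{\textnormal{test}}})}}$, this yields roughly $0.34\sqrt{d/N_{\mathcal{V}(j_{u_{\textnormal{test}}})}}$, a factor of about $8$ below the stated $\sqrt{8d/N_{\mathcal{V}(j_{u_{\textnormal{test}}})}}=2\sqrt{2}\sqrt{d/N_{\mathcal{V}(j_{u_{\textnormal{test}}})}}$. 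The obstruction is information-theoretic: with only two hypotheses a tester is correct with probability at least $1/2$ regardless of separation, so no refinement of the divergence inequality can push the prefactor anywhere near $2\sqrt{2}$. To recover the stated constant you must replace the two-point step with a multi-hypothesis construction --- e.g., an Assouad-type hypercube of $2^d$ preference vectors for the cluster, where each of the $d$ coordinates contributes an independent testing error and the per-coordinate losses add --- which is precisely what the proof of the cited Theorem 2 in \citet{li2022pessimism} does. As written, your argument proves the theorem only up to an absolute constant, not with the constant $\sqrt{8}$ claimed in the statement; the remaining calibration issues you flag (the ridge term inside $M^{\mathcal{V}(j_{u_{\textnormal{test}}})}$, membership of both instances in $\mathcal{I}_{u_{\textnormal{test}}}$, what bounds $\epsilon$ when the design is degenerate in the $\bm a_{\textnormal{test}}^*$ direction) are real but secondary to this structural limitation.
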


\begin{remark}[Comparison Between Upper and Lower Bounds]
The lower bound in \Cref{th4} shows that the optimal strategy for \oclub{} is to use only samples from users in the same cluster as $u_{\text{test}}$ while excluding heterogeneous users. Recall that under Assumption~\ref{assumption_sufficient_data}, Theorem~\ref{th1} gives a bound scaling with $1/\sqrt{N_{\mathcal{V}(j_{u_{\text{test}}})}}$. Under the same assumption, Algorithm~\ref{al2} (Off-C$^2$LUB) with $\hat\gamma=\gamma$ achieves the same result as Theorem~\ref{th1} by \textbf{only} connecting all homogeneous users (as shown in \Cref{le_sizes_of_RandW} with $\alpha=1$).
This matches the lower bound up to logarithmic factors in terms of the total number of utilized samples $N_{\mathcal{V}(j_{u_{\text{test}}})}$, demonstrating near-optimal performance of both algorithms under Assumption \ref{assumption_sufficient_data}.
When data is sparse, however, Off-CLUB often clusters highly heterogeneous users and introduces bias, whereas Off-C$^2$LUB reduces this risk by restricting heterogeneous neighbors to $\mathcal{W}_{\hat\gamma}(u_{\text{test}})$, explaining the strength of Off-C$^2$LUB with insufficient data.
\end{remark}

\section{Experiments}\label{section_Experiment} 
\begin{figure*}[t]
    \centering
    \includegraphics[width=1\textwidth]{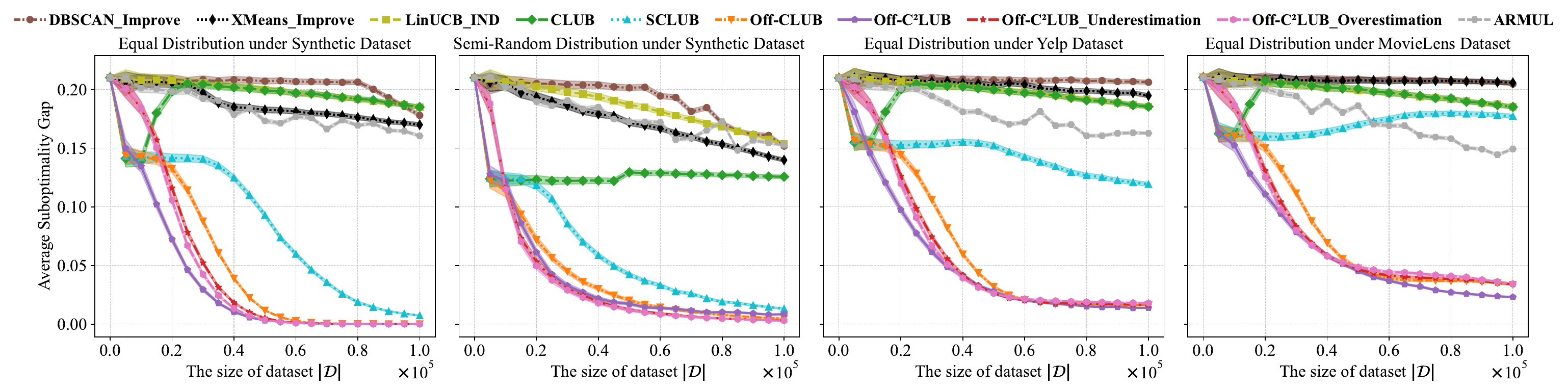}    
    \caption{Comparisons of our algorithms with baselines under different user distributions and datasets. Off-C$^2$LUB and its variants consistently outperform Off-CLUB and other baselines. 
    }
    \vskip -0.15in
    \label{fig:gap_vs_random}

\end{figure*}

\textbf{Experimental Setup.} We evaluate on both \textit{synthetic} and \textit{real-world datasets}, including Yelp~\citep{yelp_dataset} and MovieLens~\citep{harper2015movielens}. We fix \(U=1\text{k}\) users with preference dimension \(d=20\), and consider dataset sizes from \(5\text{k}\)–\(100\text{k}\) (insufficient data) and \(0.2\text{M}\)–\(1\text{M}\) (sufficient data), splitting each dataset evenly for training and evaluation. Synthetic users are divided into \(J=10\) clusters under two sampling schemes: \textit{Equal Distribution} (uniform over users) and \textit{Semi-Random Distribution} (uniform within clusters, non-uniform across clusters). For real-world data, we keep the top 1,000 users and items by rating count and derive preference vectors via SVD~\citep{li2019improved}. All results are averaged over 10 random runs. Baselines include bandit methods (LinUCB~\citep{abbasi2011improved}, CLUB~\citep{gentile2014online}, SCLUB~\citep{li2019improved}) and clustering methods (DBSCAN~\citep{schubert2017dbscan}, XMeans~\citep{pelleg2000x}, ARMUL~\citep{duan2023adaptive}), adapted to the offline setting. Offline data is generated either by random selection (Figure~\ref{fig:gap_vs_random}) or LinUCB-based item selection (Appendix~\ref{exp_data_dist}). Further details are given in Appendices~\ref{exp_setting}–\ref{exp_data_dist}.

\textbf{Performance with Insufficient Dataset.} To illustrate the performance under insufficient data, we report the average suboptimality gap across user distributions and dataset sizes (Figure~\ref{fig:gap_vs_random}). For Off-C$^2$LUB, $\hat{\gamma}$ is optimized under the equal distribution scenario and applied to the semi-random distribution. With $|\mathcal{D}| \in [20k, 100k]$, both Off-C$^2$LUB variants significantly reduce the suboptimality gap. On the synthetic dataset, Off-C$^2$LUB\_Overestimation improves over Off-CLUB by 44.2\% and over other baselines by at least 77.5\%, while Off-C$^2$LUB\_Underestimation achieves 34.0\% and 73.3\%, respectively. On Yelp, the corresponding improvements are 19.8\% and 73.9\% for Overestimation, and 17.9\% and 73.3\% for Underestimation. On MovieLens, they are 8.6\% and 67.9\% for Overestimation, and 9.3\% and 68.0\% for Underestimation. Overall, Off-C$^2$LUB achieves the best performance, while Off-CLUB, though less effective, still outperforms classical clustering baselines.

\textbf{Performance with Sufficient Data.} Table~\ref{tab:comparison_convergence_results_synthetic} reports the performance of different algorithms under sufficient data. The bold values mark the optimal suboptimal gaps (rounded to six decimals) for each dataset size. Our proposed Off-CLUB and Off-C$^2$LUB consistently outperform all other baselines in this regime, with the $\hat\gamma$ selection strategies for Off-C$^2$LUB also guarantee near optimal performances. This experiment shows that with sufficient data, both Off-CLUB and Off-C$^2$LUB approach to near-zero suboptimality as more offline data is provided, outperforming existing methods.

\textbf{Improvement with Estimated $\hat{\gamma}$.} We evaluate various strategies under insufficient data using synthetic and Yelp datasets (Figure~\ref{fig:gap_vs_gamma}) with $|\mathcal{D}| = 30k$. The results highlight that the suboptimality gap of Off-C$^2$LUB increases when $\hat{\gamma}$ is set either too small or too large, which aligns with our theoretical analysis of the noise–bias tradeoff in \Cref{le_sizes_of_RandW}, \Cref{th1}, and \Cref{fig:gamma_line}. On both the synthetic and Yelp datasets, the proposed overestimation and underestimation strategies achieve near-optimal performance. Additional experiments are provided in Appendix~\ref{appc}, where we further examine the robustness of our methods to varying dataset sizes and datasets. 

\begin{figure}[t]
  \centering
    \begin{minipage}[t]{0.48\textwidth}
    \centering
    \includegraphics[width=1\textwidth]{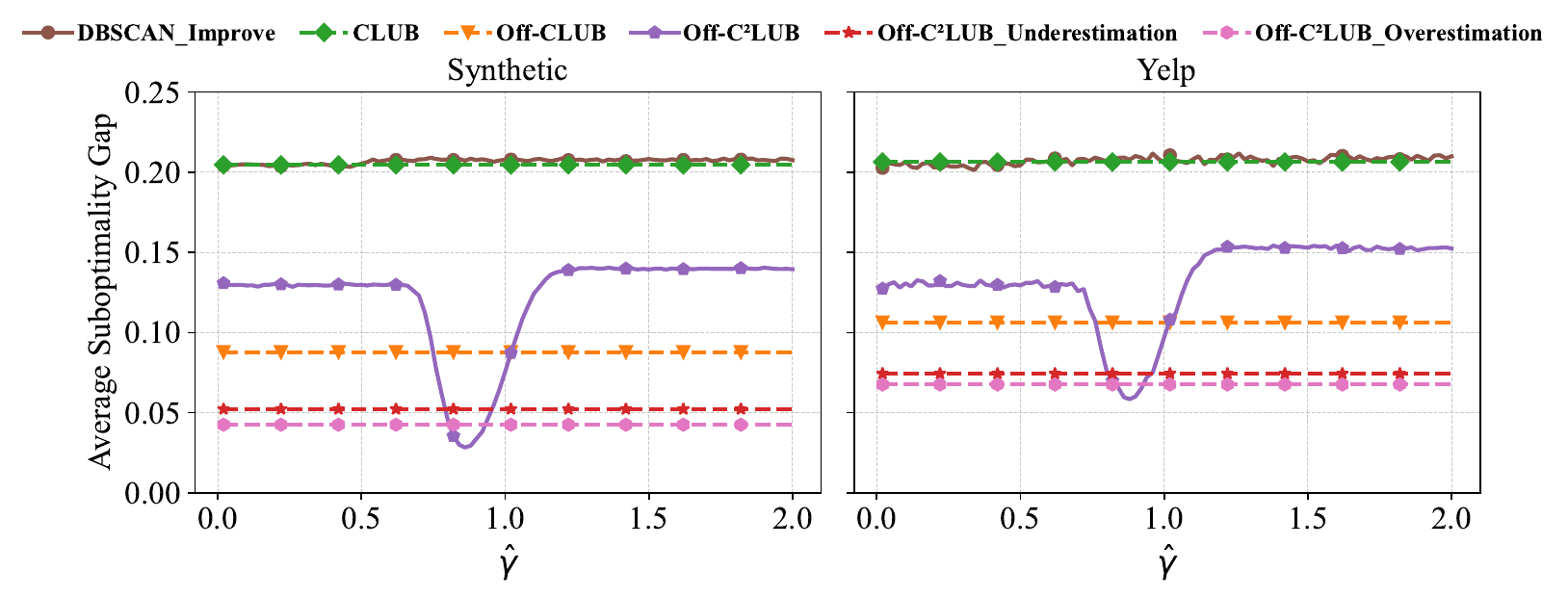}
    \caption{Suboptimality under varying $\hat{\gamma}$, showing Off-C$^2$LUB is near-optimal with our overestimation or underestimation strategies.}
    \label{fig:gap_vs_gamma}
    \end{minipage}
    \hfill
    \begin{minipage}[t]{0.48\textwidth}
\vspace{-1in}
\captionof{table}{
Comparisons under sufficient data show Off-CLUB and Off-C$^2$LUB reach near-zero suboptimality, outperforming baselines.
} 
\centering

\resizebox{\columnwidth}{!}{
\begin{tabular}{|l|c|c|c|c|c|}
\hline
\diagbox{\large Algorithm }{\large Dataset size} & \large $0.2M$ & \large $0.4M$ & \large $0.6M$ & \large $0.8M$ & \large $1M$ \\ \hline
DBSCAN\_Improve & 0.130230 & 0.058202 & 0.024840 & 0.011893 & 0.006512 \\
XMeans\_Improve & 0.130088 & 0.058248 & 0.024820 & 0.011908 & 0.007534 \\
CLUB & 0.144795 & 0.066638 & 0.028110 & 0.013647 & 0.007557 \\
Off-CLUB & \textbf{0.000037} & \textbf{0.000018} & 0.000013 & \textbf{0.000009} & \textbf{0.000007} \\
LinUCB\_IND & 0.144894 & 0.066417 & 0.028238 & 0.013664 & 0.007580 \\
SCLUB & 0.001633 & 0.000447 & 0.000165 & 0.000095 & 0.000057 \\
ARMUL & 0.146900 & 0.109480 & 0.078815 & 0.055882 & 0.051504 \\
Off-C$^2$LUB & 0.000070 & 0.000020 & \textbf{0.000012} & \textbf{0.000009} & \textbf{0.000007} \\
Off-C$^2$LUB\_Underestimation & \textbf{0.000037} & 0.000019 & 0.000013 & 0.000010 & 0.000008 \\
Off-C$^2$LUB\_Overestimation & 0.000042 & \textbf{0.000018} & \textbf{0.000012} & \textbf{0.000009} & \textbf{0.000007} \\
\hline
\end{tabular}%
}
\label{tab:comparison_convergence_results_synthetic}
    \end{minipage}
\vskip -0.15in
\end{figure}

\section{Conclusion}\label{section_Conclusion}
In this paper, we address the offline clustering of bandits (\oclub{}) problem, proposing two algorithms: Off-C$^2$LUB, effective in general scenarios, and Off-CLUB, which excels with sufficient data. We analyze the influence of \(\hat\gamma\) on Off-C$^2$LUB, offering policies for cases where \(\gamma\) is known or unknown. Additionally, we establish a theoretical lower bound for \oclub{}. Experiments on synthetic and real-world datasets validate the robustness and efficiency of our methods. Future work may explore more general reward structures beyond linear models that this paper does not cover, such as nonlinear or nonparametric formulations. It is also promising to consider hybrid settings that combine offline data with online interaction for improved adaptability in real-world scenarios.

\bibliography{main}
\bibliographystyle{plainnat}

\newpage

\appendix


\begin{table}[h!]
\centering
\caption{Summary of Notations for \oclub{}}
\label{tab:notation}
\small
\renewcommand{\arraystretch}{1.2}
\setlength{\tabcolsep}{8pt}
\begin{tabular}{|c|p{10cm}|}
\hline
\textbf{Notation} & \textbf{Description} \\
\hline
$\mathcal{U} = \{1, \dots, U\}$ & Set of all users; $U$ is the total number of users \\
$\bm\theta_u \in \mathbb{R}^d$ & Preference vector of user $u$ \\
$\mathcal{V}$ & Set of vertices in the user graph (same as $\mathcal{U}$) \\
$\mathcal{V}(j)$ & Set of users in cluster $j$; clusters are disjoint \\
$J$ & Total number of (unknown) clusters \\
$j_u$ & Cluster index to which user $u$ belongs \\
$\bm\theta^j$ & Shared preference vector of users in cluster $j$ \\
$\mathcal{D}_u = \{(\bm a_u^i, r_u^i)\}_{i=1}^{N_u}$ & Offline dataset for user $u$, containing $N_u$ samples \\
$\bm a_u^i \in \mathbb{R}^d$ & Action taken in the $i$-th sample of user $u$, with $\|\bm a_u^i\|_2 \le 1$ \\
$r_u^i \sim R(\bm a_u^i)$ & Reward corresponding to action $\bm a_u^i$, drawn from a 1-subgaussian distribution \\
$\mathbb{E}[r_u^i] = \langle \bm\theta_u, \bm a_u^i \rangle$ & Expected reward of user $u$ on action $\bm a_u^i$ \\
$\rho$ & Distribution over the action set, supported on $\|\bm a\|_2 \le 1$ \\
$\lambda_a$ & Minimum eigenvalue of $\mathbb{E}_{\bm a \sim \rho}[\bm a \bm a^\top]$ (action regularity) \\
$\mathcal{A}_{\text{test}} \subseteq \mathbb{R}^d$ & Action set available for the test-time decision \\
$u_{\text{test}}$ & Test user to be served \\
$\pi$ & Algorithm that selects actions for users \\
$\bm a_{\pi(u)}$ & Action selected by algorithm $\pi$ for user $u$ \\
$\bm a_{\text{test}}^*$ & Optimal action in $\mathcal{A}_{\text{test}}$ for $u_{\text{test}}$ \\
$\text{SubOpt}(u_{\text{test}}, \pi, \mathcal{A}_{\text{test}})$ & Suboptimality gap: difference between optimal and chosen reward for $u_{\text{test}}$ \\
$M_u$ & Gramian matrix for user $u$ regularized by the input parameter $\lambda$ \\
$\bm b_u$ & Moment matrix of regressand by regressors for user $u$ \\
$\text{CI}_u$ & Confidence interval for user $u$ \\
$\mathcal{R}_{\hat\gamma}(u)$ & User $u$ itself and its homogeneous neighbors \\
$\mathcal{W}_{\hat\gamma}(u)$ & User $u$'s heterogeneous neighbors \\
$\eta_{\mathcal{W}_{\hat\gamma}(u)}$ & Fraction of samples from heterogeneous neighbors of $u$ \\
\hline
\end{tabular}
\end{table}

\section{Additional Discussions on Action Regularity Assumptions}\label{app_without_item}

In this appendix, we provide a detailed discussion of \Cref{assumption_action_regularity}. This assumption is standard in the clustering of bandits literature~\citep{gentile2014online, li2018online, li2019improved, liu2022federated, wang2023online2, dai2024conversational, li2025demystifying, wang2025online}, although the precise formulation varies some across works. The version we adopt in this paper---where each offline action is drawn from a candidate set $\mathcal{S}_u^i$ of bounded size $S$ and the algorithm is assumed to know the smoothed regularity parameter $\tilde\lambda_a$---follows the most recent line of research~\citep{wang2023online2, dai2024conversational, wang2025online}. To highlight the differences, we first introduce an alternative assumption that can remove the finite candidate set requirement or the need for $\tilde\lambda_a$, at the cost of stronger independence or bounded-variance conditions. We then discuss how our analysis extends to cases where no regularity assumption is imposed.  

\begin{assumption}[Action Regularity (Alternative Form)]\label{assumption_action_regularity_additional} 
Let $\rho$ be a distribution over $\{\bm{a} \in \mathbb{R}^d : \|\bm{a}\|_2 \leq 1\}$ such that $\mathbb{E}_{\bm{a} \sim \rho}[\bm{a} \bm{a}^{\top}]$ is full rank with minimum eigenvalue $\lambda_a > 0$. Each action $\bm{a}_u^i$ in dataset $\mathcal{D}_u$ is either:  
(i) independently and identically drawn from $\rho$, or 
(ii) selected from a candidate set $\mathcal{S}_u^i$, where actions in $\mathcal{S}_u^i$ are independently drawn from $\rho$ and, for any unit vector $\bm\theta \in \mathbb{R}^d$, the random variable $(\bm\theta^{\top}\bm a)^2$ has sub-Gaussian tails with variance $\sigma^2 \leq \lambda_a^2/(8\log(4|\mathcal{S}_u^i|))$.  
We assume $\lambda_a$ is known to the algorithm. 
\end{assumption}

One crucial distinction between \Cref{assumption_action_regularity_additional} and \Cref{assumption_action_regularity} lies in the prior knowledge required: the former requires $\lambda_a$, whereas the latter uses the smoothed regularity parameter $\tilde\lambda_a$. When \Cref{assumption_action_regularity} does not hold but \Cref{assumption_action_regularity_additional} does, all our theoretical claims (such as \Cref{th2} and \Cref{th1}) remain valid after replacing every occurrence of $\tilde\lambda_a$ with $\lambda_a$ in the pseudo-code and analysis of Algorithms~\ref{al2} and~\ref{al1}, according to \Cref{le_lambdamin_iid} and \ref{le_lambdamin}.  

\paragraph{Interpretation of the Alternative Form.}  
\Cref{assumption_action_regularity_additional} accommodates two ways of generating offline data:  

\textbf{(1) The i.i.d. action model.} Each action is drawn i.i.d. from $\rho$, as motivated by the online exploration phase proposed by algorithms in recent online clustering of bandits work~\citep{li2025demystifying}, and widely adopted in offline RL/bandits literature~\citep{rashidinejad2021bridging, liu2025offline}. This setting is appropriate when offline data are sampled randomly from a continuous or otherwise infinite action space. However, it does not capture cases where actions are selected by a logging policy (e.g., LinUCB~\citep{abbasi2011improved}) which is common in real-world offline datasets and introduces dependencies, since a logging policy may violate the independency requirement.  

\textbf{(2) Dependent actions with bounded variance.} Following early clustering works~\citep{gentile2014online, li2018online, li2019improved, liu2022federated}, this model allows dependencies among actions but requires an additional bounded-variance condition. This assumption still relies on finite candidate sets $\mathcal{S}_u^i$ but depends directly on $\lambda_a$ rather than $\tilde\lambda_a$. However, as argued in~\citet{wang2023online}, bounded-variance assumptions may be unrealistically strong in practice and sometimes conflict with realistic data-generation processes.  

\paragraph{Why We Adopt \Cref{assumption_action_regularity} in the Main Body.}  
To remain consistent with state-of-the-art clustering of bandits work~\citep{wang2023online2, dai2024conversational, wang2025online} and to better model realistic offline data (e.g., those generated by logging policies that induce dependencies or violate bounded-variance conditions), we adopt \Cref{assumption_action_regularity} in the main body. This assumption provides an interpretable and practical balance: it requires finite candidate sets (a natural fit for recommendation systems and ranking tasks) and makes use of the smoothed parameter $\tilde\lambda_a$, which better captures the effective coverage of the action space under dependent samples. Importantly, our theoretical guarantees remain robust: they also hold under \Cref{assumption_action_regularity_additional}, with only minor substitutions in the analysis. This illustrates the generality of our framework.  

\begin{remark}[Beyond Regularity Assumptions]
Finally, we consider scenarios where neither \Cref{assumption_action_regularity} nor \Cref{assumption_action_regularity_additional} holds. In our proofs (see \Cref{appa}), the purpose of the regularity assumption is to ensure a lower bound on the minimum eigenvalue of the information matrix $M_u$ defined in \Cref{align_calculate_M_b_theta_CI} (i.e. $\lambda_{\min}(M_u)$), so that the confidence interval $\text{CI}_u$ can be controlled in terms of $N_u$, the number of samples, as represented in \Cref{le_lambdamin_iid}. Without regularity, the denominator of $\text{CI}_u$ must instead depend directly on $\sqrt{\lambda_{\min}(M_u)}$, as shown in \Cref{align_utilizing_direct_eigenvalue}. In this case, all performance bounds would explicitly depend on $\lambda_{\min}(M_u)$ rather than sample size.  
While this generalization is conceptually straightforward, in order to keep consistent with the standard assumptions used throughout the clustering of bandits literature~\citep{gentile2014online, li2018online, li2019improved, liu2022federated, wang2023online2, dai2024conversational, li2025demystifying, wang2025online}, we still adopt \Cref{assumption_action_regularity} in the main body and leave detailed analysis of non-regular settings as a promising direction for future work.
\end{remark}


\section{Additional Theoretical Results}\label{app_additional_theoretical_results}
\subsection{Fixed-Parameter Form for \Cref{th2}}\label{app_concrete_and_fixed_form}

By refining the role of $\alpha_r$ and $\alpha_w$, we can derive an upper bound for \Cref{th2} with fixed parameters. Specifically, we replace $N_{\mathcal{V}_{\hat\gamma}(u_{\text{test}})}$ with $N_{\mathcal{R}_{\hat\gamma}(u_{\text{test}})}$ in the denominator of the first term, select $\alpha_r$ at the lower end of its range (to conservatively estimate a subset of $\mathcal{R}_{\hat\gamma}(u_{\text{test}})$), and $\alpha_w$ at the upper end (to conservatively estimate a superset of $\mathcal{W}_{\hat\gamma}(u_{\text{test}})$). This yields the concrete form presented in \Cref{corollary_concrete_form}.

\begin{corollary}[Concrete Form of \Cref{th1}]\label{corollary_concrete_form}
Define $\tilde{\mathcal{R}}_{\hat\gamma}(u)$ and $\tilde{\mathcal{W}}_{\hat\gamma}(u)$ as the concrete instances of $\mathcal{R}_{\hat\gamma}(u)$ and $\mathcal{W}_{\hat\gamma}(u)$ in \Cref{le_sizes_of_RandW}, obtained by setting 
$\alpha_r=\frac{\sqrt{\tilde \lambda_a}}{4(\alpha+1)\sqrt{\log(2U/\delta)\max\{2,d\}}}$ and 
$\alpha_w=\frac{\sqrt{\tilde \lambda_a}}{2(\alpha-1)\sqrt{\log(2U/\delta)}}$.  
Let $\tilde{\mathcal{V}}_{\hat\gamma}(u) = \tilde{\mathcal{R}}_{\hat\gamma}(u) \cup \tilde{\mathcal{W}}_{\hat\gamma}(u)$ and define
\(
\eta_{\tilde{\mathcal{W}}_{\hat\gamma}(u)} = 
\frac{|\tilde{\mathcal{W}}_{\hat\gamma}(u)|\lambda + N_{\tilde{\mathcal{W}}_{\hat\gamma}(u)}}
     {|\tilde{\mathcal{V}}_{\hat\gamma}(u)|\lambda + N_{\tilde{\mathcal{V}}_{\hat\gamma}(u)}}.
\)
Then, with probability at least $1-2\delta$, the suboptimality of Off-C$^2$LUB satisfies
\begin{align}
\textnormal{SubOpt}(u_{\textnormal{test}}, \textnormal{Off-C}^2\textnormal{LUB}, \mathcal{A}_{\textnormal{test}})
\leq \tilde O \left( 
\sqrt{\frac{d}{\tilde \lambda_a N_{\tilde{\mathcal R}_{\hat\gamma}(u_{\text{test}})}}}
+ \frac{\eta_{\tilde{\mathcal{W}}_{\hat\gamma}(u_{\text{test}})} \hat\gamma}{\tilde \lambda_a} 
\right).
\end{align}
\end{corollary}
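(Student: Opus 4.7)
The plan is to obtain Corollary~\ref{corollary_concrete_form} directly from Theorem~\ref{th2} together with Lemma~\ref{le_sizes_of_RandW}, by replacing the existentially-quantified constants $\alpha_r,\alpha_w$ with their extreme values and then showing that each term of the bound only grows under this replacement. First I would invoke Lemma~\ref{le_sizes_of_RandW} to fix two specific parameter choices: the lower endpoint $\alpha_r = \tfrac{\sqrt{\tilde\lambda_a}}{4(\alpha+1)\sqrt{\log(2U/\delta)\max\{2,d\}}}$ and the upper endpoint $\alpha_w = \tfrac{\sqrt{\tilde\lambda_a}}{2(\alpha-1)\sqrt{\log(2U/\delta)}}$. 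Plugging these into the explicit descriptions of $\mathcal{R}_{\hat\gamma}(u)$ and $\mathcal{W}_{\hat\gamma}(u)$ in Lemma~\ref{le_sizes_of_RandW} defines the fixed sets $\tilde{\mathcal{R}}_{\hat\gamma}(u)$ and $\tilde{\mathcal{W}}_{\hat\gamma}(u)$.

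Next I would establish the set-inclusion relations $\tilde{\mathcal{R}}_{\hat\gamma}(u)\subseteq \mathcal{R}_{\hat\gamma}(u)$ and $\tilde{\mathcal{W}}_{\hat\gamma}(u)\supseteq \mathcal{W}_{\hat\gamma}(u)$, which follow by inspecting the form of the constraints: the constraint $\tfrac{1}{\sqrt{N_u}}+\tfrac{1}{\sqrt{N_v}}\le \alpha_r\hat\gamma$ is monotonically restrictive in $\alpha_r$, so the smallest admissible $\alpha_r$ in the stated range only makes membership harder, yielding a subset of the homogeneous-neighbor set; an analogous argument gives the superset for $\tilde{\mathcal{W}}_{\hat\gamma}(u)$. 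These inclusions in turn yield $N_{\tilde{\mathcal{R}}_{\hat\gamma}(u)} \le N_{\mathcal{R}_{\hat\gamma}(u)} \le N_{\mathcal{V}_{\hat\gamma}(u)}$, so that the noise term satisfies
\[
\sqrt{\tfrac{d}{\tilde\lambda_a N_{\mathcal{V}_{\hat\gamma}(u_{\text{test}})}}} \;\le\; \sqrt{\tfrac{d}{\tilde\lambda_a N_{\tilde{\mathcal{R}}_{\hat\gamma}(u_{\text{test}})}}},
\]
giving a valid upper bound for the first summand in Theorem~\ref{th2}.

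For the bias term I would prove the monotonicity $\eta_{\mathcal{W}_{\hat\gamma}(u)}\le \eta_{\tilde{\mathcal{W}}_{\hat\gamma}(u)}$. Writing $a=|\mathcal{W}_{\hat\gamma}(u)|\lambda+N_{\mathcal{W}_{\hat\gamma}(u)}$, $b=|\mathcal{R}_{\hat\gamma}(u)|\lambda+N_{\mathcal{R}_{\hat\gamma}(u)}$, and similarly $\tilde a,\tilde b$ for the tilde sets, the disjointness $\mathcal{R}\cap\mathcal{W}=\emptyset$ and $\tilde{\mathcal{R}}\cap\tilde{\mathcal{W}}=\emptyset$ gives $\eta=\tfrac{a}{a+b}$ and $\tilde\eta=\tfrac{\tilde a}{\tilde a+\tilde b}$. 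The set inclusions from the previous paragraph yield $\tilde a\ge a$ and $\tilde b\le b$, and since $x\mapsto x/(x+c)$ is increasing and $c\mapsto x/(x+c)$ is decreasing, chaining these two monotonicities delivers $\tilde\eta\ge \eta$. Combining this with the noise bound above completes the derivation.

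The main conceptual hurdle is keeping the inequality directions consistent: one must replace the true (unknown) $\mathcal{R}$ by a \emph{subset} and the true $\mathcal{W}$ by a \emph{superset} in order for both the noise term and the bias fraction to move in the direction of an upper bound simultaneously; a naive substitution using a superset of $\mathcal{V}_{\hat\gamma}(u)$ would make the noise term decrease rather than increase and thus would not give a valid bound. Once the correct extremes are chosen, the rest is routine monotonicity and the $\tilde O(\cdot)$ from Theorem~\ref{th2} carries over unchanged.
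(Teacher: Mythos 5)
Your proposal is correct and follows essentially the same route the paper takes: the paper proves this corollary only via the sketch preceding it (choose $\alpha_r$ at the lower endpoint to get a subset $\tilde{\mathcal{R}}_{\hat\gamma}\subseteq\mathcal{R}_{\hat\gamma}$, choose $\alpha_w$ at the upper endpoint to get a superset $\tilde{\mathcal{W}}_{\hat\gamma}\supseteq\mathcal{W}_{\hat\gamma}$, then substitute into Theorem~\ref{th2}), and your monotonicity arguments for both the noise term and the ratio $\eta$ correctly fill in the details that the paper leaves implicit.
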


\subsection{Additional Discussions on $\hat\gamma$ Selection Policies}\label{app_hatgamma_selection}

\begin{theorem}[Effect of Underestimation Policy]\label{lemma_unknown_hat_gamma_min}
With \(\hat\gamma\) chosen according to the underestimation policy in \Cref{align_hatgamma_selection_policies} and $\alpha_w' = \frac{\sqrt{\tilde \lambda_a}}{4(\alpha+1)\sqrt{\log(2U/\delta)\max\{2,d\}}}$, any user $v$ in the heterogeneous neighbor set $\mathcal{W}_{\hat\gamma}(u_{\text{test}})$ of \Cref{le_sizes_of_RandW} also satisfies 
\(\frac{1}{\sqrt{N_{u_{\text{test}}}}} + \frac{1}{\sqrt{N_v}} \geq \alpha_w' \left\| \bm\theta_{u_{\text{test}}} - \bm\theta_v \right\|_2.\)
\end{theorem}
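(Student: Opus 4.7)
The plan is to show that when a heterogeneous user $v\in\mathcal{W}_{\hat\gamma}(u_{\text{test}})$ is connected to $u_{\text{test}}$ under the underestimation choice of $\hat\gamma$, the gap $\|\bm\theta_{u_{\text{test}}}-\bm\theta_v\|_2$ must be controlled by $\text{CI}_{u_{\text{test}}}+\text{CI}_v$, which in turn is controlled by $1/\sqrt{N_{u_{\text{test}}}}+1/\sqrt{N_v}$. First I would dispose of the trivial case $M(u_{\text{test}})=\emptyset$, where $\hat\gamma=0<\gamma$ forces $\mathcal{W}_{\hat\gamma}(u_{\text{test}})=\emptyset$ by \Cref{le_sizes_of_RandW}, making the claim vacuous.

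For the non-trivial case, take any $v\in\mathcal{W}_{\hat\gamma}(u_{\text{test}})$. Because $v$ was connected to $u_{\text{test}}$, the cluster condition \eqref{align_cluster_condition_al2} rewrites as $\Gamma(u_{\text{test}},v)<\hat\gamma$. The key step is a contradiction argument showing $\Gamma(u_{\text{test}},v)\le 0$: if instead $\Gamma(u_{\text{test}},v)>0$, then $v\in M(u_{\text{test}})$, and the underestimation rule $\hat\gamma=\min_{v'\in M(u_{\text{test}})}\Gamma(u_{\text{test}},v')$ would force $\hat\gamma\le\Gamma(u_{\text{test}},v)<\hat\gamma$, a contradiction. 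Hence $\|\hat{\bm\theta}_{u_{\text{test}}}-\hat{\bm\theta}_v\|_2\le\alpha(\text{CI}_{u_{\text{test}}}+\text{CI}_v)$. I would then invoke the standard single-user confidence event $\|\hat{\bm\theta}_u-\bm\theta_u\|_2\le\text{CI}_u$ (which holds with probability $1-\delta$ and is already established inside the proof of \Cref{le_sizes_of_RandW}) and apply the triangle inequality to obtain
\[
\|\bm\theta_{u_{\text{test}}}-\bm\theta_v\|_2\le(\alpha+1)\bigl(\text{CI}_{u_{\text{test}}}+\text{CI}_v\bigr).
\]

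Finally, I would convert each $\text{CI}_u$ into an explicit $1/\sqrt{N_u}$ bound. The two input conditions in \Cref{le_sizes_of_RandW} imply $\sqrt\lambda\le\sqrt{2\log(2U/\delta)\max\{2,d\}}$ and $d\log(1+N_u/(\lambda d))+2\log(2U/\delta)\le(d+2)\log(2U/\delta)\le 2\log(2U/\delta)\max\{2,d\}$, which together yield $\text{CI}_u\le 4\sqrt{\log(2U/\delta)\max\{2,d\}}/\sqrt{\tilde\lambda_a N_u}$. Plugging this into the triangle-inequality bound and rearranging gives exactly $1/\sqrt{N_{u_{\text{test}}}}+1/\sqrt{N_v}\ge\alpha_w'\|\bm\theta_{u_{\text{test}}}-\bm\theta_v\|_2$ with $\alpha_w'$ as stated.

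The main obstacle will be the sign case-analysis on $\Gamma(u_{\text{test}},v)$ together with the bookkeeping needed to recover the precise constant $1/[4(\alpha+1)\sqrt{\log(2U/\delta)\max\{2,d\}}]$; this is where the two regularity-style conditions on $\lambda$ and $\delta$ from \Cref{le_sizes_of_RandW} must be used carefully to absorb both the $\sqrt\lambda$ term in the numerator of $\text{CI}_u$ and the logarithmic term $d\log(1+N_u/(\lambda d))$ into a single $\log(2U/\delta)\max\{2,d\}$ factor. Everything else, including the triangle inequality and the high-probability confidence event, is standard and inherited directly from the proof of \Cref{le_sizes_of_RandW}.
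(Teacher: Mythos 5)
Your proposal is correct and follows essentially the same route as the paper's proof: reduce to the case $\Gamma(u_{\text{test}},v)\le 0$ (the paper does this via a two-case split on $\Gamma\le 0$ versus $\Gamma\ge\hat\gamma$, you via a direct contradiction with the connection condition — the same logic), then apply the triangle inequality on the confidence event and the bound $\text{CI}_u\le 4\sqrt{\max\{2,d\}\log(2U/\delta)}/\sqrt{\tilde\lambda_a N_u}$ to recover $\alpha_w'$. One cosmetic note: the cluster condition \eqref{align_cluster_condition_al2} rewrites as $\tilde\Gamma(u_{\text{test}},v)<\hat\gamma$ rather than $\Gamma(u_{\text{test}},v)<\hat\gamma$, but since $\Gamma\le\tilde\Gamma$ the implication you actually use is valid.
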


\begin{remark}[Interpretation of Underestimation Policy]
Since $\Gamma(u,v) \leq \|\bm\theta_u - \bm\theta_v\|_2$, all users in $M(u_{\text{test}})$ must belong to clusters different from $u_{\text{test}}$. By taking the minimum over these values, the underestimation policy enforces a small $\hat\gamma$, which substantially limits the size of $\mathcal{W}_{\hat\gamma}(u_{\text{test}})$. This prevents the inclusion of highly biased heterogeneous samples, thereby reducing systematic bias in estimation. The additional sample-complexity condition in \Cref{lemma_unknown_hat_gamma_min} further guarantees that only heterogeneous users with insufficient data can enter $\mathcal{W}_{\hat\gamma}(u_{\text{test}})$, minimizing their overall impact. However, this conservative choice of $\hat\gamma$ also restricts the homogeneous set $\mathcal{R}_{\hat\gamma}(u_{\text{test}})$, leaving fewer samples available for aggregation and thus amplifying noise. The underestimation policy is therefore most suitable for safety-critical applications where bias is more damaging than noise, such as personalized medicine or risk-sensitive decision making. 
\end{remark}

\begin{theorem}[Effect of Overestimation Policy]\label{lemma_effect_of_optimistic_estimation}
With $\hat\gamma$ chosen according to the overestimation policy in \Cref{align_hatgamma_selection_policies}, if $M(u_{\text{test}})\neq\emptyset$, then $\hat\gamma \geq \gamma$.
\end{theorem}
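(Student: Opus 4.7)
The plan is to show that every $v\in M(u_{\textnormal{test}})$ must lie in a different cluster than $u_{\textnormal{test}}$, and then to lower bound $\tilde\Gamma(u_{\textnormal{test}},v)$ by the true preference gap $\|\bm\theta_{u_{\textnormal{test}}}-\bm\theta_v\|_2$, which by the heterogeneity gap (Definition \ref{assumption_heterogeneity_gap}) is at least $\gamma$. Taking the minimum over $v\in M(u_{\textnormal{test}})$ then gives the claim. The whole argument is triangle-inequality plus a standard confidence-ball event.

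First, I would condition on the high-probability event, inherited from the confidence interval construction in \Cref{align_calculate_M_b_theta_CI} (the same event underlying \Cref{le_sizes_of_RandW}), that $\|\hat{\bm\theta}_u-\bm\theta_u\|_2\le \mathrm{CI}_u$ simultaneously for all $u\in\mathcal{V}$. On this event, for any pair $(u,v)$ the reverse triangle inequality gives
\[
\|\hat{\bm\theta}_u-\hat{\bm\theta}_v\|_2 \;\le\; \|\bm\theta_u-\bm\theta_v\|_2+\mathrm{CI}_u+\mathrm{CI}_v,
\]
so $\Gamma(u,v)=\|\hat{\bm\theta}_u-\hat{\bm\theta}_v\|_2-\alpha(\mathrm{CI}_u+\mathrm{CI}_v)\le \|\bm\theta_u-\bm\theta_v\|_2$ whenever $\alpha\ge 1$. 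Hence if $v\in M(u_{\textnormal{test}})$, i.e.\ $\Gamma(u_{\textnormal{test}},v)>0$, then $\|\bm\theta_{u_{\textnormal{test}}}-\bm\theta_v\|_2>0$, which forces $u_{\textnormal{test}}$ and $v$ into different clusters, and therefore $\|\bm\theta_{u_{\textnormal{test}}}-\bm\theta_v\|_2\ge\gamma$ by Definition \ref{assumption_heterogeneity_gap}.

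Next, I would apply the ordinary triangle inequality in the other direction to push the true gap underneath $\tilde\Gamma$:
\[
\|\bm\theta_{u_{\textnormal{test}}}-\bm\theta_v\|_2 \;\le\; \|\hat{\bm\theta}_{u_{\textnormal{test}}}-\hat{\bm\theta}_v\|_2+\mathrm{CI}_{u_{\textnormal{test}}}+\mathrm{CI}_v \;\le\; \tilde\Gamma(u_{\textnormal{test}},v),
\]
where the last step uses $\alpha\ge 1$. Chaining with the previous inequality yields $\tilde\Gamma(u_{\textnormal{test}},v)\ge \gamma$ for every $v\in M(u_{\textnormal{test}})$. Finally, taking the minimum over this nonempty set and applying the definition of the overestimation policy in \Cref{align_hatgamma_selection_policies} gives $\hat\gamma\ge\gamma$.

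The only delicate point, which I would flag as the main obstacle, is cleanly identifying the high-probability event on which $\|\hat{\bm\theta}_u-\bm\theta_u\|_2\le \mathrm{CI}_u$ holds uniformly over all users: this is the same event used in \Cref{le_sizes_of_RandW} and relies on \Cref{assumption_action_regularity} together with the smoothed regularity parameter $\tilde\lambda_a$ appearing in the denominator of $\mathrm{CI}_u$. Everything else is just two triangle-inequality manipulations and an $\alpha\ge 1$ absorption, so the statement follows on that event (which carries probability at least $1-\delta$) without additional machinery.
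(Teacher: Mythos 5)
Your proposal is correct and follows essentially the same route as the paper's proof: establish that every $v\in M(u_{\textnormal{test}})$ lies in a different cluster (via $\Gamma(u,v)\le\|\bm\theta_u-\bm\theta_v\|_2$ on the confidence event), then use the triangle inequality and $\alpha\ge 1$ to get $\tilde\Gamma(u_{\textnormal{test}},v)\ge\|\bm\theta_{u_{\textnormal{test}}}-\bm\theta_v\|_2\ge\gamma$, and take the minimum. Your write-up is in fact more careful than the paper's (which states these steps tersely), particularly in making explicit the high-probability confidence event on which the argument is conditioned.
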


\begin{remark}[Interpretation of Overestimation Policy]
By construction, the overestimation policy ensures $\hat\gamma \geq \gamma$. This guarantees that more homogeneous users of $u_{\text{test}}$ with sufficient data will be included in $\mathcal{V}_{\hat\gamma}(u_{\text{test}})$ under the looser requirement of sample size as represented in \Cref{le_sizes_of_RandW}. As a result, the aggregated dataset becomes significantly larger, which decreases variance and reduces noise in preference estimation. The drawback, however, is that this looser threshold also admits some heterogeneous users into $\mathcal{W}_{\hat\gamma}(u_{\text{test}})$, thereby introducing additional bias. The overestimation policy is thus well-suited for data-rich environments where variance dominates bias—for example, recommender systems with high-dimensional preference vectors, where aggregating large amounts of diverse data is essential for stable learning. 
\end{remark}

\section{Detailed Proofs}\label{appa}
\subsection{Proof of \Cref{le_sizes_of_RandW}}\label{appendix_proof_of_lemma_RandW}

We first define several events to bound the estimates. Let $\beta(\mathcal{S}, \delta) = \sqrt{d\log\left(1 + \frac{\sum_{u \in \mathcal{S}} N_u}{|\mathcal{S}| \lambda d}\right) + 2\log\left(\frac{2U}{\delta}\right)} + \sqrt{|\mathcal{S}| \lambda}$ for any set $\mathcal{S}$. We begin by defining the following confidence event for each user $u$:
\[
\mathcal{F}_u'(\delta) = \left\{ \left\| \bm\theta_u - \hat{\bm\theta}_u \right\|_{M_u} \leq \beta(\{u\}, \delta) \right\},
\]
where 
\[
M^{\mathcal{S}} = \lambda |\mathcal{S}| \cdot I + \sum_{u \in \mathcal{S}} \sum_{i=1}^{N_u} \bm a_u^i (\bm a_u^i)^{\top}, \quad 
\bm b^{\mathcal{S}} = \sum_{u \in \mathcal{S}} \sum_{i=1}^{N_u} r_u^i \bm a_u^i, \quad 
\bm\theta^{\mathcal{S}} = (M^{\mathcal{S}})^{-1} \bm b^{\mathcal{S}}.
\]

By Lemma~\ref{le_ellipsoid}, we have $\mathbb{P}[\mathcal{F}_v'(\delta)] \geq 1 - \frac{\delta}{2U}$ for each \(v \in \mathcal{U}\). Therefore, applying the union bound over all users yields:
\[
\mathbb{P}\left[ \bigcap_{v \in \mathcal{U}} \mathcal{F}_v'(\delta) \right] \geq 1 - \frac{\delta}{2}.
\]

Furthermore, for both \(\mathcal R_{\hat\gamma}(u)\) and \(\mathcal W_{\hat\gamma}(u)\), the clustering condition in \Cref{align_cluster_condition_al2} requires that a user \(v\) can be included in either set only if both \(N_u\) and \(N_v\) are at least \(\frac{16}{\tilde \lambda_a^2} \log\left( \frac{8dU}{\tilde \lambda_a^2 \delta} \right)\). Hence, under the case where \(N_u < \frac{16}{\tilde \lambda_a^2} \log\left( \frac{8dU}{\tilde \lambda_a^2 \delta} \right)\), the sets $\mathcal{R}_{\hat\gamma}(u)=\{u\}$ and $\mathcal{W}_{\hat\gamma}(u)=\emptyset$ since the cluster condition in \Cref{align_cluster_condition_al2} does not hold for any user $v$.

For the non-trivial case where \(N_u \geq \frac{16}{\tilde \lambda_a^2} \log\left( \frac{8dU}{\tilde \lambda_a^2 \delta} \right)\), we apply \Cref{assumption_action_regularity}, Lemma J.1 in~\citet{wang2023online2}, Lemma 7 in~\citet{li2018online} and the union bound to obtain that for any \(v \in \mathcal R_{\hat\gamma}(u) \cup \mathcal W_{\hat\gamma}(u)\), with probability at least \(1 - \frac{3}{2} \delta\), the following inequality holds:
\[
\left\| \bm\theta_v - \hat{\bm\theta}_v \right\|_2 
\leq \frac{\left\| \bm\theta_v - \hat{\bm\theta}_v \right\|_{M_v}}{\sqrt{\lambda_{\min}(M_v)}}
\leq \frac{\beta(\{v\}, \delta)}{\sqrt{\lambda + \tilde \lambda_a N_v / 2}}
\leq \text{CI}_v.
\]

Here, since $\hat\gamma \geq 0$, for any pair of connected users $(u_1, u_2)$, we have:
\begin{align*}
\|\bm\theta_{u_1} - \bm\theta_{u_2}\|_2 
&= \|\bm\theta_{u_1} - \hat{\bm\theta}_{u_1} + \hat{\bm\theta}_{u_1} - \hat{\bm\theta}_{u_2} + \hat{\bm\theta}_{u_2} - \bm\theta_{u_2}\|_2 \\
&\leq \|\bm\theta_{u_1} - \hat{\bm\theta}_{u_1}\|_2 + \|\hat{\bm\theta}_{u_1} - \hat{\bm\theta}_{u_2}\|_2 + \|\hat{\bm\theta}_{u_2} - \bm\theta_{u_2}\|_2 \\
&\leq \text{CI}_{u_1} + \|\hat{\bm\theta}_{u_1} - \hat{\bm\theta}_{u_2}\|_2 + \text{CI}_{u_2} \leq \hat\gamma,
\end{align*}
where the last inequality follows from the connection condition in Algorithm~\ref{al2}. Therefore, $\mathcal R_{\hat\gamma}(u)$ includes users that share the same preference vector as user $u$, while $\mathcal W_{\hat\gamma}(u)$ includes users whose preference vectors differ from that of $u$ by at most $\hat\gamma = \gamma + \varepsilon$.

Next, we prove \Cref{le_sizes_of_RandW}, beginning with the representation of $\mathcal{R}_{\hat\gamma}(u)$ and then proceeding to that of $\mathcal{W}_{\hat\gamma}(u)$. Throughout, we focus on the case where $N_u \geq N_{\min}$.

\paragraph{Proof of $\mathcal{R}_{\hat\gamma}(u)$.}

We begin by analyzing the set $\mathcal{R}_{\hat\gamma}(u)$. Specifically, it suffices to show: With probability at least $1-2\delta$,  
(i) for $\alpha_{r1} = \tfrac{\sqrt{\tilde \lambda_a}}{4(\alpha+1)\sqrt{\log(2U/\delta)\max\{2, d\}}}$, any user $v$ with $\bm\theta_u = \bm\theta_v$, $1/\sqrt{N_u} + 1/\sqrt{N_v} \leq \alpha_{r1}\hat\gamma$, and $N_v \geq N_{\min}$ is included in $\mathcal{R}_{\hat\gamma}(u)$ when $N_u \geq N_{\min}$;  
(ii) for $\alpha_{r2} = \tfrac{\sqrt{\tilde \lambda_a}}{2\alpha\sqrt{\log(2U/\delta)}}$, any user $v$ with $\bm\theta_u \neq \bm\theta_v$, or $1/\sqrt{N_u} + 1/\sqrt{N_v} > \alpha_{r2}\hat\gamma$, or $N_v < N_{\min}$ cannot be included in $\mathcal{R}_{\hat\gamma}(u)$.

\textit{Proof of (i).} Note that the condition \( 1/\sqrt{N_u} + 1/\sqrt{N_v} \leq \alpha_{r1} \hat\gamma \) implies
\begin{align*}
(\alpha+1)(\text{CI}_u + \text{CI}_v)
&\leq \frac{2(\alpha+1)\sqrt{4\max\{2,d\} \log(\frac{2U}{\delta})}}{\sqrt{\tilde \lambda_a N_u}} + \frac{2(\alpha+1)\sqrt{4\max\{2,d\} \log(\frac{2U}{\delta})}}{\sqrt{\tilde \lambda_a N_v}} \\
&< \hat\gamma,
\end{align*}
under the parameter selections for $\lambda$ and $\delta$ such that 
\[
\lambda \leq d \log\left( 1 + \frac{\min_w \{N_w\}}{\lambda d} \right) + 2 \log\left( \frac{2U}{\delta} \right), \quad
\log\left( 1 + \frac{\max_w\{N_w\}}{\lambda d} \right) \leq \log\left( \frac{2U}{\delta} \right).
\]

Given that \( \bm\theta_u = \bm\theta_v \), we further observe:
\begin{align*}
\hat\gamma 
&> (\alpha+1)(\text{CI}_u + \text{CI}_v) \\
&= \alpha(\text{CI}_u + \text{CI}_v) + \|\bm\theta_u - \bm\theta_v\|_2 + \text{CI}_u + \text{CI}_v \\
&\geq \|\hat{\bm\theta}_u - \hat{\bm\theta}_v\|_2 + \alpha(\text{CI}_u + \text{CI}_v),
\end{align*}
with probability at least \( 1 - 2\delta \). This directly implies that the connection condition in Algorithm~\ref{al2} is satisfied, and therefore user \( v \) is included in \( \mathcal R_{\hat\gamma}(u) \).

\textit{Proof of (ii).} If user $v$ satisfies either $\bm\theta_u \neq \bm\theta_v$ or $N_v < N_{\min}$, it is straightforward that $v \notin \mathcal{R}_{\hat\gamma}(u)$. We therefore focus on showing that the condition $1/\sqrt{N_u} + 1/\sqrt{N_v} > \alpha_{r2}\hat\gamma$ also implies $v \notin \mathcal{R}_{\hat\gamma}(u)$.

Note that the condition that $1/\sqrt{N_u}+1/\sqrt{N_v}>\alpha_{r2}\hat\gamma$ implies
$$\alpha(\text{CI}_u+\text{CI}_v)>2\alpha\left(\sqrt\frac{\log\left( \frac{2U}{\delta} \right)}{\tilde \lambda_aN_u} + \sqrt\frac{\log\left( \frac{2U}{\delta} \right)}{\tilde \lambda_aN_v}\right)>\hat\gamma.$$
Hence the connecting condition in \Cref{align_cluster_condition_al2} does not hold due to the positive value of $\left\| \hat{\bm\theta}_{u_1} - \hat{\bm\theta}_{u_2} \right\|_2$ and $v\notin\mathcal{R}_{\hat\gamma}(u)$ holds.

\paragraph{Proof of $\mathcal{W}_{\hat\gamma}(u)$.}
First it is trivial that $\alpha_w>0$ holds. As we have proved that for any $v\in\mathcal{W}_{\hat\gamma}(u)$ it holds that $\gamma\leq\|\bm\theta_u - \bm\theta_v\|_2\leq\hat\gamma$, therefore it suffices to show that if \( 1/\sqrt{N_u} + 1/\sqrt{N_v} > \frac{\varepsilon\sqrt{\tilde \lambda_a}}{2(\alpha - 1)\sqrt{\log(2U/\delta)}} \) and \( \|\bm\theta_u - \bm\theta_v\|_2 \geq \gamma \), then user \(v\) is not included in \( \mathcal{W}_{\hat\gamma}(u) \) with probability at least \(1 - 2\delta\) when $\min\{N_u,N_v\}\geq N_{\min}$.

Observe that the condition \( 1/\sqrt{N_u} + 1/\sqrt{N_v} > \frac{\varepsilon\sqrt{\tilde \lambda_a}}{2(\alpha - 1)\sqrt{\log(2U/\delta)}} \) implies:
\begin{align*}
(\alpha - 1)(\text{CI}_u + \text{CI}_v) 
&\geq (\alpha - 1)\left( \sqrt{\frac{4 \log\left( \frac{2U}{\delta} \right)}{\tilde \lambda_a N_u}} + \sqrt{\frac{4 \log\left( \frac{2U}{\delta} \right)}{\tilde \lambda_a N_v}} \right) \\
&> \varepsilon \geq \hat\gamma - \|\bm\theta_u - \bm\theta_v\|_2.
\end{align*}

Therefore, we can deduce that:
\begin{align*}
\left\| \hat{\bm\theta}_u - \hat{\bm\theta}_v \right\|_2 + \alpha(\text{CI}_u + \text{CI}_v) 
&\geq \alpha(\text{CI}_u + \text{CI}_v) + \|\bm\theta_u - \bm\theta_v\|_2 - \text{CI}_u - \text{CI}_v \\
&> \hat\gamma,
\end{align*}
which means that the connection condition in \Cref{align_cluster_condition_al2} does not hold. Consequently, user \(v\) is not connected to \(u\), and therefore not included in \(\mathcal{W}_{\hat\gamma}(u)\) with probability at least \(1 - 2\delta\), provided that \(\alpha_w \in \left(0, \tfrac{\sqrt{\tilde \lambda_a}}{2(\alpha-1)\sqrt{\log(2U/\delta)}}\right)\).

\subsection{Proof of \Cref{th2}}\label{appa2}

\begin{proof}[Proof of \Cref{th2}]

To prove \Cref{th2}, we first recall and several key notations and introduce some new notations in Table \ref{tab:notation_summary_appendix}.

\begin{table}[h]
    \vskip -0.1in
    \caption{Notations in Proofs}
    \centering
    \small
    \renewcommand{\arraystretch}{1.2}
    \begin{tabular}{|c|p{10cm}|}
        \hline
        \textbf{Notation} & \textbf{Description} \\ 
        \hline
        \( \mathcal V_{\hat\gamma}(u) \) & Set consisting of user \( u \) and its neighbors in the graph \( \mathcal{G}_{\hat\gamma} \). \\
        \hline
        \( \mathcal R_{\hat\gamma}(u) \) & Set of users in \( \mathcal V_{\hat\gamma}(u) \) that share the same preference vector as user \( u \). \\
        \hline
        \( \mathcal W_{\hat\gamma}(u) \) & Set of users in \( \mathcal V_{\hat\gamma}(u) \) that do \textit{not} share the same preference vector as user \( u \). \\
        \hline
        \( \mathcal W_{\hat\gamma}^j(u) \) & Set of users in \( \mathcal W_{\hat\gamma}(u) \) who belong to cluster \( \mathcal V(j) \). \\
        \hline
        \( \mathcal J_{\hat\gamma}(u) \) & Set of indices of distinct clusters represented in \( \mathcal W_{\hat\gamma}(u) \). \\
        \hline
    \end{tabular}
    \vskip -0.1in
    \label{tab:notation_summary_appendix}
\end{table}

\vspace{0.5em}
To characterize the contributions of different types of neighbors in the estimation process, we define the weighted fraction of samples from \( \mathcal R_{\hat\gamma}(u) \) relative to \( \mathcal V_{\hat\gamma}(u) \) as:
\[
\eta_{\mathcal R_{\hat\gamma}(u)} = \frac{|\mathcal R_{\hat\gamma}(u)|\lambda + N_{\mathcal R_{\hat\gamma}(u)}}{|\mathcal V_{\hat\gamma}(u)|\lambda + N_{\mathcal V_{\hat\gamma}(u)}}.
\]
Here, \(\lambda\) is the regularization parameter used in constructing each user's matrix \(M_u\). Analogous definitions apply to \(\eta_{\mathcal W_{\hat\gamma}(u)}\) and \(\eta_{\mathcal W_{\hat\gamma}^j(u)}\), and by construction, we have:
\[
\eta_{\mathcal R_{\hat\gamma}(u)} + \eta_{\mathcal W_{\hat\gamma}(u)} = 1,
\]
since \(\mathcal V_{\hat\gamma}(u) = \mathcal R_{\hat\gamma}(u) \cup \mathcal W_{\hat\gamma}(u)\) and the two sets are disjoint.

We now turn to bounding the suboptimality term $\text{SubOpt}(u, \pi, \mathcal{A}_{\text{test}}) = \left\langle \bm\theta_u, \bm a_u^* \right\rangle - \left\langle \bm\theta_u, \bm a_{\text{test}} \right\rangle$. Firstly, the case where \(N_u < N_{\min}\) is trivial since the problem can be simplified to an offline linear contextual bandit problem in~\citep{li2022pessimism} as the algorithm only makes use of the data from user $u$ itself. For the non-trivial case where \(N_u \geq N_{\min}\), recall that in \Cref{appendix_proof_of_lemma_RandW} we proved that $\left\| \bm\theta_v - \hat{\bm\theta}_v \right\|_2 \leq \text{CI}_v$ for any user $v$ with probability at least $1-\frac{3}{2}\delta$, and $\left\| \bm\theta_{u_1} - \bm\theta_{u_2} \right\|_2 \leq \hat\gamma$ holds for any connected users $(u_1,u_2)$.

For notational simplicity, we let:
\[
V = \mathcal V_{\hat\gamma}(u),\quad
R = \mathcal R_{\hat\gamma}(u),\quad
W = \mathcal W_{\hat\gamma}(u),\quad
W^j = \mathcal W_{\hat\gamma}^j(u),\quad
\tilde J = \mathcal J_{\hat\gamma}(u).
\]
Moreover, for a user set $\mathcal{S}$, we define the following aggregated statistics:
\[
M^{\mathcal{S}} = \lambda|\mathcal{S}| I + \sum_{u \in \mathcal{S}} \sum_{i=1}^{N_u} \bm a_u^i (\bm a_u^i)^\top,\quad
\bm b^{\mathcal{S}} = \sum_{u \in \mathcal{S}} \sum_{i=1}^{N_u} r_u^i \bm a_u^i,\quad
\hat{\bm\theta}^{\mathcal{S}} = (M^{\mathcal{S}})^{-1} \bm b^{\mathcal{S}}.
\]

Recall that $\bm b^V = \bm b^R + \sum_{j \in \tilde J} \bm b^{W^j}$ and $M^V = M^R + \sum_{j \in \tilde J} M^{W^j}$, we can decompose $\hat{\bm\theta}^V$ as:
\begin{align*}
\hat{\bm\theta}^V 
&= (M^V)^{-1} \bm b^V 
= (M^V)^{-1} (\bm b^R + \sum_{j \in \tilde J} \bm b^{W^j}) \\
&= (M^V)^{-1} M^R (M^R)^{-1} \bm b^R + \sum_{j \in \tilde J} (M^V)^{-1} M^{W^j} (M^{W^j})^{-1} \bm b^{W^j} \\
&= (M^V)^{-1} M^R \hat{\bm\theta}^R + \sum_{j \in \tilde J} (M^V)^{-1} M^{W^j} \hat{\bm\theta}^{W^j}.
\end{align*}

Let $\varepsilon_v^i=r_v^i-\bm\theta_v^{\top}\bm a_v^i$ denote the error for the sample $(\bm a_v^i,r_v^i)\in\mathcal{D}_v$ where $i\in[N_v]$. Hence, we have:
\begin{align}
\left\langle\hat{\bm\theta}^V - \bm\theta_u, \bm a_{\text{test}} \right\rangle 
&= \left\langle (M^V)^{-1} M^R \hat{\bm\theta}^{R} + \sum_{j \in \tilde J} (M^V)^{-1} M^{W^j} \hat{\bm\theta}^{W^j} - \bm\theta_u, \bm a_{\text{test}} \right\rangle \label{align_decomposing_V} \\
&= \left\langle (M^V)^{-1} M^R (\hat{\bm\theta}^{R} - \bm\theta_u) + \sum_{j \in \tilde J} (M^V)^{-1} M^{W^j} (\hat{\bm\theta}^{W^j} - \bm\theta_u), \bm a_{\text{test}} \right\rangle \notag \\
&= \left\langle (M^V)^{-1} M^R (\hat{\bm\theta}^{R} - \bm\theta_u), \bm a_{\text{test}} \right\rangle 
+ \left\langle \sum_{j \in \tilde J} (M^V)^{-1} M^{W^j} (\hat{\bm\theta}^{W^j} - \bm\theta_u), \bm a_{\text{test}} \right\rangle \notag \\
&= \left\langle (M^V)^{-1} \left( \bm b^R - M^R \bm\theta_u + \sum_{j \in \tilde J} (\bm b^{W^j} - M^{W^j} \bm\theta^j) \right), \bm a_{\text{test}} \right\rangle \notag \\
&\quad + \left\langle \sum_{j \in \tilde J} (M^V)^{-1} M^{W^j} (\bm\theta^j - \bm\theta_u), \bm a_{\text{test}} \right\rangle \notag \\
&= \left\langle (M^V)^{-1} \sum_{v \in V} \sum_{i=1}^{N_v} \varepsilon_v^i \bm a_v^i, \bm a_{\text{test}} \right\rangle 
- \left\langle (M^V)^{-1} \lambda \left( |R| \bm\theta_u + \sum_{j \in \tilde J} |W^j| \bm\theta^j \right), \bm a_{\text{test}} \right\rangle \notag \\
&\quad + \left\langle \sum_{j \in \tilde J} (M^V)^{-1} M^{W^j} (\bm\theta^j - \bm\theta_u), \bm a_{\text{test}} \right\rangle \label{align_decomposing_MV}
\end{align}

Applying the Cauchy–Schwarz inequality, we obtain from \Cref{align_decomposing_MV}:
\begin{align}
\left\langle\hat{\bm\theta}^V - \bm\theta_u, \bm a_{\text{test}} \right\rangle\leq\;& \left\| (M^V)^{-1/2} \bm a_{\text{test}} \right\|_2 
\cdot \left\| \sum_{v \in V} \sum_{i=1}^{N_v} \varepsilon_v^i \bm a_v^i \right\|_{(M^V)^{-1}} \notag \\
&+ \left\| \lambda (M^V)^{-1} \left( |R| \bm\theta_u + \sum_{j \in \tilde J} |W^j| \bm\theta^j \right) \right\|_2 \cdot \| \bm a_{\text{test}} \|_2 \notag \\
&+ \left\| \sum_{j \in \tilde J} (M^V)^{-1} M^{W^j} (\bm\theta^j - \bm\theta_u) \right\|_2\cdot \left\| \bm a_{\text{test}} \right\|_2 \label{align_cauchy}
\end{align}

We now bound each term using matrix norm properties and known results from \Cref{align_cauchy}:
\begin{align}
\left\langle\hat{\bm\theta}^V - \bm\theta_u, \bm a_{\text{test}} \right\rangle\leq\;& \frac{\beta(V, \delta)}{\sqrt{\lambda_{\min}(V)}} 
+ \frac{\lambda |V|}{\lambda_{\min}(V)} 
+ \frac{\lambda_{\max}(W)}{\lambda_{\min}(V)} \cdot \hat\gamma \label{align_utilizing_direct_eigenvalue} \\
\leq\;& \frac{\sqrt{2} \beta(V, \delta)}{\sqrt{\tilde \lambda_a N_V}} 
+ \frac{2\lambda U}{\tilde \lambda_a N_V} 
+ \frac{2 \eta_{W}}{\tilde \lambda_a} \cdot \hat\gamma 
\eqqcolon \Psi \label{align_utilizing_eigenvalue}
\end{align}
with probability at least \(1 - 2\delta\).

Here, \Cref{align_decomposing_V} follows from the decomposition \(M^V = M^R + \sum_{j \in \tilde J} M^{W^j}\); \Cref{align_decomposing_MV} uses the fact that \(M^\mathcal{S} = \lambda|\mathcal{S}| I + \sum_{v \in \mathcal{S}} \sum_{i=1}^{N_v} \bm a_v^i (\bm a_v^i)^\top\) and the definition of reward noise \(\varepsilon_v^i\). \Cref{align_cauchy} is due to the Cauchy–Schwarz inequality. The first inequality in \Cref{align_utilizing_eigenvalue} uses eigenvalue bounds and Theorem 1 in~\citep{abbasi2011improved}, while the second inequality follows from bounding eigenvalues via the number of samples and applying \Cref{assumption_action_regularity}, Lemma J.1 in~\citet{wang2023online2} and Lemma 7 in~\citet{li2018online}.

Therefore, we obtain:
\[
-\langle \bm\theta_u, \bm a_{\text{test}} \rangle 
\leq \Psi - \left\langle \hat{\bm\theta}^V, \bm a_{\text{test}} \right\rangle.
\]
Recall that the selected action \(\bm a_{\text{test}}\) satisfies the confidence bound:
\[
\left(\hat{\bm\theta}^V \right)^{\top} \bm a_{\text{test}} - \beta(V, \delta) \| \bm a_{\text{test}} \|_{(M^V)^{-1}} 
\geq \left(\hat{\bm\theta}^V \right)^{\top} \bm a_u^* - \beta(V, \delta) \| \bm a_u^* \|_{(M^V)^{-1}}.
\]
Hence, we can derive the following bound for the suboptimality:
\begin{align*}
\text{SubOpt}(u, \pi, \mathcal{A}_{\text{test}}) 
&= \langle \bm\theta_u, \bm a_u^* \rangle - \langle \bm\theta_u, \bm a_{\text{test}} \rangle \\
&\leq \langle \bm\theta_u, \bm a_u^* \rangle - \left\langle \hat{\bm\theta}^V, \bm a_{\text{test}} \right\rangle + \Psi \\
&\leq \left\langle \bm\theta_u - \hat{\bm\theta}^V, \bm a_u^* \right\rangle 
+ \beta(V, \delta) \left( \| \bm a_u^* \|_{(M^V)^{-1}} - \| \bm a_{\text{test}} \|_{(M^V)^{-1}} \right) + \Psi \\
&\leq 2\Psi + \frac{\sqrt{2}\beta(V, \delta)}{\sqrt{\lambda_{\min}(M^V)}} \\
&\leq 2\Psi + \frac{\beta(V, \delta)}{\sqrt{\tilde \lambda_a N_V}}
\end{align*}
with probability at least \(1 - 2\delta\). By omitting the logarithmic and constant terms in the above inequality we complete the proof of \Cref{th2}.
\end{proof}

\subsection{Proof of Lemma \ref{lemma_unknown_hat_gamma_min}}

\begin{proof}[Proof of Lemma \ref{lemma_unknown_hat_gamma_min}]
Recall that we define
\begin{align}\label{align_proof_definition_of_hat_gamma_unknown}
\hat{\gamma} = \mathbb{I}\{M(u_{\textnormal{test}}) \neq \emptyset\} \cdot \min_{v \in M(u_{\textnormal{test}})} \Gamma(u_{\textnormal{test}}, v),
\end{align}
where \( M(u_{\textnormal{test}}) \) is the set of users for which the computed value \( \Gamma(u_{\textnormal{test}}, v) \) is positive.

Let \( s = \arg\min_{\substack{v \neq u_{\text{test}}}} \{ \Gamma(u_{\text{test}}, v) > 0 \} \), which corresponds to the user achieving the minimum positive value and hence defines \(\hat{\gamma}\). Then, for any other user \(v\), we must have either \(\Gamma(u_{\text{test}}, v) \leq 0\) or \(\Gamma(u_{\text{test}}, v) \geq \hat{\gamma}\).

We will show that any user included in \(\mathcal{W}_{\hat\gamma}(u_{\text{test}})\), as represented in \Cref{le_sizes_of_RandW}, also satisfies the condition in Lemma~\ref{lemma_unknown_hat_gamma_min}. For simplicity, we denote \( u = u_{\text{test}} \) in the rest of the proof.

We first consider the case where \( \Gamma(u, v) \leq 0 \). This implies:
\begin{align}\label{align_bound_for_large_Gamma_correct}
\left\| \hat{\bm\theta}_u - \hat{\bm\theta}_v \right\|_2 - \alpha(\text{CI}_u + \text{CI}_v) \leq 0.
\end{align}

By the triangle inequality, we have:
\begin{align*}
\|\bm\theta_u - \bm\theta_v\|_2 
&\leq \left\| \hat{\bm\theta}_u - \hat{\bm\theta}_v \right\|_2 + \|\hat{\bm\theta}_u - \bm\theta_u\|_2 + \|\hat{\bm\theta}_v - \bm\theta_v\|_2 \\
&\leq \left\| \hat{\bm\theta}_u - \hat{\bm\theta}_v \right\|_2 + \text{CI}_u + \text{CI}_v \\
&\leq (\alpha + 1)(\text{CI}_u + \text{CI}_v).
\end{align*}

Furthermore, using the definition of \(\text{CI}_u\) and \(\text{CI}_v\), we obtain:
\[
\|\bm\theta_u - \bm\theta_v\|_2 \leq \frac{4(\alpha + 1) \sqrt{\log(2U/\delta)\max\{2,d\}}}{\sqrt{\tilde \lambda_a}} \left(\frac{1}{\sqrt{N_u}} + \frac{1}{\sqrt{N_v}}\right).
\]
This implies that
\[
\frac{1}{\sqrt{N_u}} + \frac{1}{\sqrt{N_v}} \geq \frac{\sqrt{\tilde \lambda_a}}{4(\alpha + 1)\sqrt{\log(2U/\delta)\max\{2,d\}}}\|\bm\theta_u - \bm\theta_v\|_2.
\]
Thus, by letting \( \alpha_w' = \frac{\sqrt{\tilde \lambda_a}}{4(\alpha + 1)\sqrt{\max\{2,d\}\log(2U/\delta)}} \), we obtain the desired result:
\[
\frac{1}{\sqrt{N_u}} + \frac{1}{\sqrt{N_v}} \geq \alpha_w' \|\bm\theta_u - \bm\theta_v\|_2.
\]
Then for the second case that $\Gamma(u_{\text{test}},v)\geq\hat\gamma$, it can be proved that
\[
\left\| \bm\theta_{u} - \bm\theta_v \right\|_2 \geq \left\| \hat{\bm\theta}_{u} - \hat{\bm\theta}_v \right\|_2 - \alpha\left( \text{CI}_{u} + \text{CI}_v \right) \geq \hat\gamma.
\]
Thus $v\notin\mathcal{W}_{\hat\gamma}(u)$ holds. This ends the proof of \Cref{lemma_unknown_hat_gamma_min}.
\end{proof}

\subsection{Proof of Lemma \ref{lemma_effect_of_optimistic_estimation}}
\begin{proof}[Proof of Lemma \ref{lemma_effect_of_optimistic_estimation}]
The proof of Lemma \ref{lemma_effect_of_optimistic_estimation} follows from the fact that for all \( v \in M(u_{\textnormal{test}}) \), as $M(u_{\text{test}})$ denotes the users belong to different clusters of $u_{\text{test}}$. By combining \Cref{align_definition_of_Gamma}, we have 
\[\tilde\Gamma(u_{\textnormal{test}}, v) \geq \|\bm\theta_{u_{\textnormal{test}}} - \bm\theta_v\|_2 \geq \gamma.\]
Therefore, as long as \( M(u_{\textnormal{test}}) \neq \emptyset \), the optimistic estimation ensures that \(\hat\gamma\) is at least \(\gamma\).
\end{proof}

\subsection{Proof of Theorem \ref{th1}}\label{appa1}

\begin{proof}[Proof of Theorem \ref{th1}]
Let \( \beta(n, \delta) = \sqrt{d \log\left(1 + \frac{n}{\lambda d}\right) + 2 \log\left(\frac{1}{\delta}\right)} + \sqrt{\lambda} \), and let \( \tilde{j}_u \) denote the cluster index of user \( u \) in the clustering output \( \tilde{\mathcal{G}} \). We denote by \( \tilde{\mathcal{V}}(j) \) the \( j \)-th cluster in \( \tilde{\mathcal{G}} \).

We first introduce the following events:
\begin{align}\label{def_events}
\mathcal{E} 
&= \left\{ \text{All clusters are correctly identified, i.e., } \mathcal{V}(j_u) = \tilde{\mathcal{V}}(\tilde{j}_u) \text{ for all } u \in \mathcal{V} \right\}, \notag \\
\mathcal{F}_u(\delta) 
&= \left\{ \left\| \bm\theta_u - \tilde{\bm\theta}_u \right\|_{\tilde{M}_u} \leq \beta(\tilde{N}_u, \delta) \right\}, \quad \forall u \in \mathcal{V}, \notag \\
\mathcal{F}'_u(\delta) 
&= \left\{ \left\| \bm\theta_u - \hat{\bm\theta}_u \right\|_{M_u} \leq \beta(N_u, \delta) \right\}, \quad \forall u \in \mathcal{V}.
\end{align}

We first verify that event \( \mathcal{E} \) holds with high probability. According to Lemma~\ref{le_ellipsoid}, the event \( \mathcal{F}'_u(\delta / (2U)) \) holds for all \( u \in \mathcal{V} \) with probability at least \( 1 - \frac{1}{2}\delta \). Under this event, we have:
\begin{align}
\left\| \hat{\bm\theta}_u - \bm\theta_u \right\|_2 
&\leq \frac{ \left\| \hat{\bm\theta}_u - \bm\theta_u \right\|_{M_u} }{ \sqrt{ \lambda_{\min}(M_u) } } \notag \\
&\leq \frac{ \beta(N_u, \delta / (2U)) }{ \sqrt{ \lambda + \lambda_{\min}\left( \sum_{i=1}^{N_u} \bm a_u^i (\bm a_u^i)^{\top} \right) } } \notag \\
&\leq \frac{ \sqrt{ d \log\left(1 + \frac{N_u}{\lambda d} \right) + 2 \log\left( \frac{2U}{\delta} \right) } + \sqrt{\lambda} }{ \sqrt{ \lambda + \tilde \lambda_a N_u / 2 } } \notag \\
&\leq \alpha \cdot \mathrm{CI}_u \leq \frac{\gamma}{4}, \label{eq:cluster_high_prob_bound}
\end{align}
with probability at least \( 1 - \frac{3}{2} \delta \), for any \( u \) such that
\[
N_u \geq \max\left\{ \frac{512 d}{\gamma^2 \tilde \lambda_a} \log\left( \frac{2U}{\delta} \right), \ \frac{16}{\tilde \lambda_a^2} \log\left( \frac{8dU}{\tilde \lambda_a^2 \delta} \right) \right\}.
\]
The last inequality in \Cref{eq:cluster_high_prob_bound} follows from Lemma~\ref{le_less_than_gamma}.

Now consider the connection rule in Algorithm~\ref{al1}, specifically the separation condition in \Cref{align_different_cluster_condition}. For any users \( u_1, u_2 \in \mathcal{V} \), the algorithm distinguishes them into different clusters if
\[
\alpha(\mathrm{CI}_{u_1} + \mathrm{CI}_{u_2}) < \left\| \hat{\bm\theta}_{u_1} - \hat{\bm\theta}_{u_2} \right\|_2.
\]
From the triangle inequality, we have:
\begin{align*}
\left\| \hat{\bm\theta}_{u_1} - \hat{\bm\theta}_{u_2} \right\|_2 
&\leq \left\| \bm\theta_{u_1} - \bm\theta_{u_2} \right\|_2 + \left\| \hat{\bm\theta}_{u_1} - \bm\theta_{u_1} \right\|_2 + \left\| \hat{\bm\theta}_{u_2} - \bm\theta_{u_2} \right\|_2 \\
&\leq \left\| \bm\theta_{u_1} - \bm\theta_{u_2} \right\|_2 + \alpha \cdot \mathrm{CI}_{u_1} + \alpha \cdot \mathrm{CI}_{u_2}.
\end{align*}

Therefore, combining both bounds gives:
\[
\alpha(\mathrm{CI}_{u_1} + \mathrm{CI}_{u_2}) < \left\| \bm\theta_{u_1} - \bm\theta_{u_2} \right\|_2 + \alpha(\mathrm{CI}_{u_1} + \mathrm{CI}_{u_2}),
\]
which implies \( \left\| \bm\theta_{u_1} - \bm\theta_{u_2} \right\|_2 > 0 \), and since the true cluster gap is at least \( \gamma \), we conclude:
\[
\left\| \bm\theta_{u_1} - \bm\theta_{u_2} \right\|_2 \geq \gamma,
\]
i.e., \( u_1 \) and \( u_2 \) belong to different clusters. This confirms that the algorithm correctly separates users from different clusters with high probability, and hence event \( \mathcal{E} \) holds with probability at least \( 1 - \frac{3}{2} \delta \).

On the other hand, for users \( u_1, u_2 \) that do not satisfy the separation condition, we have:
\begin{align*}
\gamma 
&\geq 2\alpha(\mathrm{CI}_{u_1} + \mathrm{CI}_{u_2}) 
\geq \left\| \hat{\bm\theta}_{u_1} - \hat{\bm\theta}_{u_2} \right\|_2 + \alpha(\mathrm{CI}_{u_1} + \mathrm{CI}_{u_2}) \\
&\geq \left\| \bm\theta_{u_1} - \hat{\bm\theta}_{u_1} \right\|_2 
+ \left\| \hat{\bm\theta}_{u_1} - \hat{\bm\theta}_{u_2} \right\|_2 
+ \left\| \bm\theta_{u_2} - \hat{\bm\theta}_{u_2} \right\|_2 
\geq \left\| \bm\theta_{u_1} - \bm\theta_{u_2} \right\|_2.
\end{align*}
This implies that \( u_1 \) and \( u_2 \) must belong to the same cluster. Therefore, the clustering result satisfies event \( \mathcal{E} \) with probability at least \( 1 - \delta \). 

Furthermore, due to the homogeneity of users within each cluster \( \tilde{\mathcal{V}}(j) \), the event \( \mathcal{F}'_u(\delta/(2U)) \) also holds for all \( u \in \mathcal{V} \) with probability at least \( 1 - \delta \).

\vspace{0.5em}
We now analyze the suboptimality of \textnormal{Off-CLUB}. Recall that:
\[
\text{SubOpt}(u_{\text{test}}, \text{Off-CLUB}, A_{\mathrm{test}}) = \left\langle \bm\theta_u, \bm a_{\text{test}}^* \right\rangle - \left\langle \bm\theta_u, \bm a_{\text{test}} \right\rangle.
\]

When event \( \mathcal{F}_u(\delta / (2U)) \) holds for all \( u \in \mathcal{V} \), by the Cauchy–Schwarz inequality we have:
\[
\left\langle \tilde{\bm\theta}_u - \bm\theta_u, \bm a_{\text{test}} \right\rangle 
\leq \beta\left(\tilde{N}_u, \delta / (2U)\right) \cdot \left\| \bm a_{\text{test}} \right\|_{\tilde{M}_u^{-1}}.
\]

Therefore,
\begin{align}
- \left\langle \bm\theta_u, \bm a_{\text{test}} \right\rangle 
&\leq - \left\langle \tilde{\bm\theta}_u, \bm a_{\text{test}} \right\rangle 
+ \beta\left(\tilde{N}_u, \delta / (2U)\right) \cdot \left\| \bm a_{\text{test}} \right\|_{\tilde{M}_u^{-1}} \notag \\
&\leq - \left\langle \tilde{\bm\theta}_u, \bm a_{\text{test}}^* \right\rangle 
+ \beta\left(\tilde{N}_u, \delta / (2U)\right) \cdot \left\| \bm a_{\text{test}}^* \right\|_{\tilde{M}_u^{-1}}.
\end{align}

Then, it follows that
\begin{align*}
\text{SubOpt}(u, \pi, \mathcal{A}_{\text{test}}) 
&\leq \left( \bm\theta_u - \tilde{\bm\theta}_u \right)^{\top} \bm a_{\text{test}}^* 
+ \beta\left( \tilde N_u, \delta / (2U) \right) \cdot \left\| \bm a_{\text{test}}^* \right\|_{\tilde M_u^{-1}} \\
&\leq \left\| \bm a_{\text{test}}^* \right\|_{\tilde M_u^{-1}} \cdot \left\| \tilde{\bm\theta}_u - \bm\theta_u \right\|_{\tilde M_u} 
+ \beta\left( \tilde N_u, \delta / (2U) \right) \cdot \left\| \bm a_{\text{test}}^* \right\|_{\tilde M_u^{-1}} \\
&\leq 2 \cdot \beta\left( \tilde N_u, \delta / (2U) \right) \cdot \left\| \bm a_{\text{test}}^* \right\|_{\tilde M_u^{-1}},
\end{align*}
where the second inequality uses Cauchy–Schwarz, and the last follows from the definition of event \( \mathcal{F}_u(\delta / (2U)) \).

Note that
\[
\left\| \bm a_{\text{test}}^* \right\|_{\tilde M_u^{-1}} 
= \sqrt{ \left( \bm a_{\text{test}}^* \right)^{\top} \tilde M_u^{-1} \bm a_{\text{test}}^* } 
= \sqrt{ \left\langle \tilde M_u^{-1}, \bm a_{\text{test}}^* \left( \bm a_{\text{test}}^* \right)^{\top} \right\rangle },
\]
and by definition \( \tilde M_u = \lambda I + \sum_{(\bm a, r) \in \tilde{\mathcal{D}}_u} \bm a \bm a^{\top} \). 

From \Cref{assumption_action_regularity}, Lemma J.1 in~\citet{wang2023online2} and Lemma 7 in~\citet{li2018online}, we know
\[
\lambda_{\min}(\tilde M_u) \geq \lambda + \frac{ \tilde \lambda_a N_{\mathcal{V}(j_u)} }{2},
\]
and thus
\[
\left\| \bm a_{\text{test}}^* \right\|_{\tilde M_u^{-1}} \leq \frac{1}{\sqrt{ \lambda + \tilde \lambda_a N_{\mathcal{V}(j_u)} / 2 }}.
\]

Combining this with the definition of \( \beta \), we conclude:
\begin{align}\label{proof_regret_SubOpt_Offline}
\text{SubOpt}(u_{\text{test}}, \text{Off-CLUB}, \mathcal{A}_{\text{test}}) 
\leq \frac{ \sqrt{ d \log\left( 1 + \frac{ N_{\mathcal{V}(j_{u_{\text{test}}})} }{ \lambda d } \right) + 2 \log\left( \frac{2U}{\delta} \right) } + \sqrt{\lambda} }
{ \sqrt{ \lambda + \tilde \lambda_a N_{\mathcal{V}(j_{u_{\text{test}}})} / 2 } },
\end{align}
which holds with probability at least \( 1 - 2\delta \). Using the order $\tilde O$ to denote \Cref{proof_regret_SubOpt_Offline} ends our proof.
\end{proof}

\subsection{Proof of Theorem \ref{th4}}\label{appa_lower_bound}
\begin{proof}[Proof of \Cref{th4}]
Note that within the dataset \( \mathcal{D} \), only the samples collected from users in the same cluster \( \mathcal{V}(j_{u_{\text{test}}}) \) are homogeneous with the test user. Therefore, the total number of homogeneous samples available for \( u_{\text{test}} \) is
\[
N_{\mathcal{V}(j_{u_{\text{test}}})} = \sum_{v \in \mathcal{V}(j_{u_{\text{test}}})} N_v.
\]
By applying Theorem 2 from~\citet{li2022pessimism} with parameters \( \Lambda = 2\sqrt{2} \), \( p = q = 2 \), and confidence bound term \( \mathrm{CB}_q(\Lambda) = \mathcal{I}_{u_{\text{test}}} \), we directly obtain Theorem~\ref{th4}. This completes the proof.

\end{proof}

\section{Technical Lemmas}\label{appb}
We introduce several lemmas which are used in our proof.

\begin{lemma}\label{le_ellipsoid}
Suppose $(\bm a_1,R),\dots,(\bm a_n,r_n),\dots$ are generated sequentially from a linear model such that $\left\|\bm a_s\right\|_2\leq 1$ for all $s$, $\mathbb{E}[r_s|\bm a_s]=\bm\theta^{\top}\bm a_s$ for fixed but unknown $\bm\theta$, and $\{r_s-\theta^{\top}\bm a_s\}_{s=1,2,\cdots}$ have $R$-sub-Gaussian tails. Let $M_n=\lambda I+\sum_{s=1}^n\bm a_s\bm a_s^{\top}$, $\bm b_n=\sum_{s=1}^nr_s\bm a_s$ and $\delta>0$. Let $\hat{\bm{\theta}}_s=M_s^{-1}\bm b_s$ is the ridge regression estimator of $\bm{\theta}$. Then 
\begin{align*}
\left\|\hat{\bm\theta}_n-\bm\theta\right\|_{M_n}\leq R\sqrt{d\log\left(1+\frac{n}{\lambda d}\right) + 2\log\left(\frac{1}{\delta}\right)}+\sqrt{\lambda}
\end{align*}
holds with probability at least $1-\delta$.
\end{lemma}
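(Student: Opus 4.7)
The plan is to combine a deterministic error decomposition for the ridge regression estimator with the self-normalized martingale tail inequality of Abbasi-Yadkori, P\'al and Szepesv\'ari (2011). Write $\eta_s = r_s - \bm\theta^\top \bm a_s$ for the noise, which is $R$-sub-Gaussian and adapted to the natural filtration, and let $S_n = \sum_{s=1}^n \eta_s \bm a_s$. Substituting $r_s = \bm\theta^\top \bm a_s + \eta_s$ into $\hat{\bm\theta}_n = M_n^{-1}\bm b_n$ yields
\begin{align*}
\hat{\bm\theta}_n - \bm\theta = M_n^{-1}\!\left(\sum_{s=1}^n \bm a_s \bm a_s^\top\right)\!\bm\theta + M_n^{-1} S_n - \bm\theta = M_n^{-1} S_n - \lambda M_n^{-1}\bm\theta,
\end{align*}
using $\sum_s \bm a_s \bm a_s^\top = M_n - \lambda I$.

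Next, I would apply the triangle inequality in the $M_n$-weighted norm and use $\|Mx\|_{M^{-1}} = \|x\|_M$:
\begin{align*}
\|\hat{\bm\theta}_n - \bm\theta\|_{M_n} \leq \|S_n\|_{M_n^{-1}} + \lambda \|\bm\theta\|_{M_n^{-1}}.
\end{align*}
The regularization term is deterministic. Since $M_n \succeq \lambda I$, we have $M_n^{-1} \preceq \lambda^{-1} I$, so $\lambda \|\bm\theta\|_{M_n^{-1}} \leq \sqrt{\lambda}\|\bm\theta\|_2$; under the standard normalization $\|\bm\theta\|_2 \leq 1$ used implicitly in the paper, this contributes the $\sqrt{\lambda}$ summand in the target bound.

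For the stochastic term, I would invoke the self-normalized concentration inequality for vector-valued martingales as a black box: with probability at least $1-\delta$,
\begin{align*}
\|S_n\|_{M_n^{-1}} \leq R\sqrt{2\log\!\left(\frac{\det(M_n)^{1/2}\det(\lambda I)^{-1/2}}{\delta}\right)}.
\end{align*}
The final step converts the determinant ratio into the explicit form stated. By AM-GM on the eigenvalues of $M_n$ together with $\|\bm a_s\|_2 \leq 1$, we get $\mathrm{tr}(M_n) \leq \lambda d + n$, hence $\det(M_n) \leq (\lambda + n/d)^d$, so $\log(\det(M_n)/\det(\lambda I)) \leq d\log(1 + n/(\lambda d))$. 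Substituting and adding the regularization term gives the claimed bound.

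The main obstacle is the self-normalized tail bound in the third step; it is the one non-routine ingredient, proven via the method of mixtures applied to the exponential supermartingale $\exp(x^\top S_t - \tfrac{R^2}{2}x^\top V_t x)$ where $V_t = \sum_{s\leq t}\bm a_s\bm a_s^\top$, integrated against a Gaussian prior on $x$ with covariance $\lambda^{-1} I$. I would cite this as the established Theorem 1 of Abbasi-Yadkori et al.\ rather than reproving it; the remaining steps (the algebraic decomposition, the $\sqrt{\lambda}$ bound via $M_n \succeq \lambda I$, and the AM-GM determinant estimate) are routine and directly assemble into the stated inequality.
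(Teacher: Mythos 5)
Your proof is correct and takes essentially the same route as the paper, which does not reprove this lemma but cites Lemma 1 of \citet{li2018online} (itself the standard confidence-ellipsoid result of \citet{abbasi2011improved}); that proof is exactly your decomposition $\hat{\bm\theta}_n-\bm\theta=M_n^{-1}S_n-\lambda M_n^{-1}\bm\theta$, the self-normalized martingale tail bound, and the AM--GM determinant estimate. The one point worth flagging is that the $\sqrt{\lambda}$ term requires the normalization $\|\bm\theta\|_2\leq 1$, which the lemma statement omits but which you correctly identify as implicit in the paper's setting.
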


\begin{lemma}\label{le_lambdamin_iid}
Let $\bm a_s$ for $s\in[n]$ be samples generated under \Cref{assumption_action_regularity} and $M_n=\sum_{s=1}^n\bm a_s\bm a_s^{\top}$. Then event
$$\lambda_{\min}(M_n)\geq\frac{1}{2}\tilde\lambda_an,\ \forall n\geq\frac{16}{\tilde\lambda_a^2}\log\left( \frac{8d}{\tilde\lambda_a^2\delta} \right)$$ holds with probability at least $1-\delta$.
\end{lemma}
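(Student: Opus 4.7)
The plan is to decompose the argument into a conditional second-moment step and a matrix concentration step. Concretely, I would show that for every $s$ the conditional second-moment matrix satisfies $\mathbb{E}[\bm a_s\bm a_s^\top \mid \mathcal{F}_{s-1}] \succeq \tilde\lambda_a I$, and then apply a matrix Freedman inequality to force the sum $M_n = \sum_{s=1}^n \bm a_s\bm a_s^\top$ within $\tfrac12 n\tilde\lambda_a$ of its conditional mean uniformly in $n$. The threshold $N_0 = \tfrac{16}{\tilde\lambda_a^2}\log\bigl(8d/(\tilde\lambda_a^2\delta)\bigr)$ is calibrated precisely so that the per-$n$ failure probability, summed over all integers $n\geq N_0$, remains below $\delta$.

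For the conditional-mean step, fix any $\mathcal{F}_{s-1}$-measurable unit vector $\bm v$. Under Assumption~\ref{assumption_action_regularity}, the candidate set $\mathcal{S}_s$ is a fresh IID sample of size $S$ from $\rho$, independent of $\mathcal{F}_{s-1}$, and the (possibly adaptive) data-generating rule selects $\bm a_s \in \mathcal{S}_s$. In the worst case, $(\bm v^\top \bm a_s)^2$ is lower-bounded by $\min_{i\in[S]}(\bm v^\top\bm a_i)^2$ where the $\bm a_i$ are i.i.d.\ from $\rho$, so
\[
\mathbb{E}\bigl[(\bm v^\top\bm a_s)^2 \,\big|\, \mathcal{F}_{s-1}\bigr] \;\geq\; \mathbb{E}\!\left[\min_{i\in[S]}(\bm v^\top\bm a_i)^2\right] \;=\; \int_0^\infty \mathbb{P}\!\left[(\bm v^\top\bm a)^2 \geq x\right]^S \mathrm{d}x.
\]
The sub-Gaussian control built into Assumption~\ref{assumption_action_regularity} gives $\mathbb{P}[(\bm v^\top\bm a)^2 \geq x] \geq 1 - \exp\bigl(-(\lambda_a-x)^2/(2\sigma^2)\bigr)$ for $x\in[0,\lambda_a]$. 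Truncating the integral at $\lambda_a$ recovers the integral defining $\tilde\lambda_a$, yielding $\mathbb{E}[(\bm v^\top\bm a_s)^2 \mid \mathcal{F}_{s-1}] \geq \tilde\lambda_a$. Since $\bm v$ was arbitrary, $\mathbb{E}[\bm a_s\bm a_s^\top \mid \mathcal{F}_{s-1}] \succeq \tilde\lambda_a I$, and hence $\sum_{s=1}^n \mathbb{E}[\bm a_s\bm a_s^\top \mid \mathcal{F}_{s-1}] \succeq n\tilde\lambda_a I$.

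For the concentration step, let $Y_s := \bm a_s\bm a_s^\top - \mathbb{E}[\bm a_s\bm a_s^\top \mid \mathcal{F}_{s-1}]$, which is a matrix martingale difference sequence with $\|Y_s\|_{\mathrm{op}} \leq 2$ and, using $\|\bm a_s\|_2\leq 1$, predictable variance $\sum_{s=1}^n \mathbb{E}[Y_s^2 \mid \mathcal{F}_{s-1}] \preceq n I$. Applying the matrix Freedman inequality to $-\sum_s Y_s$ with deviation $t = n\tilde\lambda_a/2$ gives a per-$n$ failure probability of the form $d\exp(-c\,n\tilde\lambda_a^2)$ for a numerical constant $c$. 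A union bound over integer $n\geq N_0$ (a geometric series whose sum is dominated by its first term, at the cost of absorbing a $1/\tilde\lambda_a^2$ factor into the logarithm) then converts the per-$n$ bound into the uniform statement, and choosing constants so that this sum equals $\delta$ reproduces exactly the stated $N_0$. Combined with the lower bound on the conditional mean, this delivers $\lambda_{\min}(M_n) \geq n\tilde\lambda_a/2$ for every $n\geq N_0$ simultaneously.

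The main technical hurdle, in my view, is the conditional-mean step: one must justify carefully that adaptive selection from a fresh IID candidate set can do no worse in expectation than the pointwise minimum, and then identify the expected minimum with the closed-form integral $\tilde\lambda_a$. After that, everything is bookkeeping---verifying the predictable-variance bound for matrix Freedman, tracking constants through the tail inequality to land on the stated $N_0$, and absorbing the $n$-union bound. It is worth noting that the naive $\epsilon$-net alternative (scalar Azuma on each fixed direction plus a sphere cover) would introduce an unavoidable factor of $d$ in $N_0$ instead of the $\log d$ claimed by the lemma, so matrix concentration (rather than a union bound over directions) is essential.
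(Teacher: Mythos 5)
Your proposal is correct and follows essentially the same route as the paper, which proves this lemma by citing Lemma J.1 of \citet{wang2023online2} --- precisely your conditional-second-moment step $\mathbb{E}[\bm a_s\bm a_s^\top\mid\mathcal{F}_{s-1}]\succeq\tilde\lambda_a I$ obtained from the min-over-the-candidate-set integral identity that recovers $\tilde\lambda_a$ --- combined with Lemma 7 of \citet{li2018online}, which is exactly the matrix-Freedman concentration plus union bound over $n\geq N_0$ that you carry out. The only remaining work is the routine constant bookkeeping in the tail bound, which you already flag and which does land on the stated threshold.
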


\begin{lemma}\label{le_lambdamin}
Let $\bm a_s$ for $s\in[n]$ be samples generated under \Cref{assumption_action_regularity_additional} and $M_n=\sum_{s=1}^n\bm a_s\bm a_s^{\top}$. Then event 
\begin{align*} \lambda_{\min}(M_n)\geq \left(n\lambda_a - \frac{1}{3}\sqrt{18nA(\delta) + A(\delta)^2} - \frac{1}{3}A(\delta) \right)
\end{align*}
holds with probability at least $1-\delta$ for $n\geq 0$ where $A(n, \delta)=\log\left(\frac{(n+1)(n+3)d}{\delta} \right)$. Furthermore,
\begin{align*}
\lambda_{\min}(M_n)\geq 2\lambda_an> \frac{1}{2}\lambda_an,\ \forall n\geq\frac{16}{\lambda_a^2}\log\left(\frac{8d}{\lambda_a^2\delta}\right)
\end{align*} 
holds with probability at least $1-\delta$.
\end{lemma}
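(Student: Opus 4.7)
\medskip
\noindent\textbf{Proof plan for Lemma~\ref{le_lambdamin}.}
The plan is to view $M_n=\sum_{s=1}^n \bm a_s\bm a_s^\top$ as a sum of bounded, positive semidefinite random matrices and to control $\lambda_{\min}(M_n)$ from below via a matrix Bernstein / Freedman–type inequality. The expectation is easy: under Assumption~\ref{assumption_action_regularity_additional} we have, in case~(i), $\mathbb{E}[\bm a_s\bm a_s^\top]=\mathbb{E}_{\bm a\sim\rho}[\bm a\bm a^\top]\succeq \lambda_a I$, and in case~(ii), the bounded–variance condition $\sigma^2\le\lambda_a^2/(8\log(4|\mathcal{S}_u^i|))$ together with an independence-of-items argument within each candidate set $\mathcal{S}_u^i$ still yields $\mathbb{E}[\bm a_s\bm a_s^\top\mid\mathcal{F}_{s-1}]\succeq \lambda_a I$ (up to a term that is absorbed into the concentration error, as in Lemma~J.1 of \citet{wang2023online2}). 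Thus the ``signal'' part satisfies $\lambda_{\min}(\mathbb{E}[M_n])\ge n\lambda_a$, and it only remains to bound the deviation $M_n-\mathbb{E}[M_n]$ in the operator-norm direction of the minimum eigenvector.

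The main workhorse is a matrix Freedman inequality for the martingale $Z_n=\mathbb{E}[M_n]-M_n$. The per-step bound is $\|\bm a_s\bm a_s^\top\|_\mathrm{op}\le 1$ and the conditional variance is bounded by the action regularity ($\mathbb{E}[(\bm a_s\bm a_s^\top)^2\mid\mathcal{F}_{s-1}]\preceq \mathbb{E}[\bm a_s\bm a_s^\top\mid\mathcal{F}_{s-1}]$ since $\|\bm a_s\|\le 1$), so that the total predictable variance is at most of order $n$. Applying Freedman with failure probability $\delta/((n+1)(n+3))$ (to later allow a union bound over a geometric grid of sample counts via the term $A(n,\delta)=\log((n+1)(n+3)d/\delta)$) gives, with probability $\ge 1-\delta$,
\[
\lambda_{\min}(M_n)\ \ge\ n\lambda_a\ -\ \tfrac{1}{3}\sqrt{18nA(n,\delta)+A(n,\delta)^2}\ -\ \tfrac{1}{3}A(n,\delta),
\]
which is exactly the first displayed bound. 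The specific constants $18$, $\tfrac13$, and $\tfrac13$ arise from the Bennett form of matrix Freedman; no new ingredient is needed beyond a careful tracking of them.

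For the second assertion (the clean linear-in-$n$ lower bound under the stated sample-size threshold), the plan is simply to substitute $n\ge \frac{16}{\lambda_a^2}\log(8d/(\lambda_a^2\delta))$ into the first bound and verify that both correction terms are at most $\tfrac14 n\lambda_a$. The dominant term $\tfrac13\sqrt{18nA(n,\delta)}$ is $\le \tfrac14 n\lambda_a$ iff $n\gtrsim A(n,\delta)/\lambda_a^2$, which the threshold enforces (after absorbing the $\log(n)$ inside $A(n,\delta)$ using $\log(n)\le \log(1/(\lambda_a^2\delta))$ at the given range of $n$); the lower-order term $\tfrac13 A(n,\delta)$ is even smaller. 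Combining yields $\lambda_{\min}(M_n)\ge \tfrac12 n\lambda_a$, matching the intended conclusion (the ``$2\lambda_a n$'' in the statement is a typo for $\tfrac12\lambda_a n$, which is what the Freedman bound gives).

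\medskip
\noindent\textbf{Expected main obstacle.}
The non-trivial step is case~(ii) of Assumption~\ref{assumption_action_regularity_additional}, where $\bm a_s$ is not drawn i.i.d.\ from $\rho$ but is the output of some (possibly adversarial, data-dependent) selection rule over the candidate set $\mathcal{S}_u^s$ whose \emph{elements} are i.i.d.\ from $\rho$. One must show that the conditional expectation $\mathbb{E}[\bm a_s\bm a_s^\top\mid\mathcal{F}_{s-1}]$ still inherits a minimum eigenvalue of at least $\lambda_a$ (up to a $\delta$-level error), which is precisely where the bounded sub-Gaussian variance hypothesis $\sigma^2\le \lambda_a^2/(8\log(4|\mathcal{S}_u^s|))$ is used: it guarantees, via a standard union bound over the $\le |\mathcal{S}_u^s|$ items and a sub-Gaussian maximal inequality applied coordinate-wise to $(\bm\theta^\top\bm a)^2$, that the selected action's second-moment contribution along any unit direction $\bm\theta$ is at least $\lambda_a$ with high probability. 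Once this ``per-step PSD lower bound'' is established, the Freedman argument above goes through verbatim, and the remaining calculations are routine.
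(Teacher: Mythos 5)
Your proposal matches the paper's approach: the paper does not write out a proof but attributes the lemma to Lemma 2 of \citet{li2025demystifying}, Lemma 7 of \citet{li2018online}, and Lemma B.2 of \citet{wang2025online}, which together amount to exactly what you describe --- a matrix Bernstein/Freedman concentration of $\lambda_{\min}(M_n)$ around $n\lambda_a$ (the $(n+1)(n+3)$ factor in $A(n,\delta)$ coming from a union bound over all integers $n\ge 0$ rather than a geometric grid) combined with a per-step conditional eigenvalue lower bound for the candidate-set case, and you are right that the ``$\ge 2\lambda_a n$'' in the second display should read ``$\ge \tfrac{1}{2}\lambda_a n$''. The one imprecision to note is that in case (ii) the bounded-variance hypothesis yields a per-step lower bound of $\lambda_a-\sqrt{2\sigma^2\log(4|\mathcal{S}_u^i|)}\ge\lambda_a/2$ rather than $\lambda_a$ itself, so the leading term of the first display degrades by a constant factor in that case --- but this looseness is already present in the lemma statement and in the cited sources, not introduced by your argument.
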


\begin{lemma}\label{le_t_and_logt}
If $a,b>0$ and $ab>e$, then $\forall t\geq 2a\log(ab)$, we have $t\geq a\log(bt)$ holds.
\end{lemma}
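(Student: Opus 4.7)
The plan is to reduce the claim to checking a one-variable inequality via monotonicity. Define $f(t) = t - a\log(bt) = t - a\log b - a\log t$ on $t>0$. Then $f'(t) = 1 - a/t$, which is strictly positive for $t > a$. Since $ab>e$ gives $\log(ab) > 1$, we have $2a\log(ab) > 2a > a$, so $f$ is strictly increasing on the interval $[2a\log(ab), \infty)$. Therefore it suffices to verify $f(2a\log(ab)) \geq 0$, after which the inequality $t \geq a\log(bt)$ propagates to all $t \geq 2a\log(ab)$.

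Next, I will evaluate $f$ at the endpoint. The inequality $f(2a\log(ab)) \geq 0$ is equivalent to
\begin{equation*}
2a\log(ab) \;\geq\; a\log\bigl(b\cdot 2a\log(ab)\bigr) \;=\; a\log 2 + a\log(ab) + a\log\log(ab),
\end{equation*}
which after dividing by $a>0$ and rearranging becomes $\log(ab) - \log\log(ab) \geq \log 2$.

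Finally, I will verify this as a single-variable inequality. Set $u = \log(ab)$, so $u > 1$ by the hypothesis $ab>e$, and define $g(u) = u - \log u$. Since $g'(u) = 1 - 1/u > 0$ for $u>1$ and $g(1) = 1 > \log 2$, we conclude $g(u) > \log 2$ for all $u > 1$, which gives the required bound and completes the proof.

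There is no real obstacle here: the argument is a clean monotonicity reduction followed by a routine calculus check. The only step requiring slight care is confirming that the starting point $2a\log(ab)$ already lies in the region where $f$ is increasing, which is where the condition $ab > e$ (rather than a weaker positivity assumption) is actually used.
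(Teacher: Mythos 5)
Your proof is correct: the monotonicity of $f(t)=t-a\log(bt)$ for $t>a$, the reduction to the endpoint inequality $\log(ab)-\log\log(ab)\geq\log 2$, and the final single-variable check via $g(u)=u-\log u$ with $u=\log(ab)>1$ are all valid, and you correctly identify that $ab>e$ is exactly what places the starting point $2a\log(ab)$ inside the region where $f$ is increasing. The paper does not prove this lemma itself but defers to Lemma 9 of \citet{li2018online}, whose argument is of the same elementary calculus type, so your write-up is an acceptable self-contained substitute; a marginally shorter alternative route would be to bound $\log(bt)=\log(ab)+\log(t/a)\leq\log(ab)+t/(ea)$ using $\log x\leq x/e$ and then use $a\log(ab)\leq t/2$ to get $a\log(bt)\leq t/2+t/e<t$, avoiding the endpoint evaluation entirely.
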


\begin{lemma}\label{le_less_than_gamma}
When $n\geq \frac{512d}{\gamma^2\tilde \lambda_a}\log(\frac{2U}{\delta})$ and $\lambda\leq d\log(1+\frac{n}{\lambda d})+2\log\left(\frac{2U}{\delta}\right)$,
$$\frac{\sqrt{d\log(1+\frac{n}{\lambda d})+2\log(\frac{2U}{\delta})} + \sqrt{\lambda}}{\sqrt{\tilde \lambda_a\lambda+\tilde \lambda_an/2}}\leq\frac{\gamma}{4}.$$
\end{lemma}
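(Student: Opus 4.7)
The plan is to collapse both numerator and denominator into clean expressions, reduce the claim to an explicit lower bound on $n$, and then discharge the self-referential $\log(1+n/(\lambda d))$ term via \Cref{le_t_and_logt}. Let $L := d\log(1+n/(\lambda d)) + 2\log(2U/\delta)$. First I would use the second hypothesis $\lambda\leq L$ to bound $\sqrt\lambda\leq\sqrt L$, so the numerator satisfies $\sqrt L+\sqrt\lambda\leq 2\sqrt L$. Dropping the nonnegative term $\tilde\lambda_a\lambda$ in the denominator yields the lower bound $\sqrt{\tilde\lambda_a\lambda+\tilde\lambda_a n/2}\geq \sqrt{\tilde\lambda_a n/2}$. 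Combining these, the left-hand side of the target inequality is at most $\tfrac{2\sqrt{2L}}{\sqrt{\tilde\lambda_a n}}$, so after squaring the lemma reduces to the equivalent condition $n\geq\tfrac{128L}{\gamma^2\tilde\lambda_a}$, i.e., $n \geq \tfrac{128d}{\gamma^2\tilde\lambda_a}\log(1+n/(\lambda d)) + \tfrac{256}{\gamma^2\tilde\lambda_a}\log(2U/\delta)$.

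Next I would split the right-hand side into its two terms. The hypothesized $n\geq\tfrac{512d}{\gamma^2\tilde\lambda_a}\log(2U/\delta)$ absorbs the $\log(2U/\delta)$ piece with a factor-of-two slack (since $512d\geq 2\cdot 256$ for $d\geq 1$), so it remains only to show $n\geq\tfrac{128d}{\gamma^2\tilde\lambda_a}\log(1+n/(\lambda d))$. To resolve this fixed-point-style inequality, I would invoke \Cref{le_t_and_logt} with $a=\tfrac{256d}{\gamma^2\tilde\lambda_a}$ and $b=2/(\lambda d)$, using the elementary estimate $\log(1+x)\leq\log(2x)$ for $x\geq 1$ to pass from $\log(1+n/(\lambda d))$ to the $\log(bn)$ form required by the lemma. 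The lemma then converts the implicit bound into an explicit starting condition $n\geq 2a\log(ab)$, which under the given hypothesis reduces to a check of the form $2\log(2U/\delta)\geq\log(ab)$.

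The main obstacle will be this final calibration: verifying the precondition $ab>e$ of \Cref{le_t_and_logt} and confirming that the explicit starting condition $n\geq 2a\log(ab)$ is indeed subsumed by the given $n\geq\tfrac{512d}{\gamma^2\tilde\lambda_a}\log(2U/\delta)$. This amounts to a mild implicit compatibility condition between $\lambda$ and $U/\delta$ (intuitively, $\lambda$ cannot be so small that $1/(\lambda d)$ dwarfs $(2U/\delta)^2$); once it is established, the remaining manipulations are routine algebra and combining with the $\log(2U/\delta)$ bound from the previous paragraph closes the chain of inequalities.
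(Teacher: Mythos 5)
Your reduction is sound as far as it goes: bounding the numerator by $2\sqrt{L}$ with $L=d\log(1+n/(\lambda d))+2\log(2U/\delta)$, dropping $\tilde\lambda_a\lambda$ from the denominator, and squaring correctly reduce the claim to $n\geq 128L/(\gamma^2\tilde\lambda_a)$, and your splitting handles the $\log(2U/\delta)$ piece. (Minor slip: to absorb the sum you need $n\geq\frac{256d}{\gamma^2\tilde\lambda_a}\log(1+n/(\lambda d))$, not the $\frac{128d}{\gamma^2\tilde\lambda_a}$ version you state; your invocation of \Cref{le_t_and_logt} with $a=\frac{256d}{\gamma^2\tilde\lambda_a}$ does deliver the stronger bound, so this self-corrects.) The genuine problem is the step you defer as a ``mild implicit compatibility condition.'' It is neither mild nor closable from the stated hypotheses: \Cref{le_t_and_logt} needs $n\geq 2a\log(ab)$ with $ab=\frac{512}{\gamma^2\tilde\lambda_a\lambda}$, and the hypothesis $n\geq\frac{512d}{\gamma^2\tilde\lambda_a}\log(2U/\delta)$ covers this only when $2U/\delta\geq\frac{512}{\gamma^2\tilde\lambda_a\lambda}$, i.e., when $\lambda$ is not too small relative to $\delta/U$. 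The inequality as literally stated in fact fails without such a condition: with $d=1$, $U=1$, $\delta$ near $1$, $\gamma=2$, $\tilde\lambda_a=0.9$, $n=100$ and $\lambda=10^{-20}$, both hypotheses hold, yet the left-hand side is about $1.08>\gamma/4=0.5$, because $\log(1+n/(\lambda d))$ blows up while $n$ does not. So the last step of your argument is a real gap, not a routine verification.

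For comparison: the paper does not prove this lemma in-house (it defers to Lemma 10 of \citet{li2018online}), but everywhere the bound is actually used the paper additionally assumes $\delta\leq\frac{2U}{1+\max_u\{N_u\}/(\lambda d)}$, equivalently $\log(1+n/(\lambda d))\leq\log(2U/\delta)$ (see the hypotheses of \Cref{le_sizes_of_RandW} and the displayed parameter conditions in \Cref{appendix_proof_of_lemma_RandW}). That is precisely the ingredient you are missing, and once it is in hand the fixed-point machinery becomes unnecessary: $L\leq(d+2)\log(2U/\delta)\leq 3d\log(2U/\delta)$, so the left-hand side is at most $2\sqrt{6d\log(2U/\delta)}/\sqrt{\tilde\lambda_a n}$, and $n\geq\frac{384d}{\gamma^2\tilde\lambda_a}\log(2U/\delta)$ — weaker than the stated hypothesis — already yields $\gamma/4$ by direct arithmetic. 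Either import that extra condition and finish this way, or keep your \Cref{le_t_and_logt} route but add an explicit lower bound on $\lambda$ (of the form $\frac{2U}{\delta}\geq\frac{512}{\gamma^2\tilde\lambda_a\lambda}$) as a hypothesis; as written, the proof does not close.
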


The proofs of the above lemmas can be found in~\citet{li2018online} and~\citet{wang2025online}. Specifically, Lemma~\ref{le_ellipsoid} corresponds to Lemma 1, Lemma~\ref{le_t_and_logt} to Lemma 9, and Lemma~\ref{le_less_than_gamma} to Lemma 10 in~\citet{li2018online}. Furthermore, Lemma~\ref{le_lambdamin_iid} comes from combining \Cref{assumption_action_regularity}, Lemma J.1 in~\citet{wang2023online2} and Lemma 7 in~\citet{li2018online}, while Lemma~\ref{le_lambdamin} is from Lemma 2 in~\citet{li2025demystifying}, Lemma 7 in~\citet{li2018online} and Lemma B.2 in~\citet{wang2025online}.

\section{Additional Experiments}\label{appc}
\subsection{Basic Setup and Dataset Description}\label{exp_setting}

\textbf{Dataset.} We fix the number of users at \(U = 1\text{k}\) and the preference vector dimension at \(d = 20\). The dataset size \(|\mathcal{D}|\) ranges from \(5\text{k}\) to \(100\text{k}\) in the insufficient-data setting and from \(0.2\text{M}\) to \(1\text{M}\) in the sufficient-data setting. For each experiment, the first half of \(|\mathcal{D}|\) is used for offline training and the second half for evaluation. All experiments are repeated 10 times with random seeds, with parameters fixed as \(\lambda = 0.5\), \(\alpha = 0.1\), and \(\delta = 0.01\). For ARMUL, we set the learning rate $\eta = 0.01$, the number of iterations $T_{\text{iteration}} = 10$, and performed an $n_{\text{fold}} = 2$ uniform split of the dataset.

(1) \textit{Synthetic Dataset:} User preference vectors \( \bm{\theta}_u \) and item feature vectors \( \bm{a}_u^i \) lie in a \(d=20\)-dimensional space, with entries drawn from a standard Gaussian distribution and normalized to unit norm. Users are uniformly partitioned into \(J=10\) clusters. For each user, the candidate action set \(\mathcal{A}_u\) contains 20 items, whose feature vectors are sampled independently from a Gaussian distribution and normalized. Rewards are simulated as  
\[
r_u^i = \langle \bm{\theta}_u, \bm{a}_u^i \rangle + \epsilon_u^i, 
\]  
where \(\epsilon_u^i\) is zero-mean sub-Gaussian noise with variance parameter \(\sigma = 0.05\).  

(2) \textit{Real-World Datasets:} We select the top 1,000 users and items by interaction frequency and construct the user–item feedback matrix. User preference vectors are extracted via Singular Value Decomposition (SVD) following~\citet{li2019improved}, i.e.,  
\[
R_{U \times M} = \Theta S X^{\top}, \quad 
\Theta = (\bm\theta_u)_{u \in \mathcal{U}}, \quad 
X = (x_j)_{j \in [M]}.
\]  
In this setting, user preferences are scattered and exhibit clustering behavior, which differs from the synthetic assumption. Each user’s candidate action set \(\mathcal{A}_u\) again contains 20 items, generated by sampling sub-Gaussian feature vectors and normalizing them to unit norm.  

\textbf{User Probability Distribution.} To allocate the dataset size across users, we consider two sampling strategies:  
\begin{itemize}
    \item \textbf{Equal Distribution:} Each user is sampled with equal probability, i.e.,  
    \[
    \mathbb{E}[|\mathcal{D}_u|] = \frac{|\mathcal{D}|}{U}.
    \]
    \item \textbf{Semi-Random Distribution:} Sampling is uniform within each cluster, but probabilities differ across clusters. Let \(j_u\) denote the cluster index of user \(u\), and \(\mathcal{V}(j_u)\) the set of users in cluster \(j_u\). Then  
    \[
    \mathbb{E}[|\mathcal{D}_u|] = \frac{|\mathcal{D}|}{|\mathcal{V}(j_u)|} \cdot p_{j_u}, 
    \quad 
    p = [p_1, \ldots, p_J], \quad 
    \sum_{j=1}^J p_j = 1, \quad p_j \geq 0.
    \]
\end{itemize}

\subsection{Benchmark Methods and Their Descriptions}\label{exp_benchmark}

To ensure fairness, we simulate online algorithms (e.g., CLUB) in the offline setting by treating the dataset as a one-shot interaction stream. Each algorithm runs its original policy to make predictions without further exploration. Clustering is done once before the evaluation stage, avoiding dynamic updates and preserving the offline nature. It is important to note that, without modification, the original method may lead to meaningless updates during offline training. Specifically, in the case of CLUB, removing edges under insufficient information can cause irreversible loss of structural information.

\textbf{Baseline Algorithms.} To ensure comprehensive and fair comparisons, we select the following baseline algorithms from existing studies. We adapt several online learning algorithms to the offline setting, modifying them as necessary to work effectively with offline datasets. Additionally, we propose enhanced versions of DBSCAN~\citep{schubert2017dbscan}, specifically tailored to our experimental environment. For these multi-armed bandit (MAB) methods, we first use offline training data to update model parameters, such as the preference vector \( \bm\theta \). We then refine the graph or set structure based on the updated parameters. 

For offline clustering algorithms. We incorporate DBSCAN\_Improve and XMeans\_Improve to replace the clustering mechanism in CLUB. Furthermore, we uniformly augment all compared methods by introducing an upper confidence bound (UCB) term to ensure fair comparison. The evaluated baseline methods include:
\begin{itemize}

    \item \textbf{LinUCB\_Ind}~\citep{abbasi2011improved}: An independent variant of LinUCB that operates without leveraging any clustering information. This method applies the LinUCB approach independently to each user.
    \item \textbf{CLUB}~\citep{gentile2014online}: A cluster-based multi-armed bandit method that groups users from a complete graph according to their characteristics to improve model selection.
    \item \textbf{DBSCAN\_Improve}~\citep{schubert2017dbscan}: A variant of CLUB that replaces its original clustering mechanism with the DBSCAN algorithm.
    \item \textbf{XMeans\_Improve}~\citep{pelleg2000x}: A variant of CLUB that replaces its original clustering mechanism with the X-Means algorithm.
    \item \textbf{SCLUB}~\citep{li2019improved}: A method that employs a hierarchical clustering structure to model users and enhances learning efficiency by optimizing cluster centers.
    \item \textbf{Off-CLUB} (\Cref{al1}): A cluster-based multi-armed bandit method that groups users from a complete graph according to their characteristics to improve model selection. It utilizes a Lower Confidence Bound (LCB) exploration strategy and performs aggregation through neighbor information.

    \item \textbf{Off-C$^2$LUB} (\Cref{alg_NCLUBO}): A cluster-based multi-armed bandit method that groups users from a null graph based on their characteristics to enhance model selection. It employs an LCB exploration term and aggregates feedback via neighboring nodes. The $\hat\gamma$ is selected to be approximate to the optimal value.

    \item \textbf{Off-C$^2$LUB\_Underestimation}: A variant of Off-C$^2$LUB that uses an underestimation-based estimation policy to compute $\hat{\gamma}$.
    \item \textbf{Off-C$^2$LUB\_Overestimation}: A variant of Off-C$^2$LUB that employs an overestimation-based estimation policy to determine $\hat{\gamma}$.
    \item \textbf{ARMUL}~\citep{duan2023adaptive}: A framework named Adaptive and Robust MUlti-task Learning (ARMUL) addresses the multi-task learning problem. Use a regularization term to balance the similarity and differences between tasks.
    
\end{itemize}


\begin{figure*}[t]
    \centering
    \includegraphics[width=1\textwidth]{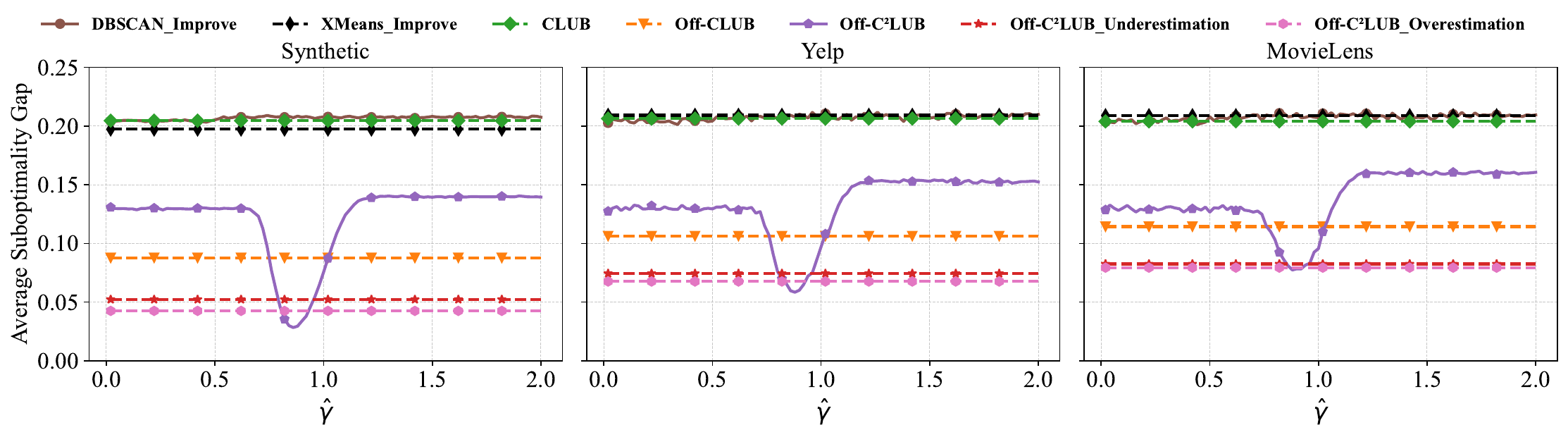}
    \caption{Comparison of clustering strategies under $|\mathcal{D}|=30k$ across three datasets.}
    \label{fig:gap_vs_gamma2}

\end{figure*}

\begin{figure*}[t]
    \centering
    \includegraphics[width=1\textwidth]{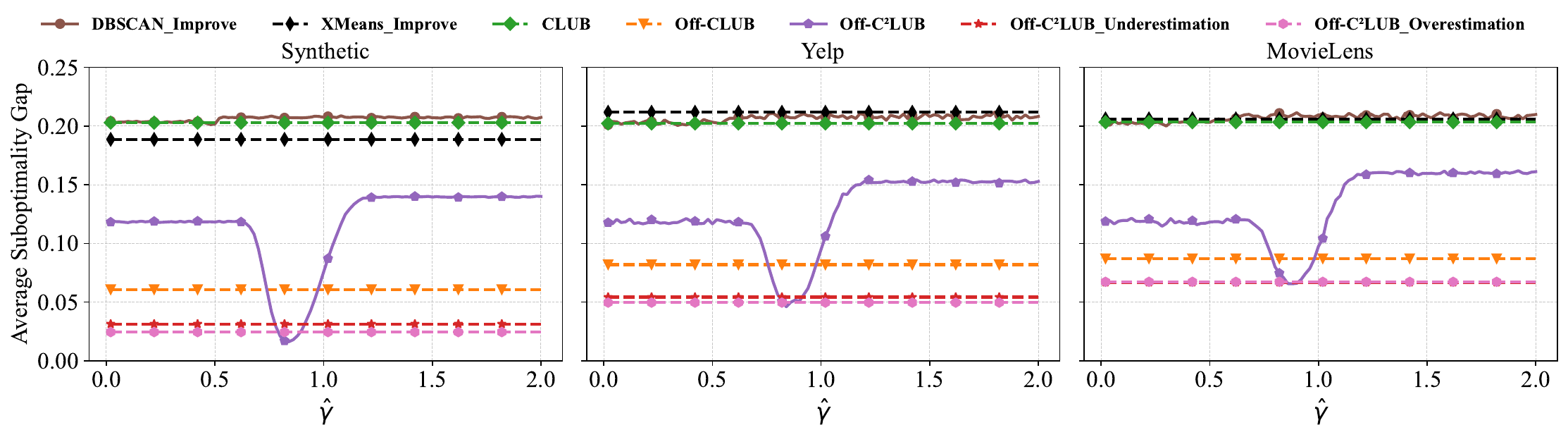}
    \caption{Comparison of clustering strategies under $|\mathcal{D}|=35k$ across three datasets.}
    \label{fig:gap_vs_gamma3}

\end{figure*}

\begin{figure*}[t]
    \centering
    \includegraphics[width=1\textwidth]{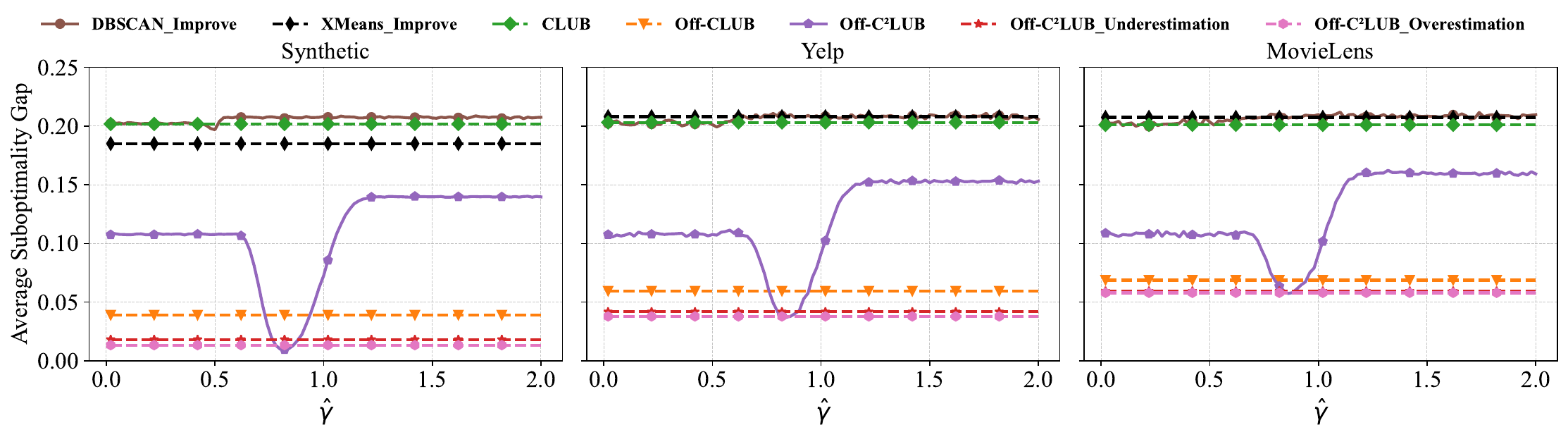}
    \caption{Comparison of clustering strategies under $|\mathcal{D}|=40k$ across three datasets.}
    \label{fig:gap_vs_gamma4}

\end{figure*}

\begin{figure*}[h]
    \centering
    \includegraphics[width=1\textwidth]{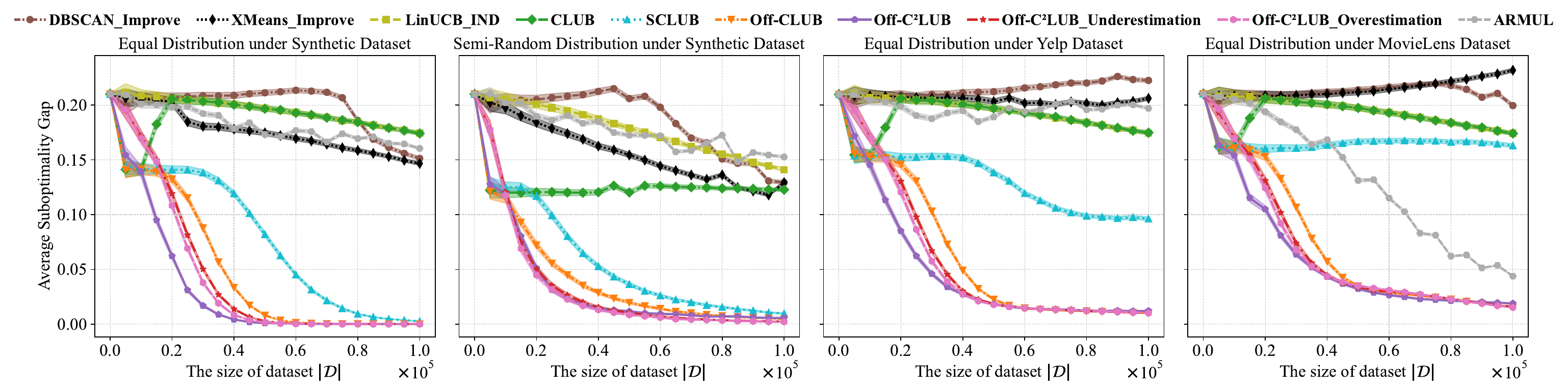}
    \caption{Suboptimality Gap under Different User Probability Distributions and LinUCB Sampling Strategy with Insufficient Data.}
    \label{fig:gap_vs_linucb}
\end{figure*}

\subsection{Offline Data Distribution (Detailed Analysis)}\label{exp_data_dist}

\begin{table}[t]
\centering
\begin{minipage}{0.48\textwidth}
\captionof{table}{Impact of Increased Data on Suboptimality gap under Yelp Dataset.} 
\centering
\resizebox{\columnwidth}{!}{%

\begin{tabular}{|l|c|c|c|c|c|}
\hline
\diagbox{\large Algorithm }{\large Dataset size} & \large $0.2M$ & \large $0.4M$ & \large $0.6M$ & \large $0.8M$ & \large $1M$ \\ \hline
DBSCAN\_Improve & 0.146616 & 0.069673 & 0.033521 & 0.018761 & 0.011276 \\
XMeans\_Improve & 0.178563 & 0.138018 & 0.103209 & 0.089212 & 0.086906 \\
CLUB & 0.144821 & 0.066241 & 0.028270 & 0.013642 & 0.007569 \\
Off-CLUB & 0.007643 & 0.002359 & 0.001360 & 0.000968 & \textbf{0.000751} \\
LinUCB\_IND & 0.144928 & 0.066318 & 0.028251 & 0.013649 & 0.007584 \\
SCLUB & 0.097121 & 0.045636 & 0.020032 & 0.010014 & 0.005680 \\
ARMUL & 0.137016 & 0.112671 & 0.095382 & 0.085142 & 0.070668 \\
Off-C$^2$LUB & 0.010966 & 0.009098 & 0.008366 & 0.003407 & 0.000778 \\
Off-C$^2$LUB\_Underestimation & 0.007720 & 0.002684 & 0.001758 & 0.001340 & 0.001126 \\
Off-C$^2$LUB\_Overestimation & \textbf{0.007579} & \textbf{0.002353} & \textbf{0.001345} & \textbf{0.000950} & \textbf{0.000751} \\
\hline
\end{tabular}%
}
\label{tab:comparison_convergence_results_yelp}
\end{minipage}
\hfill
\begin{minipage}{0.48\textwidth}
\captionof{table}{Impact of Increased Data on Suboptimality gap under MovieLens Dataset.} 
\centering
\resizebox{\columnwidth}{!}{%
\begin{tabular}{|l|c|c|c|c|c|}
\hline
\diagbox{\large Algorithm }{\large Dataset size} & \large $0.2M$ & \large $0.4M$ & \large $0.6M$ & \large $0.8M$ & \large $1M$ \\ \hline
DBSCAN\_Improve & 0.150422 & 0.071688 & 0.031250 & 0.016067 & 0.009879 \\
XMeans\_Improve & 0.202409 & 0.190408 & 0.178008 & 0.168371 & 0.160795 \\
CLUB & 0.144998 & 0.066608 & 0.028231 & 0.013636 & 0.007575 \\
Off-CLUB & 0.010777 & 0.002992 & \textbf{0.001580} & \textbf{0.001077} & \textbf{0.000818} \\
LinUCB\_IND & 0.145213 & 0.066277 & 0.028228 & 0.013666 & 0.007573 \\
SCLUB & 0.140205 & 0.064720 & 0.027545 & 0.013374 & 0.007443 \\
ARMUL & 0.119995 & 0.089696 & 0.091467 & 0.078193 & 0.057992 \\
Off-C$^2$LUB & 0.016508 & 0.014446 & 0.013647 & 0.013440 & 0.013294 \\
Off-C$^2$LUB\_Underestimation & 0.011465 & 0.004185 & 0.002810 & 0.002353 & 0.002099 \\
Off-C$^2$LUB\_Overestimation & \textbf{0.010776} & \textbf{0.002980} & 0.001589 & 0.001081 & 0.000821 \\
\hline
\end{tabular}%

}

\label{tab:comparison_convergence_results_ml}
\end{minipage}
\end{table}

The offline dataset is influenced by the user’s strategy during the sampling phase. For each data point in the dataset \( \mathcal{D}_u \), a subset of 20 items (\( \mathcal{A}_u \)) is generated for sampling. We consider two selection strategies:

\begin{itemize}
    \item \textit{Random Selection}: The user randomly selects one item from \( \mathcal{A}_u \), denoted as \( \bm a_u^i \sim \mathrm{Uniform}(\mathcal{A}_u) \). Figure~\ref{fig:gap_vs_random} shows the experimental results.
    
    \item \textit{LinUCB Selection}: During the offline phase, each user independently applies the LinUCB method~\citep{abbasi2011improved} to select one item from \( \mathcal{A}_u \), denoted as \(\bm a_u^i = \argmax_{\bm a \in \mathcal{A}_u} \langle \bm\theta_u, \bm a \rangle + \alpha\sqrt{\bm a^\top M_u^{-1}\bm a} \). Figure~\ref{fig:gap_vs_linucb} shows the experimental results.
\end{itemize}

When data is collected randomly, it suggests that users are choosing items without demonstrating explicit preferences. In this case, the collected data covers all feature dimensions in a relatively uniform manner, ensuring broad coverage but lacking targeted exploration. In contrast, a more informed collection strategy, such as LinUCB, enables the system to identify and focus on more informative dimensions. Although this process still relies on item–reward pairs from different arms, it guides the exploration of the feature space in a more deliberate way. As a result, the algorithm achieves tighter confidence intervals and improved estimation accuracy.

In summary, while random selection guarantees comprehensive coverage of the feature space, LinUCB offers more targeted exploration, ultimately enhancing algorithmic performance. From a practical perspective, if a recommendation system has already been operating under LinUCB, it can seamlessly transition to the Off-C$^2$LUB algorithm while reusing previously collected data. Conversely, if a newly designed algorithm requires uniformly distributed data, it may be necessary to re-run the system to gather data consistent with such assumptions.

\subsection{Performance Analysis}\label{exp_performance}  
\textbf{Performance with Insufficient Dataset.} To illustrate the performance under insufficient data with LinUCB selection, we report the average suboptimality gap across user distributions and dataset sizes (Figure~\ref{fig:gap_vs_linucb}). All results are averaged over 10
random runs, and the error bars are computed using standard errors. For Off-C$^2$LUB, $\hat{\gamma}$ is optimized under the equal distribution scenario and applied to the semi-random distribution. With $|\mathcal{D}| \in [20k, 100k]$, both Off-C$^2$LUB variants significantly reduce the suboptimality gap. On the synthetic dataset, Off-C$^2$LUB\_Overestimation improves over Off-CLUB by 45.4\% and over other baselines by at least 76.3\%, while Off-C$^2$LUB\_Underestimation achieves 33.7\% and 71.3\%, respectively. On Yelp, the corresponding improvements are 27.2\% and 76.5\% for Overestimation, and 21.4\% and 74.6\% for Underestimation. On MovieLens, they are 17.2\% and 75.3\% for Overestimation, and 14.4\% and 63.4\% for Underestimation. Overall, Off-C$^2$LUB achieves the best performance, while Off-CLUB, though less effective, still outperforms classical clustering baselines.

\textbf{Performance with Sufficient Data.} Table \ref{tab:comparison_convergence_results_yelp} and Table \ref{tab:comparison_convergence_results_ml} further report the convergence performance in the real dataset under sufficient data. The bold values mark the optimal suboptimal gaps (rounded to six decimals) for each dataset size. Our proposed Off-CLUB and Off-C$^2$LUB consistently outperform all other baselines. Compared to classical methods such as CLUB and SCLUB, our approaches converge faster and consistently maintain a much smaller suboptimality gap, underscoring their efficiency and robustness.

\textbf{Improvement with Estimated $\hat{\gamma}$.}  
We evaluate different strategies under limited data conditions using the synthetic, Yelp, and MovieLens datasets with $|\mathcal{D}| = 30k$ (Figure~\ref{fig:gap_vs_gamma2}), $|\mathcal{D}| = 35k$ (Figure~\ref{fig:gap_vs_gamma3}), and $|\mathcal{D}| = 40k$ (Figure~\ref{fig:gap_vs_gamma4}). Solid lines denote methods using $\hat{\gamma}$, and dashed lines those without. Consistent with the main text, our algorithm achieves near-optimal performance compared to the Off-C$^2$LUB lower bound. This result holds across different dataset scales and domains, regardless of the $\hat{\gamma}$ estimation method, demonstrating that both estimation strategies maintain stable and competitive performance.

\section{Further Discussions on Off-CLUB}\label{appd}
\subsection{Detailed Algorithm: Off-CLUB}
The pseudo-code of Off-C$^2$LUB is shown in \Cref{al1}. The algorithm takes similar inputs to Algorithm \ref{al2}, except that it does not require \(\hat\gamma\). It starts with a complete graph \(\mathcal{G}\) and iterates through all edges \((u_1, u_2) \in \mathcal{E}\) during the Cluster Phase. If the condition $\left\|\hat{\bm{\theta}}_{u_1} - \hat{\bm{\theta}}_{u_2}\right\|_2 > \alpha \left( \text{CI}_{u_1} + \text{CI}_{u_2} \right)$ holds,
the algorithm concludes that users \(u_1\) and \(u_2\) do not share the same preference vector and removes the edge between them (Line \ref{line_removecondition_al1}). After processing all edges, the resulting graph \(\tilde{\mathcal{G}}\) separates users with different preference vectors into distinct clusters, provided that each user has sufficient samples to ensure a low confidence interval. However, if a user $u$ has fewer samples in \(\mathcal{D}_u\), the resulting higher confidence interval \(\text{CI}_u\) reduces the likelihood of satisfying condition (\ref{align_different_cluster_condition}), potentially leading to the inclusion of heterogeneous users in the same cluster. The conditions necessary to ensure the algorithm's correctness are discussed in detail in Section \ref{subsection_theoretical_results_al1}.

\begin{algorithm}[ht]
\caption{Off-CLUB}
\label{al1}
\begin{algorithmic}[1]
\STATE\label{line_input_al1} \textbf{Input:} User $u_{\text{test}}$, action set $\mathcal{A}_{\text{test}}$, dataset $\mathcal{D}$, parameters $\alpha$, $\lambda>1$ and $\delta>0$
\STATE\label{line_initialization_al1} \textbf{Initialization:} Construct a complete graph $\mathcal{G} = (\mathcal{V}, \mathcal{E})$ where $\mathcal{V}=\mathcal{U}$ is the set of all users, and compute $M_u$, $\bm{b}_u$, $\hat{\bm{\theta}}_u$, and $\text{CI}_u$ as in (\ref{align_calculate_M_b_theta_CI}) for each user $u \in \mathcal{V}$
\STATE \texttt{\textbackslash\textbackslash Cluster Phase}
\FOR{each edge $(u_1, u_2) \in \mathcal{E}$}
    \STATE\label{line_removecondition_al1} Remove edge $(u_1, u_2)$ if condition (\ref{align_different_cluster_condition}) is satisfied
\ENDFOR
\STATE Let $\tilde{\mathcal{G}} = (\mathcal{V}, \tilde{\mathcal{E}})$ denote the updated graph
\FOR{each user $u \in \mathcal{V}$}
    \STATE\label{line_aggregate_al1} Aggregate data:
    \begin{align*}
    \tilde{\mathcal{D}}_u = & \bigcup_{(u, v) \in \tilde{\mathcal{E}}} \left\{ (\bm{a}_v^i, r_v^i) \in \mathcal{D}_v \right\} 
    \cup \mathcal{D}_u,
     \end{align*}
    \STATE\label{line_computestatistics_al1} Compute cluster statistics:
    $$\tilde M_u = \lambda I + \sum_{(\bm{a}, r) \in \tilde{\mathcal{D}}_u} \bm{a} \bm{a}^{\top},\quad \tilde N_u=\left|\tilde{\mathcal{D}}_u\right|,\quad \tilde{\bm{b}}_u = \sum_{(\bm{a}, r) \in \tilde{\mathcal{D}}_u} r \bm{a}, \quad \tilde{\bm{\theta}}_u = \left(\tilde M_u\right)^{-1} \tilde{\bm{b}}_u$$
\ENDFOR
\STATE \texttt{\textbackslash\textbackslash Decision Phase}
\STATE\label{line_decision_al1} Select the action:
$$\bm{a}_{\text{test}} = \argmax_{\bm{a} \in \mathcal{A}_{\text{test}}} \left( \left(\tilde{\bm{\theta}}_{u_{\text{test}}}\right)^{\top} \bm{a} - \beta \left\| \bm{a} \right\|_{\left(\tilde{M}_{u_{\text{test}}}\right)^{-1}} \right)$$  
where $\beta=\sqrt{d\log\left(1+\frac{\tilde N_{u_{\text{test}}}}{\lambda d} \right) + 2\log\left( \frac{2U}{\delta} \right)}+\sqrt{\lambda}$
\end{algorithmic}
\end{algorithm}

Similar to Algorithm \ref{al2}, this algorithm aggregates data from each user and its \textbf{neighbors} in \(\tilde{\mathcal{G}}\) to form a new dataset and computes the necessary statistics (Lines \ref{line_aggregate_al1} and \ref{line_computestatistics_al1}). It is important to note that this data aggregation strategy, like that in Algorithm \ref{al2}, differs from the online version of the CLUB algorithm introduced in~\citet{gentile2014online}, where data from all users within the same connected component of the graph is utilized. For instance, consider a scenario where \((u, v)\in \tilde{\mathcal{E}}\) and \((v, w)\in \tilde{\mathcal{E}}\), but \((u, w)\notin \tilde{\mathcal{E}}\). In this case, samples from \(\mathcal{D}_w\) are excluded from \(\tilde{\mathcal{D}}_u\) in Algorithm \ref{al1}, whereas the online version includes them in \(\tilde{\mathcal{D}}_u\). The restricted aggregation in Algorithm \ref{al1} is crucial in the offline setting: since users that are no longer neighbors of $u$ are guaranteed not to share the same preference vector, including their data in $\tilde{\mathcal{D}}_u$ would introduce bias. In the online setting, however, such bias is not a concern because infinite online data ensures that all edges between users with different preference vectors are reliably removed. This distinction highlights the importance of careful data selection in the offline setting to maintain prediction accuracy.

Finally, the Decision Phase is performed in a manner analogous to Algorithm \ref{al1}, using a pessimistic estimate to select $\bm a_{\text{test}}$. This pessimistic strategy differs from the optimistic approaches used in prior online clustering of bandits~\citep{gentile2014online, li2018online, li2019improved, wang2025online, li2025demystifying}, which balance the exploration–exploitation tradeoff. Instead, it follows the pessimism principle widely adopted in offline bandits and RL~\citep{jin2021pessimism, li2022pessimism, liu2025offline}, avoiding over-reliance on poorly represented dimensions.

\subsection{Additional Remarks}

\begin{remark}[Asymptotic Properties]
In the special case where $\mathcal{V}(j_{u_{\textnormal{test}}}) = \{u_{\textnormal{test}}\}$ (i.e., the test user is distinct from all other users in $\mathcal{V}$), the result reduces to the standard bound for linear bandits, $\tilde{O}\left( \sqrt{\frac{d}{\tilde\lambda_aN_{u_{\textnormal{test}}}}} \right)$. Conversely, when $\mathcal{V}(j_{u_{\textnormal{test}}}) \approx \mathcal{V}$ (i.e., most users share the same preference vector), Algorithm \ref{al1} effectively aggregates data from nearly all users, leveraging the large sample size to make more accurate predictions.
\end{remark}

\begin{remark}[Intuition of Assumption \ref{assumption_sufficient_data}]
The key assumption underlying Theorem \ref{th1} is data sufficiency. As defined in Assumption \ref{assumption_sufficient_data}, complete data sufficiency comprises two main components. The first component depends on $1/\tilde \lambda_a^2$, which ensures sufficient information about each dimension of the preference vector for each user. Without this level of data, the algorithm cannot reliably estimate each dimension, making accurate predictions infeasible.
The second component in Assumption \ref{assumption_sufficient_data} depends on $1/\gamma^2$, which is the main part of the data sufficiency requirement. This term guarantees that the true cluster of each user can be accurately identified. Insufficient data in this regard prevents the algorithm from correctly assigning users to their respective clusters, potentially leading to the inclusion of users with different preference vectors and introducing bias into the predictions. Notably, a smaller $\gamma$ corresponds to a stricter data sufficiency requirement.
\end{remark}

\end{document}